\theoremstyle{plain}
\newtheorem{theorem}{Theorem}[section]
\newtheorem{lemma}[theorem]{Lemma}
\newtheorem{corollary}[theorem]{Corollary}
\theoremstyle{definition}
\newtheorem{definition}[theorem]{Definition}
\newtheorem{assumption}[theorem]{Assumption}
\theoremstyle{remark}
\newtheorem{remark}[theorem]{Remark}
\title{Convergence Rates of Constrained Expected Improvement}
\author{
Haowei Wang\\
National University of Singapore\\
 Singapore \\
   \texttt{haowei\_wang@u.nus.edu}
   \And
   Jingyi Wang \\
  Lawrence Livermore National Laboratory\\
  Livermore, CA 94550 \\
  \texttt{wang125@llnl.gov} \\
  \And
   Nai-Yuan Chiang \\
  Lawrence Livermore National Laboratory\\
  Livermore, CA 94550 \\
  \texttt{chiang7@llnl.gov} \\
\And
  Zhongxiang Dai\\
  The Chinese University of Hong Kong, Shenzhen\\
China\\
  \texttt{daizhongxiang@cuhk.edu.cn}
  \And
  Szu Hui Ng\\
National University of Singapore\\
Singapore \\
   \texttt{isensh@nus.edu.sg}
   \And
   Cosmin G. Petra \\
  Lawrence Livermore National Laboratory\\
  Livermore, CA 94550 \\
   \texttt{petra1@llnl.gov} \\
}
\begin{document}

\maketitle





\vskip 0.3in




\newcommand{\Rbb}{\ensuremath{\mathbb{R} }}
\newcommand{\Nbb}{\ensuremath{\mathbb{N} }}
\newcommand{\Ebb}{\ensuremath{\mathbb{E} }}
\newcommand{\Vbb}{\ensuremath{\mathbb{V} }}
\newcommand{\Cbb}{\ensuremath{\mathbb{C} }}
\newcommand{\Pbb}{\ensuremath{\mathbb{P} }}
\newcommand{\norm}[1]{\left\lVert {#1} \right\rVert}
\newcommand\xbm{{\ensuremath{\bm{x}}}}
\newcommand\kbm{{\ensuremath{\bm{k}}}}
\newcommand\Kbm{{\ensuremath{\bm{K}}}}
\newcommand\fbm{{\ensuremath{\bm{f}}}}
\newcommand\cbm{{\ensuremath{\bm{c}}}}
\newcommand\gbm{{\ensuremath{\bm{g}}}}
\newcommand\ybm{{\ensuremath{\bm{y}}}}
\newcommand\mbm{{\ensuremath{\bm{m}}}}
\newcommand\Ibm{{\ensuremath{\bm{I}}}}
\newcommand{\frank}[1]{\textcolor{blue}{[#1]}}
\newcommand{\cosmin}[1]{\textcolor{red}{[#1]}}

\begin{abstract}
  Constrained Bayesian optimization (CBO) methods have seen significant success in black-box optimization with constraints.  One of the most commonly used CBO methods is the constrained expected improvement (CEI) algorithm. CEI is a natural extension of expected improvement (EI) when constraints are incorporated. However, the theoretical convergence rate of CEI has not been established. 
  In this work, we study the convergence rate of CEI by analyzing its simple regret upper bound.
  First, we show that when the objective function $f$ and constraint function $c$ are assumed to each lie in a  reproducing kernel Hilbert space (RKHS), 
  CEI achieves the convergence rates of $\mathcal{O} \left(t^{-\frac{1}{2}}\log^{\frac{d+1}{2}}(t) \right) \ \text{and }\ \mathcal{O}\left(t^{\frac{-\nu}{2\nu+d}} \log^{\frac{\nu}{2\nu+d}}(t)\right)$ for the commonly used squared exponential and Mat\'{e}rn kernels ($\nu>\frac{1}{2}$), respectively.
  Second, we show that when $f$ is assumed to be sampled from Gaussian processes (GPs), CEI achieves similar convergence rates with a high probability.  
  Numerical experiments are performed to validate the theoretical analysis.  
\end{abstract}

\section{Introduction}\label{se:introduction}

Bayesian optimization (BO) is an efficient method for optimizing expensive black-box functions without derivatives. It leverages probabilistic surrogate models, most commonly Gaussian processes (GPs), to balance exploration and exploitation in the search for optimal solutions~\citep{frazier2018}. BO has found widespread success in diverse fields such as structural design~\citep{mathern2021}, machine learning hyperparameter tuning~\citep{wu2019hyperparameter}, additive manufacturing process design~\citep{wang2025finite}, fusion design~\citep{wang2024multifidelity}, etc.


While traditional BO is typically applied to unconstrained settings, many real-world problems involve black-box constraints that must be satisfied. This has motivated growing interest in constrained Bayesian optimization (CBO), where surrogate models are also constructed for constraint functions~\citep{bernardo2011} that are complex and expensive to evaluate, making CBO especially valuable in applications like engineering design~\citep{SONG2024109613} and automated machine learning ~\citep{Branke}.  One of the very key difference between unconstrained and constrained optimization is that the feasible region for constrained optimization problem
consists of the search space where all constraints must be satisfied. A general form of the constrained BO problem is: 
\begin{equation} \label{eqn:opt-prob} \centering \begin{aligned} &\underset{\xbm\in C}{\text{minimize}} & & f(\xbm), \ &\text{subject to} & & c(\xbm) \leq 0, \end{aligned} \end{equation} 
where $f:\Rbb^d\to \Rbb$ is the objective function, and $c:\Rbb^d\to\Rbb^m$ are the constraint functions. Both are defined on a compact input space $C \subset \Rbb^d$. The objective and the constraint functions are both expensive black-box functions, that can only be evaluated through expensive physical or computer experiments. 
Throughout this paper, we consider the noise-free setting for both the objective and the constraints, \textit{i.e.},  the function evaluations are deterministic and the true function values can be observed (see Remark~\ref{remark:noisy} for discussion on the noisy case).  In addition, a single constraint is considered, \textit{i.e.}, $m=1$, for simplicity of presentation. We note that our analysis can be easily extended to multiple constraints (see Remark~\ref{remark:multiconstraint} for details).

Broadly, CBO methods can be categorized into implicit and explicit approaches \citep{CBOreview}. Implicit methods modify standard acquisition functions to incorporate constraints via merit functions or feasibility weights. Explicit methods estimate the feasible region directly and restrict search to this region. Among these, the constrained expected improvement (CEI)~\citep{schonlau1998global,gelbart2014bayesian,gardner2014} stands out as one of the most basic and widely adopted methods. 
CEI is a natural extension of the well-known expected improvement (EI) function~\citep{jones1998efficient}, where the acquisition function is computed as the product of EI and the probability of feasibility. Thanks to this simple and interpretable formulation, CEI has been successfully applied across domains, and it remains one of the default choices in many constrained BO software packages ~\citep{balandat2020botorch}.

Despite its empirical popularity, the theoretical understanding of CEI lags behind. In contrast, unconstrained EI has been more extensively studied. Under a frequentist assumption where the objective  $f$ lies in a reproducing kernel Hilbert space (RKHS), \citet{bull2011convergence} established the convergence rate of EI by deriving the simple regret upper bound. Other works explored the density of sampled sequences~\citep{vazquez2010convergence} or connections between EI and optimal computing budget allocation~\citep{ryzhov2016convergence}. However, convergence rates (\textit{i.e.}, simple regret upper bound) for CEI have not been rigorously established—neither under frequentist nor under Bayesian settings. Here, Bayesian setting means the objective $f$ is a function sampled from a GP.

Introducing constraints into EI significantly complicates the theoretical analysis. Unlike in the unconstrained case, the algorithm may need to explore infeasible regions to gain information on the constraint boundary. Furthermore, CEI’s acquisition function is inherently more complex and non-convex, posing challenges for analysis. On the other hand, the presence of constraints in CEI leads to changes in the sampling procedure. As a result, the key challenge to study the convergence rate of CEI lies in analyzing the exploration (searching for feasible regions) and exploitation (optimizing within feasible areas) since the feasibility threshold is unknown in the input space.


In this paper, we provide the first theoretical convergence rates for CEI, focusing on simple regret upper bounds under both the frequentist and Bayesian settings. 
Our convergence rates provide practitioners theoretical assurance for the practical deployment of CEI. 
We explain the technical challenges and how we address them in Section~\ref{se:convergence}. Our  contributions are summarized as follows:
\begin{itemize}
    \item Under the frequentist setting, we derive simple regret upper bounds of $\mathcal{O} \left(t^{-\frac{1}{2}}\log^{\frac{d+1}{2}}(t) \right)$ for the squared exponential (SE) kernel and $\mathcal{O}\left(t^{\frac{-\nu}{2\nu+d}} \log^{\frac{\nu}{2\nu+d}}(t)\right)$ for Matérn kernels ($\nu>\frac{1}{2}$). These bounds are improved upon the direct extension of \citet{bull2011convergence} to the constrained case for SE kernel with $d\geq 3$  and Matérn kernels  with $d\geq 3, \nu\geq \frac{d}{d-2}$. (see Theorem \ref{prop:EI-convg-rate-nonoise-gamma}).
    \item  Under the Bayesian setting for the objective, we achieve similar simple regret upper bounds with high probabilities. These bounds are established based on the newly derived bounds (see Theorem \ref{prop:EI-convg-rate-gamma}) on the difference between the improvement function and its corresponding EI in the Bayesian setting.
\end{itemize}

This paper is organized as follows. In Section~\ref{se:background}, we describe the basics and preliminaries of BO, including the CEI algorithm. 
In Section~\ref{se:convergence}, the simple regret upper bounds of CEI are established in both settings.
Numerical experiments to validate the theoretical results are given in Section~\ref{se:exp}.
Conclusions are made in Section~\ref{se:conclusion}. All proof details are presented in the appendix.

\section{Background}\label{se:background}
CBO mainly consists of two components: the GP surrogates for the black-box objective function $f$ and constraint function $c$, and the constrained acquisition function as the sequential sampling rule guiding for the global optimum. 
\subsection{Gaussian process models for $f$ and $c$}  
Without losing generality, let the mean function for the objective GP model prior be $0$ and the covariance function (kernel) be  $k_f(\xbm,\xbm'):\Rbb^n\times\Rbb^n\to\Rbb$.
At sample point $\xbm_t\in C$, we denote the objective function value as $f(\xbm_t)$ and the observed constraint function value is $c(\xbm_t)$. 
Given $t$ sample points, denote $\xbm_{1:t}=[\xbm_1,\dots,\xbm_t]$ and $\fbm_{1:t}=[f(\xbm_1),\dots,f(\xbm_t)]$. 
Moreover, denote the $t\times t$ covariance matrix $\Kbm_t^f = [k_f(\xbm_1,\xbm_1),\dots,k_f(\xbm_1,\xbm_t); \dots; k_f(\xbm_t,\xbm_1),\dots,k_f(\xbm_t,\xbm_t)]$.
The posterior distribution of  $f(\xbm) | \xbm_{1:t},\fbm_{1:t} \sim \mathcal{N} (\mu_t^f(\xbm),(\sigma_t^f(\xbm))^2)$ can then be inferred using Bayes' rule as follows 
\begin{equation} \label{eqn:GP-post}
 \centering
  \begin{aligned}
  &\mu_t^f(\xbm)\ =\ (\kbm_t^f(\xbm) )^T(\Kbm_t^f)^{-1} \fbm_{1:t},\\
  &(\sigma_t^f)^2(\xbm)\ =\
k_f(\xbm,\xbm)-(\kbm_t^f(\xbm))^T(\Kbm_t^f)^{-1}\kbm_t^f(\xbm)\ ,
\end{aligned}
\end{equation}
where $\kbm_t^f(\xbm)= [k_f(\xbm_1,\xbm),\dots,k_f(\xbm_t,\xbm)]^T$.
Similarly, denote the kernel for $c$ as $k_c:\Rbb^n\times\Rbb^n\to\Rbb$ and the covariance matrix $\Kbm_t^c = [k_c(\xbm_1,\xbm_1),\dots,k_c(\xbm_1,\xbm_t); \dots; k_c(\xbm_t,\xbm_1),\dots,k_c(\xbm_t,\xbm_t)]$. The posterior distribution for $c$ is 
\begin{equation*} \label{eqn:GP-post-c}
 \centering
  \begin{aligned}
  &\mu_t^c(\xbm)\ =\ (\kbm_t^c(\xbm))^T (\Kbm_t^c)^{-1} \cbm_{1:t},\\
  &(\sigma_t^c)^2(\xbm)\ =\
k_c(\xbm,\xbm)-(\kbm_t^c(\xbm))^T(\Kbm_t^c)^{-1}\kbm_t^c(\xbm)\ ,
\end{aligned}
\end{equation*}
where $\kbm_t^c(\xbm)= [k_c(\xbm_1,\xbm),\dots,k_c(\xbm_t,\xbm)]^T$, and $\mu^c_t(\xbm)$ and $(\sigma^c_t)^2(\xbm)$ are the posterior mean and variance for $c$, respectively. Here we use the subscripts $_f$, $_c$ and superscripts $^f$, $^c$ to distinguish between GPs for $f$ and $c$. Choices of the kernels $k_f$ and $k_c$ include the SE and Matérn kernels, which are among the most popular kernels for GP and BO. Their definitions are as follows. 
\begin{equation*} \label{def:sematern}
  \centering
  \begin{aligned}
    k_{SE}(\xbm,\xbm') = \exp\left(-\frac{r^2}{2l^2}\right), \ k_{Mat\acute{e}rn}(\xbm,\xbm')=\frac{1}{\Gamma
    (\nu)2^{\nu-1}}\left(\frac{\sqrt{2\nu}r}{l}\right)^{\nu} B_{\nu} \left(\frac{\sqrt{2\nu}r}{l}\right),
  \end{aligned}
\end{equation*}
where $l>0$ is the length hyper-parameters, $r=\norm{\xbm-\xbm'}_2$, $\nu>0$ is the smoothness parameter of the Matérn kernel, and $B_{\nu}$ is the modified Bessel function of the second kind. 
\subsection{Constrained Expected Improvement} \label{sec:boalgo}
Acquisition functions are critical to the performances of BO algorithms.
In the unconstrained setting, one of the most widely adopted acquisition functions is EI~\citep{jones1998efficient}. 
Given $t$ samples, the improvement function of $f$ used in EI is defined as
\begin{equation} \label{eqn:improvement}
	\centering
	\begin{aligned}
		I^f_{t}(\xbm) = \max\{ f^+_{t} -f(\xbm),0  \}, 
	\end{aligned}
\end{equation}
where $f^+_t = \underset{\substack{i=1,\dots,t}}{\text{min}} \ f(\xbm_i)$.
The expectation of~\eqref{eqn:improvement} conditioned on existing samples is EI, which has a closed form~\cite{brochu2010}:
\begin{equation} \label{eqn:EI-1}
	\centering
	\begin{aligned}
		EI^f_{t}(\xbm) =   (f^+_{t}-\mu_{t}^f(\xbm))\Phi(z^f_{t}(\xbm))+\sigma_{t}^f(\xbm)\phi(z^f_{t}(\xbm)), 
	\end{aligned}
\end{equation}
where 
$ z^f_{t}(\xbm) = \frac{f^+_{t}- \mu^f_{t}(\xbm)}{\sigma_t^f(\xbm)}$.
The functions 
$\phi$ and~$\Phi$ are the probability density function (PDF)
and the cumulative distribution function (CDF) of the standard normal distribution, respectively.
The $t+1$th sample using EI is chosen by  
\begin{equation} \label{eqn:EI-t+1}
	\centering
	\begin{aligned}
		\xbm_{t+1} = \underset{\substack{\xbm\in C}}{\text{argmax}}  EI_{t}^f (\xbm).
	\end{aligned}
\end{equation}

Taking into account the constraint,  the constrained improvement function in CEI~\citep{gardner2014} is defined as  
\begin{equation} \label{eqn:cei-1}
	\centering
	\begin{aligned}
		I^C_{t} = \Delta^c_{t}(\xbm) \max\{f^+_{t}-f(\xbm),0\},
	\end{aligned}
\end{equation}
where $\Delta^c_t \in\{0,1\}$ is the feasibility indicator function where $\Delta^c_t(\xbm)=1$ if $c(\xbm)\leq 0$ and $\Delta^c_t(\xbm)=0$ otherwise.
The incumbent $f_t^+$ in CEI is augmented to be the best feasible observation.
CEI assumes that $f$ and $c$ are conditionally independent~\citep{gardner2014}.
Taking the conditional expectation of~\eqref{eqn:cei-1}, the CEI function is  
\begin{equation} \label{eqn:cei-2}
	\centering
	\begin{aligned}
		EI^C_{t}(\xbm) = P_t(\xbm) EI_{t}^f(\xbm) = \Phi\left(-\frac{\mu_{t}^c(\xbm)}{\sigma_{t}^c(\xbm)}\right) EI_{t}^f(\xbm),
	\end{aligned}
\end{equation}
where $P_t(\cdot)$ is the probability of feasibility (POF) function for $c(\xbm)\leq 0$. CEI chooses the next sample via

\begin{equation} \label{eqn:eci-1}
	\centering
	\begin{aligned}
		\xbm_{t+1} = \underset{\substack{\xbm\in C}}{\text{argmax}} P_t(\xbm) EI_{t}^f(\xbm).
	\end{aligned}
\end{equation}

The  CEI  algorithm is given in Algorithm~\ref{alg:cucb}.  
\begin{algorithm}[H]
 \caption{CEI algorithm}\label{alg:cucb}
  \begin{algorithmic}[1]
          \State{Choose  $k_f(\cdot,\cdot)$, $k_c(\cdot,\cdot)$,  and $T_0$ initial samples $\xbm_i, i=1,\dots,T_0$. Observe $\fbm_{1:T_0}$ and $\cbm_{1:T_0}$.}
         \State{Train the GP surrogate models for $f$ and $c$ respectively conditioned on the initial observations.}
  \For{$t=T_0+1,T_0+2,\dots$}
          \State{Find $\xbm_{t+1}$ based on~\eqref{eqn:eci-1} (CEI).}
          \State{Observe $f(\xbm_{t+1})$ and $c(\xbm_{t+1})$. \;}
          \State{Update the GP models with the addition of $\xbm_{t+1}$, $f(\xbm_{t+1})$, and $c(\xbm_{t+1})$.\;}
          \If {Evaluation budget exhausted}
              \State{Exit}
          \EndIf
  \EndFor
  \end{algorithmic}
\end{algorithm}
CEI can be extended to multiple constraints assuming conditional independence among the constraints~\citep{gardner2014}. Our derived convergence rates can also be readily extended to multiple constraints, as we explain in Remark~\ref{remark:multiconstraint}.

\section{Convergence rates of CEI}\label{se:convergence}
We present our main results of convergence rates for CEI by establishing the simple regret upper bounds.
Denote the optimal solution to the constrained optimization problem~\eqref{eqn:opt-prob} as $\xbm^*$.
In the unconstrained case, the simple regret of EI is defined as $f^+_t-f(\xbm^*)$~\citep{bull2011convergence}. 
In the constrained case, we use the current best feasible observation and compare it to the optimal solution $f(\xbm^*)$, since one could have an infeasible sample point with smaller objective than $f(\xbm^*)$. 
Given that $f^+_t$ is already defined as the best feasible observation till iteration $t$ in CEI, we continue to use 
\begin{equation} \label{def:error-constrained}
	\centering
	\begin{aligned}
		r_t = f^+_t-f(\xbm^*),
	\end{aligned}
\end{equation}
as the simple regret for CEI.  In our analysis, we make the same underlying assumption as CEI that $f_t^+$ exists. 
In the following, we first establish the convergence rate under the frequentist assumptions in Section~\ref{se:cei}, including an improved version of the rate under frequentist assumptions in Section~\ref{se:improvedrate}. Then, we establish the convergence rate under Bayesian objective  assumptions in Section~\ref{se:cei-bayesian}. 

\subsection{Simple regret upper bound under frequentist assumptions}\label{se:cei}

In this section, we present the simple regret upper bound for CEI under the frequentist setting. Moreover, by adopting the information theory-based bounds and techniques in the noise-free cumulative regret bound of upper confidence bound (UCB)~\citep{lyu2019efficient}, we can derive an improved upper bound in some cases compared to~\citet{bull2011convergence}.  
The definition of RKHS is given below.
\begin{definition}\label{def:rkhs}
   Let $k$ be a positive definite kernel $k: \mathcal{X}\times \mathcal{X}\to\Rbb $ with respect to a finite Borel measure supported on $\mathcal{X}$. A Hilbert space $H_k$ of functions on $\mathcal{X}$ with an inner product $\langle \cdot,\cdot \rangle_{H_k}$ is called a RKHS with kernel $k$ if $k(\cdot,\xbm)\in H_k$ for all $\xbm\in \mathcal{X}$, and $\langle f,k(\cdot,\xbm)\rangle_{H_k}=f(\xbm)$ for all $\xbm\in \mathcal{X}, f\in H_k$. The induced RKHS norm $\norm{f}_{H_k}=\sqrt{\langle f,f\rangle_{H_k}}$ measures the smoothness of $f$ with respect to $k$.
\end{definition}

In this section, we assume the following assumptions on the functions $f$ and $c$. 
\begin{assumption}\label{assp:rkhs}
	The functions $f$ and $c$ lie in the RKHS, denoted as $\mathcal{H}_k^f(C)$ and $\mathcal{H}_k^c(C)$ associated with their respective bounded kernel $k_f$ and $k_c$, with the norm $\norm{\cdot}_{H_k^f}$ and $\norm{\cdot}_{H_k^c}$. 
         The kernels satisfy $k_f(\xbm,\xbm')\leq 1$, $k_c(\xbm,\xbm')\leq 1$, $k_f(\xbm,\xbm)= 1$, and $k_c(\xbm,\xbm)= 1$, for $\forall \xbm,\xbm'\in C$.
	The RKHS norms are bounded above by constants $B_f$ and $B_c$, respectively, \textit{i.e.}, $\norm{f}_{H_k^f} \leq B_f$, $\norm{c}_{H_k^c}\leq B_c$. 
    Moreover, the bound constraints set $C$ is compact.
\end{assumption}

\paragraph{Technical Challenges under Frequentist Assumptions.}  The main challenge in establishing a simple regret upper bound for CEI is how to incorporate the  constraint $c(\xbm)\leq 0$ and the probability of feasibility function $P_t(\xbm)$ into the analysis. 
Existing regret bounds analysis on CBO methods often focus on UCB-type methods~\citep{lu2022no,zhou2022kernelized}, for which the acquisition functions do not have the multiplicative structure between the objective and the constraint. 

Under Assumption~\ref{assp:rkhs}, both $f$ and $c$ are bounded on $C$ by their RKHS norm bounds, as stated in Lemma~\ref{lem:f-bound}. 
The simple regret upper bound is given in the following theorem.
\begin{theorem}\label{theorem:CEI-convg-nonoise}
	Under Assumption~\ref{assp:rkhs}, the CEI algorithm leads to the simple regret upper bound of 
	\begin{equation} \label{eqn:cei-convg-nonoise-1}
		\centering
		\begin{aligned}
			r_t  \leq  \frac{c_{\tau B}}{\Phi(-B_c)} \left[B_f \frac{4}{t-2}+(0.4+B_f) \sigma_{t_k}^f(\xbm_{t_k+1})\right], 
		\end{aligned}
	\end{equation} 
	for some $t_k\in [\frac{t}{2}-1,t]$, and $c_{\tau B}=\frac{\tau(B_f)}{\tau(-B_f)}$.   
\end{theorem}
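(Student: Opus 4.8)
The plan is to control the simple regret $r_t = f_t^+ - f(\xbm^*)$ by relating it to the constrained expected improvement evaluated at $\xbm^*$ and then bounding that quantity in terms of the posterior standard deviation at a well-chosen past query point. First I would localize the argument: since $\xbm^*$ is feasible, $c(\xbm^*) \le 0$, and the probability-of-feasibility factor $P_t(\xbm^*) = \Phi(-\mu_t^c(\xbm^*)/\sigma_t^c(\xbm^*))$ is bounded below by $\Phi(-B_c)$ after using Assumption~\ref{assp:rkhs} (via the bound on $c$ from the RKHS norm and the standard $|\mu_t^c(\xbm) - c(\xbm)| \le B_c \sigma_t^c(\xbm)$ inequality, so that $-\mu_t^c(\xbm^*)/\sigma_t^c(\xbm^*) \ge -B_c$). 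This explains the $1/\Phi(-B_c)$ prefactor. Hence $EI_t^C(\xbm^*) \ge \Phi(-B_c)\, EI_t^f(\xbm^*)$, and by the CEI selection rule $EI_t^C(\xbm_{t+1}) \ge EI_t^C(\xbm^*)$, so the raw expected improvement at the next query point dominates (a constant multiple of) $EI_t^f(\xbm^*)$.

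Next I would lower-bound $EI_t^f(\xbm^*)$ by something close to $r_t$. Using the closed form \eqref{eqn:EI-1} together with the elementary fact that $EI_t^f(\xbm) \ge f_t^+ - \mu_t^f(\xbm)$ and the RKHS interpolation bound $\mu_t^f(\xbm^*) \le f(\xbm^*) + B_f \sigma_t^f(\xbm^*)$, one gets $EI_t^f(\xbm^*) \ge f_t^+ - f(\xbm^*) - B_f \sigma_t^f(\xbm^*) = r_t - B_f\sigma_t^f(\xbm^*)$. Combining with the previous step, $r_t \le \tfrac{1}{\Phi(-B_c)} EI_t^f(\xbm_{t+1}) + B_f \sigma_t^f(\xbm^*)$ — but this still has $\sigma_t^f(\xbm^*)$, which we cannot control directly. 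The resolution, following the structure of Bull's argument, is to instead upper-bound $EI_t^f(\xbm_{t+1})$ by a constant times $\sigma_t^f(\xbm_{t+1})$ plus a term involving the improvement already realized; the factor $c_{\tau B} = \tau(B_f)/\tau(-B_f)$ (where $\tau(z) = z\Phi(z) + \phi(z)$, so that $EI_t^f(\xbm) = \sigma_t^f(\xbm)\,\tau(z_t^f(\xbm))$ when $z_t^f \le$ the relevant threshold) arises from comparing $\tau$ evaluated at the data-dependent argument against $\tau$ at $\pm B_f$, exploiting monotonicity of $\tau$ and the RKHS bound $|z_t^f(\xbm)| \le B_f$-type control on the standardized quantity. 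The constant $0.4$ is $\max_z \phi(z)/\Phi(z)$-type slack, i.e. roughly $\phi(0)/\Phi(0) = 2\phi(0) \approx 0.8$ halved, or more precisely the bound on $\tau$ near its relevant regime; I would track it through the $\tau$-comparison.

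Finally I would convert the $\sigma_{t}^f(\xbm_{t+1})$ bound into a bound at a point $\xbm_{t_k+1}$ with $t_k \in [\tfrac{t}{2}-1, t]$ and pick up the $\tfrac{4}{t-2}$ term. The mechanism is a pigeonhole / telescoping argument over the second half of the iterations: the expected improvements $EI_s^f(\xbm_{s+1})$ summed over $s$ are telescopically related to $f_{t/2}^+ - f_t^+ \le B_f \cdot(\text{something})$, actually to at most $2B_f$ (since $f$ is bounded by $B_f$ on $C$), so by averaging there exists $t_k$ in the second half with $EI_{t_k}^f(\xbm_{t_k+1}) \le \tfrac{2B_f}{t/2 - 1} = \tfrac{4B_f}{t-2}$; monotonic decrease of the incumbent and the fact that realized improvements telescope is what makes the sum small. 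Meanwhile the leftover $EI_{t_k}^f$ term that is \emph{not} captured by the telescoped sum is exactly the $\sigma_{t_k}^f(\xbm_{t_k+1})$ piece, multiplied by the $\tau$-comparison constant. Assembling: $r_t \le \tfrac{c_{\tau B}}{\Phi(-B_c)}\big[B_f \tfrac{4}{t-2} + (0.4 + B_f)\sigma_{t_k}^f(\xbm_{t_k+1})\big]$. The main obstacle I anticipate is the $\tau$-function bookkeeping in the second paragraph — getting the precise constants $c_{\tau B}$ and $0.4$ requires carefully splitting into the cases $z_t^f(\xbm) \ge 0$ and $z_t^f(\xbm) < 0$ and using the right monotonicity/asymptotic bounds on $\tau$, $\Phi$, $\phi$, whereas the feasibility prefactor and the pigeonhole step are comparatively routine.
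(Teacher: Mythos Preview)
Your overall architecture is right — feasibility factor bounded below by $\Phi(-B_c)$, pass from $\xbm^*$ to $\xbm_{t_k+1}$ via the CEI selection rule, pigeonhole over the second half to find a good $t_k$ — but two key mechanisms are misidentified, and as written the argument would not close.

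First, the role of $c_{\tau B}$. Your second paragraph starts with $EI_t^f(\xbm^*) \ge r_t - B_f\sigma_t^f(\xbm^*)$, correctly notes the dangling $\sigma_t^f(\xbm^*)$, and then says the resolution is to upper-bound $EI_t^f(\xbm_{t+1})$ with $c_{\tau B}$ arising there. It does not. The factor $c_{\tau B}$ enters \emph{at $\xbm^*$}, not at $\xbm_{t+1}$: the key step is the ratio inequality $I_{t}^f(\xbm) \le \tfrac{\tau(B_f)}{\tau(-B_f)} EI_t^f(\xbm)$, proved by combining $EI_t^f(\xbm) \ge \tau(-B_f)\,\sigma_t^f(\xbm)$ (from monotonicity of $\tau$ together with $z_t^f(\xbm) > -B_f$) with $I_t^f(\xbm) - EI_t^f(\xbm) \le B_f\sigma_t^f(\xbm)$. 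Applied at $\xbm^*$ this gives $r_t \le I_{t_k}^f(\xbm^*) \le c_{\tau B}\, EI_{t_k}^f(\xbm^*)$ directly, with no $\sigma_t^f(\xbm^*)$ residual at all. Only then does one multiply and divide by $P_{t_k}(\xbm^*)$ and use the selection rule to jump to $\xbm_{t_k+1}$.

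Second, the pigeonhole is not on $EI_s^f(\xbm_{s+1})$. There is no reason $\sum_s EI_s^f(\xbm_{s+1}) \le 2B_f$; what telescopes is the \emph{realized} incumbent drop $\sum_s (f_s^+ - f_{s+1}^+) = f_0^+ - f_t^+ \le 2B_f$. The pigeonhole therefore delivers $t_k \in [k,2k]$ (with $k=\lfloor t/2\rfloor$) where $f_{t_k}^+ - f_{t_k+1}^+ < \tfrac{2B_f}{k}$, not where $EI$ is small. Bounding $EI_{t_k}^f(\xbm_{t_k+1})$ is a separate calculation: expand via \eqref{eqn:EI-1}, use $\phi \le \phi(0) < 0.4$ (this is all the $0.4$ is — not a $\phi/\Phi$ ratio), and control the first summand by
\[
f_{t_k}^+ - \mu_{t_k}^f(\xbm_{t_k+1}) = \bigl(f_{t_k}^+ - f_{t_k+1}^+\bigr) + \bigl(f_{t_k+1}^+ - f(\xbm_{t_k+1})\bigr) + \bigl(f(\xbm_{t_k+1}) - \mu_{t_k}^f(\xbm_{t_k+1})\bigr) \le \tfrac{2B_f}{k} + 0 + B_f\sigma_{t_k}^f(\xbm_{t_k+1}).
\]
With $k \ge \tfrac{t}{2}-1$ this produces the $\tfrac{4B_f}{t-2}$ term and the $(0.4 + B_f)\sigma_{t_k}^f(\xbm_{t_k+1})$ term.
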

\paragraph{Sketch of Proof for Theorem~\ref{theorem:CEI-convg-nonoise}.} We start by noticing that the sum of the difference between consecutive best feasible observations is bounded, \textit{i.e.}, $\sum_{t=1}^T f^+_{t-1}-f^+_t \leq 2B_f$. 
Then, we adopt a technique in~\citet{bull2011convergence} to find $t_k$ such that $f^+_{t_k}-f^+_{t_k+1}\leq \frac{2B_f}{k}$, where $k\leq t_k \leq2k$ and $2k\leq t\leq 2(k+1)$. 
Next, using the monotonicity of $f^+_t$, $r_t$ is bounded by $r_{t_k}$. Using the inequality between $I_t^f$ and $EI_t^f$ in Lemma~\ref{lem:EI-ratio-bound}, we can bound $r_{t_k}$ by the EI on objective: $EI_{t_k}^f(\xbm^*)$. 
Then, we transform $EI_{t_k}^f(\xbm^*)$ into $EI_{t_k}^f(\xbm_{t_k+1})$ by inserting the term $P_{t_k}(\xbm^*)$, taking advantage of the multiplicative structure of CEI.  The upper bound of $r_t$ then consists of the term $\frac{1}{P_{t_k}(\xbm^*)}EI_{t_k}^f(\xbm_{t_k+1})$.
From the confidence interval $|f(\xbm)-\mu_t^f(\xbm)|$ (Lemma~\ref{lem:rkhs-bound-nonoise})  and the fact that $f^+_{t_k}-f^+_{t_k+1}\leq \frac{2B_f}{k}$, we can bound $EI_{t_k}^f(\xbm_{t_k+1})$. 
The constraint term $\frac{1}{P_{t_k}(\xbm^*)}$ remains to be bounded. We use the confidence interval on $|c(\xbm)-\mu_t^c(\xbm)|$ in Lemma~\ref{lem:rkhs-bound-nonoise} at $\xbm^*$ and the fact that $\xbm^*$ is a feasible solution to obtain a lower bound for $P_{t_k}(\xbm^*)$. This concludes the proof. 

\begin{remark}[Constraint in the simple regret upper bound]
    The terms derived from the constraint function in~\eqref{eqn:cei-convg-nonoise-1} is $ \frac{1}{\Phi(-B_c)}$, which emerges from the probability of feasibility function and $\mu_t^c(\xbm)$ and  $\sigma_t^c(\xbm)$ of the GP model of $c(\xbm)$. Thanks to the multiplicative structure between the objective and constraint in $I_t^C$~\eqref{eqn:cei-1} and $EI_t^c$~\eqref{eqn:cei-2}, the simple regret upper bound maintains a similar form. 
\end{remark}

It is clear from~\eqref{eqn:cei-convg-nonoise-1} that the convergence of $r_t$ relies on the posterior standard deviation $\sigma_t^f(\xbm_{t+1})$.
Since $t_k$ increases with $t$, as $\sigma_t^f(\xbm_{t+1})\to 0$, so does $\sigma_{t_k}^f(\xbm_{t_k+1})$.
In the noise-free setting, the posterior variance can be bounded via the maximum distance between sample points and a given point. 
To obtain the rate of simple regret bound, we use Assumptions (1)-(4) in~\citet{bull2011convergence} 
and focus on squared exponential (SE) and Matérn kernels.
Recall that the smoothness parameter of the Matérn kernel is  $\nu>0$.
Both the SE and Matérn kernels satisfy Assumptions (1)-(4) in~\cite{bull2011convergence}, with SE kernel obtained as $\nu\to\infty$.
Further, define 
\begin{equation} \label{eqn:def-nu}
	\centering
	\begin{aligned}
		\eta=\begin{cases} \alpha, \ \nu\leq 1\\
			0, \ \nu>1,
		\end{cases}
	\end{aligned}
\end{equation}
where $\alpha=\frac{1}{2}$ if $\nu\in\Nbb$, and $\alpha=0$ otherwise.
Then, for SE and Matérn kernels, $\sigma_{t_k}^f(\xbm_{t_k+1})$ can be bounded with the following lemma.
\begin{lemma}[\citet{bull2011convergence}]\label{lem:EI-sigma}
  For the SE kernel, there exists constant $C'>0$ so that given $\forall t\in\Nbb$,   
  \begin{equation} \label{eqn:EI-sigma-1}
  \centering
  \begin{aligned}
         \sigma_i^f(\xbm_{i+1}) \geq C' k^{-\frac{1}{d}}
  \end{aligned}
  \end{equation}
  holds for at most $k$ times,  for $\forall k\in\Nbb$, $k\leq t$ and $i=1,\dots,t-1$.
  For Matérn kernels,
   \begin{equation} \label{eqn:EI-sigma-2}
  \centering
  \begin{aligned}
         \sigma_i^f(\xbm_{i+1}) \geq C' k^{-\frac{\min\{\nu,1\}}{d}} \log^{\eta}(k)
  \end{aligned}
  \end{equation}
  holds at most $k$ times.
\end{lemma}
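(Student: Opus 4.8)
}
The plan is to exploit that the assertion is purely geometric: it does not use how the algorithm selects $\xbm_{i+1}$, only that $\xbm_1,\dots,\xbm_t\in C$ and that $(\sigma_i^f)^2(\xbm_{i+1})$ is the noise-free GP posterior variance at $\xbm_{i+1}$ given $\xbm_{1:i}$. So I would (i) show that a large posterior standard deviation at a newly sampled point forces that point to lie far from every earlier design point, and then (ii) bound, by a volumetric packing argument in the compact set $C$, how many such ``isolated'' points there can be.

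For step (i), I would first use that in the noise-free case the posterior variance is nonincreasing as conditioning points are added --- immediate from the Schur-complement form of $(\sigma_i^f)^2$ in~\eqref{eqn:GP-post}. Hence, for the index $j\le i$ realizing $r:=\min_{1\le s\le i}\norm{\xbm_{i+1}-\xbm_s}_2$, Assumption~\ref{assp:rkhs} gives $(\sigma_i^f)^2(\xbm_{i+1})\le k_f(\xbm_{i+1},\xbm_{i+1})-k_f(\xbm_{i+1},\xbm_j)^2=1-k_f(\xbm_{i+1},\xbm_j)^2$. I would then insert the near-diagonal behavior of the kernel: $1-k_{SE}(r)^2=1-e^{-r^2/l^2}\le r^2/l^2$ for the SE kernel, and, expanding the modified Bessel function $B_\nu$ at the origin, $1-k_{Mat\acute{e}rn}(r)^2\le C_0\, r^{2\min\{\nu,1\}}\big(1+|\log r|\big)^{2\eta}$ on the bounded range $r\in(0,\operatorname{diam}(C)]$, for a constant $C_0$. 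Since $\psi(r):=r^{2\min\{\nu,1\}}(1+|\log r|)^{2\eta}$ is increasing for small $r$ and $|\log r|\asymp\log k$ when $r\asymp k^{-1/d}$, the inequality $\sigma_i^f(\xbm_{i+1})\ge C'k^{-\min\{\nu,1\}/d}\log^{\eta}(k)$ (the SE case being $\nu\to\infty$, $\eta=0$) then forces $r\ge c\,k^{-1/d}$, where the constant $c$ can be made as large as we wish by taking $C'$ large.

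For step (ii), suppose toward a contradiction that $\sigma_i^f(\xbm_{i+1})\ge C'k^{-\min\{\nu,1\}/d}\log^{\eta}(k)$ held for more than $k$ indices $i_1<\dots<i_{k+1}$ in $\{1,\dots,t-1\}$. Since $i_p+1\le i_q$ for $p<q$, step (i) gives $\norm{\xbm_{i_p+1}-\xbm_{i_q+1}}_2\ge c\,k^{-1/d}$, so $\xbm_{i_1+1},\dots,\xbm_{i_{k+1}+1}$ are $k+1$ points of $C$ that are pairwise $ck^{-1/d}$-separated. The balls of radius $\tfrac{c}{2}k^{-1/d}$ about them are disjoint and lie in a fixed bounded neighborhood of $C$, of finite volume $V$ by compactness, so $k+1\le V\big/\big(\omega_d(\tfrac{c}{2}k^{-1/d})^d\big)=\big(2^dV/(\omega_d c^d)\big)\,k$, with $\omega_d$ the volume of the unit $d$-ball. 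Taking $c$ (hence $C'$) large enough that $2^dV/(\omega_d c^d)\le 1$ contradicts $k+1>k$. The finitely many small values of $k$ for which the near-diagonal expansion of step (i) is not yet in force are absorbed by further enlarging $C'$ until the threshold exceeds $\sup_\xbm\sigma_i^f(\xbm)\le 1$.

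The step I expect to be the main obstacle is the near-diagonal estimate for the Matérn family in step (i): one must read off from $B_\nu$ the leading behavior $1-k_{Mat\acute{e}rn}(r)\asymp r^{2\nu}$ for $\nu<1$, $\asymp r^2$ for $\nu>1$, and the borderline $\asymp r^2\log(1/r)$ for $\nu=1$, and then verify that the resulting logarithmic power is precisely the $\eta$ of~\eqref{eqn:def-nu}, so that the $\log^{\eta}(k)$ weight in the stated threshold is matched exactly. Everything else --- the monotonicity of the noise-free posterior variance, the elementary SE estimate, and the packing count --- is routine.
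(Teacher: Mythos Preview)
Your proposal is correct and in fact supplies substantially more than the paper does: the paper's ``proof'' of this lemma is a one-line citation of Lemma~7 in \cite{bull2011convergence}, from which the SE and Mat\'ern statements are simply read off. What you have written is a self-contained reconstruction of Bull's argument itself --- the two-step scheme of (i) converting a large noise-free posterior standard deviation at $\xbm_{i+1}$ into a lower bound on its distance to all earlier design points via the monotonicity of $(\sigma_i^f)^2$ under conditioning and the near-diagonal kernel expansion, followed by (ii) a volumetric packing count in the compact domain $C$. This is precisely the structure underlying Bull's Lemmas~6--7, so your route is equivalent in content to what the paper delegates to the citation; the difference is only in presentation. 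The benefit of your approach is that it exposes where the exponent $\min\{\nu,1\}/d$ and the $\log^{\eta}(k)$ factor originate, and correctly isolates the small-$r$ behaviour of $1-k_{Mat\acute{e}rn}(r)$ (with the borderline $\nu=1$ producing the logarithm, consistent with $\eta=\tfrac12$ exactly when $\nu\le 1$ and $\nu\in\Nbb$ in~\eqref{eqn:def-nu}) as the only genuinely technical step; the paper's benefit is brevity.
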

In the constrained setting, we are able to obtain the same rates as those in the unconstrained case~\citep{bull2011convergence} using Lemma~\ref{lem:EI-sigma}.  

\begin{corollary}\label{prop:EI-convg-rate-nonoise}
	Under Assumption~\ref{assp:rkhs}, the CEI algorithm leads to the convergence rates of  
	\begin{equation} \label{eqn:CEI-convg-rate-1}
		\centering
		\begin{aligned}
			\mathcal{O} \left(t^{-\frac{1}{d}} \right) \ \text{and }\ \mathcal{O}\left(t^{-\frac{\min\{\nu,1\}}{d}} \log^{\eta}(t)\right),
		\end{aligned}
	\end{equation}
    for SE and Matérn kernels, respectively, where $\eta$ is from~\eqref{eqn:def-nu}. 
\end{corollary}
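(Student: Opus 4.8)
The plan is to combine Theorem~\ref{theorem:CEI-convg-nonoise} with the concentration-type statement of Lemma~\ref{lem:EI-sigma}. Theorem~\ref{theorem:CEI-convg-nonoise} already tells us that
\[
r_t \;\leq\; \frac{c_{\tau B}}{\Phi(-B_c)}\left[B_f\frac{4}{t-2}+(0.4+B_f)\,\sigma_{t_k}^f(\xbm_{t_k+1})\right]
\]
for some $t_k\in[\tfrac{t}{2}-1,t]$, so the $\mathcal{O}(1/t)$ term is already of strictly smaller order than the claimed rates and can be absorbed into the $\mathcal{O}(\cdot)$. Hence everything reduces to bounding $\sigma_{t_k}^f(\xbm_{t_k+1})$ for that particular (unknown but large) index $t_k$.

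First I would fix $t$ and set $k=\lfloor t/2\rfloor-1$ (roughly), so that the index $t_k$ produced by Theorem~\ref{theorem:CEI-convg-nonoise} satisfies $k\le t_k\le 2k$ (this is exactly the bookkeeping used in the sketch of Theorem~\ref{theorem:CEI-convg-nonoise}). Now I invoke Lemma~\ref{lem:EI-sigma}: for the SE kernel the event $\sigma_i^f(\xbm_{i+1})\ge C' k^{-1/d}$ can happen for at most $k$ indices $i\in\{1,\dots,t-1\}$. Since there are at least $k$ indices available in the window $[k,2k]$ (or, more crudely, $t-1\ge k$ indices in $\{1,\dots,t-1\}$), a straightforward pigeonhole argument shows there must exist \emph{some} index in the relevant range where $\sigma_i^f(\xbm_{i+1})<C'k^{-1/d}$; but to conclude something about the \emph{specific} $t_k$ handed to us we instead note that $\sigma_{t_k}^f(\xbm_{t_k+1})$ can only exceed $C'(2k)^{-1/d}$ if $t_k$ is among the at most $2k$ ``bad'' indices — and since $t_k\ge k$ can be chosen (following the $f^+$-telescoping argument already used in Theorem~\ref{theorem:CEI-convg-nonoise}) to avoid the worst such indices, we get $\sigma_{t_k}^f(\xbm_{t_k+1})=\mathcal{O}(k^{-1/d})=\mathcal{O}(t^{-1/d})$. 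The cleanest route is actually the one matching \citet{bull2011convergence}: the $f^+$-telescoping already forces $t_k$ into a set of size $\asymp k$, and applying Lemma~\ref{lem:EI-sigma} with that $k$ shows at most $k$ of those indices can have large $\sigma$, so at least one of them — which we then rename $t_k$ — has $\sigma_{t_k}^f(\xbm_{t_k+1})=\mathcal{O}(k^{-1/d})$.

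For Matérn kernels the argument is identical except that Lemma~\ref{lem:EI-sigma} gives the threshold $C'k^{-\min\{\nu,1\}/d}\log^{\eta}(k)$, so the same pigeonhole step yields $\sigma_{t_k}^f(\xbm_{t_k+1})=\mathcal{O}\bigl(k^{-\min\{\nu,1\}/d}\log^{\eta}(k)\bigr)=\mathcal{O}\bigl(t^{-\min\{\nu,1\}/d}\log^{\eta}(t)\bigr)$. Substituting these bounds on $\sigma_{t_k}^f(\xbm_{t_k+1})$ back into \eqref{eqn:cei-convg-nonoise-1}, and noting that $c_{\tau B}$, $1/\Phi(-B_c)$, $B_f$, $0.4$ are all absolute constants not depending on $t$, gives the stated rates $\mathcal{O}(t^{-1/d})$ and $\mathcal{O}(t^{-\min\{\nu,1\}/d}\log^{\eta}(t))$ respectively; the $4B_f/(t-2)$ term is $\mathcal{O}(t^{-1})$, dominated in both cases since $1/d\le 1$ and $\min\{\nu,1\}/d\le 1$.

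The step I expect to be the main obstacle is the reconciliation between ``$t_k$ is some index handed to me by Theorem~\ref{theorem:CEI-convg-nonoise}'' and ``Lemma~\ref{lem:EI-sigma} only promises a \emph{small} $\sigma$ at \emph{some} index.'' One must be careful that the construction of $t_k$ in Theorem~\ref{theorem:CEI-convg-nonoise} (via the telescoping $\sum_t (f^+_{t-1}-f^+_t)\le 2B_f$, which pins $t_k$ down only to a window of size $\asymp k$, not to a single value) has enough freedom to \emph{also} dodge the at-most-$k$ bad indices of Lemma~\ref{lem:EI-sigma} — i.e. one needs the window to have size strictly larger than the number of bad indices, or equivalently to run the telescoping and the pigeonhole simultaneously rather than sequentially. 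This is precisely the coupling already performed in \citet{bull2011convergence}, so the corollary follows by importing that coupling verbatim into the constrained setting; the only new ingredient relative to the unconstrained case is the harmless multiplicative constant $c_{\tau B}/\Phi(-B_c)$ coming from the constraint, which does not affect the rate.
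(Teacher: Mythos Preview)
Your approach is essentially the paper's: combine Theorem~\ref{theorem:CEI-convg-nonoise} with Lemma~\ref{lem:EI-sigma} via a pigeonhole/coupling argument to find an index $t_k$ that is simultaneously ``good'' for the telescoping bound $f^+_{t_k}-f^+_{t_k+1}<2B_f/k$ and for the variance bound $\sigma_{t_k}^f(\xbm_{t_k+1})<C'k^{-\min\{\nu,1\}/d}\log^{\eta}(k)$. You also correctly isolate the only non-trivial point, namely that the window must be wide enough to dodge \emph{both} families of bad indices at once.

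The one thing you leave implicit that the paper makes explicit is the actual choice of $k$ that makes the counting work. With $k=\lfloor t/2\rfloor$ the window $[k,2k]$ has only $k+1$ indices, while there may be up to $k$ bad telescoping indices \emph{and} up to $k$ bad variance indices, so pigeonhole does not close; your sentence ``the $f^+$-telescoping already forces $t_k$ into a set of size $\asymp k$'' is not quite right for this window. The paper (following \citet{bull2011convergence}) simply takes $k=\lfloor t/3\rfloor$, giving a window $[k,3k]$ of size $2k+1>2k$, which is exactly the ``window strictly larger than the number of bad indices'' you describe. Once you make that substitution, your argument goes through verbatim and yields
\[
r_t \leq \frac{c_{\tau B}}{\Phi(-B_c)}\left[\frac{2B_f}{k}+(B_f+0.4)\,C'k^{-\min\{\nu,1\}/d}\log^{\eta}(k)\right],
\]
from which the stated rates follow since $k\asymp t$.
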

Corollary \ref{prop:EI-convg-rate-nonoise} shows that the CEI algorithm is guaranteed to find the best feasible point asymptotically with the rates elaborated in \eqref{eqn:CEI-convg-rate-1}.
Also, we point out that the choice of kernels and their parameters affect the convergence rates. 
Since the SE kernel can be viewed as a Matérn kernel with $\nu\to\infty$, its convergence rate is better than Matérn kernels with $\nu\leq 1$. However, due to the limitations of the kernel analysis in~\cite{bull2011convergence} (see Remark~\ref{remark:improvedrate}), for $\nu\geq 1$, SE and Matérn kernels have similar convergence rates in Corollary~\ref{prop:EI-convg-rate-nonoise}. As we present in the following section, improved rates for both kernels can be obtained in some cases.

\subsubsection{Improved simple regret upper bound under frequentist assumptions}\label{se:improvedrate}
Next, we apply maximum information gain and the corresponding information theory to obtain improved simple regret upper bounds. 
\begin{theorem}\label{prop:EI-convg-rate-nonoise-gamma}
	Under Assumption~\ref{assp:rkhs}, the CEI algorithm leads to the improved convergence rates of  
	\begin{equation} \label{eqn:CEI-convg-rate-2}
		\centering
		\begin{aligned}
			\mathcal{O} \left(t^{-\frac{1}{2}}\log^{\frac{d+1}{2}}(t) \right) \ \text{and }\ \mathcal{O}\left(t^{\frac{-\nu}{2\nu+d}} \log^{\frac{\nu}{2\nu+d}}(t)\right),
		\end{aligned}
	\end{equation}
    for SE and Matérn kernels, respectively. 
\end{theorem}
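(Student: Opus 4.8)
The plan is to combine the general simple regret bound of Theorem~\ref{theorem:CEI-convg-nonoise} with a sharper control of the posterior standard deviation term $\sigma_{t_k}^f(\xbm_{t_k+1})$, replacing the crude "at most $k$ times" argument of Lemma~\ref{lem:EI-sigma} with an information-theoretic bound via the maximum information gain $\gamma_t$. Recall from~\eqref{eqn:cei-convg-nonoise-1} that $r_t$ is controlled, up to constants depending only on $B_f,B_c$, by $\frac{4B_f}{t-2}+ (0.4+B_f)\sigma_{t_k}^f(\xbm_{t_k+1})$ for some $t_k\in[\tfrac t2-1,t]$. The first term decays like $t^{-1}$, which is faster than the target rates, so everything hinges on showing that $\sigma_{t_k}^f(\xbm_{t_k+1})$ is small for \emph{some} such $t_k$.

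First I would invoke the standard fact (as used in the noise-free UCB analysis of~\citet{lyu2019efficient}) that in the noise-free setting the sum of posterior variances at the queried points telescopes into a bound governed by the information gain: $\sum_{i=1}^{t}(\sigma_{i-1}^f(\xbm_i))^2 \lesssim \gamma_t$, where $\gamma_t$ is the maximum information gain after $t$ rounds for the kernel $k_f$. Since $\sigma_i^f(\xbm_{i+1})$ is non-increasing in $i$ along the sequence actually queried (the posterior variance at a fixed point never increases as more data arrive, and the newly queried point is where CEI's own exploration acts), a pigeonhole/averaging argument over the window $i\in[\tfrac t2-1,t]$ gives a $t_k$ in that window with $(\sigma_{t_k}^f(\xbm_{t_k+1}))^2 \lesssim \gamma_t/t$, i.e. $\sigma_{t_k}^f(\xbm_{t_k+1}) \lesssim \sqrt{\gamma_t/t}$. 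Plugging this into~\eqref{eqn:cei-convg-nonoise-1} yields $r_t = \mathcal{O}(\sqrt{\gamma_t/t})$ (the $t^{-1}$ term being dominated).

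Next I would substitute the known bounds on $\gamma_t$ for the two kernel families: $\gamma_t = \mathcal{O}(\log^{d+1}(t))$ for the SE kernel and $\gamma_t = \mathcal{O}(t^{\frac{d}{2\nu+d}}\log^{\frac{2\nu}{2\nu+d}}(t))$ for the Mat\'ern kernel with smoothness $\nu>\tfrac12$ (Srinivas et al.; see also the refined Mat\'ern bound). For SE this gives $r_t = \mathcal{O}(t^{-1/2}\log^{(d+1)/2}(t))$; for Mat\'ern it gives $r_t = \mathcal{O}\big(t^{-1/2}\cdot t^{\frac{d}{2(2\nu+d)}}\log^{\frac{\nu}{2\nu+d}}(t)\big) = \mathcal{O}\big(t^{-\frac{\nu}{2\nu+d}}\log^{\frac{\nu}{2\nu+d}}(t)\big)$, which are exactly the claimed rates in~\eqref{eqn:CEI-convg-rate-2}. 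The constraint contributes only the harmless multiplicative factor $\tfrac{1}{\Phi(-B_c)}$ already isolated in Theorem~\ref{theorem:CEI-convg-nonoise}, so no additional work is needed on the constraint side.

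The main obstacle I anticipate is making the monotonicity/averaging step fully rigorous: $\sigma_i^f(\xbm_{i+1})$ is the posterior standard deviation \emph{after} $i$ observations evaluated at the \emph{next} CEI query, and this quantity need not be monotone in $i$ because the query point itself changes. The clean statement one actually has is $\sum_{i}(\sigma_{i}^f(\xbm_{i+1}))^2\lesssim\gamma_t$ over all rounds, which only guarantees a good $t_k$ somewhere in $[1,t]$, not necessarily in the window $[\tfrac t2-1,t]$. To get a $t_k$ in that window I would instead run the telescoping sum on the sub-block of rounds $i\in[\lceil t/2\rceil-1,t]$ and bound its contribution by the incremental information gain $\gamma_t-\gamma_{\lceil t/2\rceil-2}\le\gamma_t$, so that the average over this block of length $\Theta(t)$ is again $\mathcal{O}(\gamma_t/t)$; pigeonhole then delivers the required $t_k$ in the window. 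Verifying that Theorem~\ref{theorem:CEI-convg-nonoise}'s $t_k$ and this information-theoretic $t_k$ can be taken to be the same index — or, more simply, re-deriving~\eqref{eqn:cei-convg-nonoise-1} so that it holds for \emph{every} $t_k$ in the window rather than just one — is the technical point that needs care, after which the rest is routine substitution.
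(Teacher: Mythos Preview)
Your proposal is correct and follows the same route as the paper: bound $\sum_i \sigma_i^f(\xbm_{i+1})$ by $\sqrt{C_\gamma t\gamma_t^f}$ (via comparison with the noisy posterior variance, Lemmas~\ref{lem:variancebound} and~\ref{lem:sigma-noise}), pigeonhole to locate a good index, and substitute the known rates for $\gamma_t^f$. The paper resolves your ``same $t_k$'' worry exactly as in your option~(a), by taking $k=[t/3]$ so that the window $[k,3k]$ contains $2k+1$ indices while each of the two bad sets (indices with $f_i^+-f_{i+1}^+\ge 2B_f/k$, and indices with $\sigma_i^f(\xbm_{i+1})\ge\sqrt{t\gamma_t^f}/k$) has at most $k$ elements, guaranteeing some $t_k$ that satisfies both conditions simultaneously.
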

\paragraph{Sketch of Proof for Theorem~\ref{prop:EI-convg-rate-nonoise-gamma}.} The proof follows similar steps to that of Theorem~\ref{theorem:CEI-convg-nonoise} but further bounds $\sigma_{t_k}^f(\xbm_{t_k+1})$ using $\gamma_t^f$.   
To do so, we first recognize that the bound using $\gamma_t^f$ (Lemma~\ref{lem:variancebound}) is established in the noisy case where the posterior standard deviation has a different form as in~\eqref{eqn:GP-post-2}. Using Lemma~\ref{lem:sigma-noise}, we can establish that the noise-free posterior standard deviation also satisfies $\sum_{i=0}^{t-1} \sigma_{i}^f(\xbm_{i+1}) \leq  \sqrt{C_{\gamma} t \gamma_t^f}$. Then, from Lemma~\ref{lem:EI-sigma-gamma}, we can find a small enough $\sigma_{i}^f(\xbm_{i+1})$. Specifically, choose $k=[t/3]$, where $[x]$ denotes the largest integer smaller than $x$. Thus, we have $3k\leq t\leq 3(k+1)$. Then, there exists $k\leq t_k \leq 3k$ such that $f^+_{t_k}-f^+_{t_k+1}\leq \frac{2B_f}{k}$ and $\sigma_{t_k}^f(\xbm_{t_k+1})\leq \frac{\sqrt{t \gamma_t^f}}{k} $.
The rest of the proof follows from that of Theorem~\ref{prop:EI-convg-rate-nonoise-gamma}.
\\

\begin{remark}[Improved rate of convergence]\label{remark:improvedrate}
   As mentioned above, the rates in Corollary \ref{prop:EI-convg-rate-nonoise} are the same as the known convergence rates for EI in~\citet{bull2011convergence}. 
    Meanwhile, the rates in Theorem~\ref{prop:EI-convg-rate-nonoise-gamma} is an improvement over those of~\citet{bull2011convergence} for SE kernel with $d\geq 3$  and Matérn kernels with $d\geq 3, \nu\geq \frac{d}{d-2}$. 
    To achieve this, we applied techniques from regret bound analysis on noise-free UCB~\citep{lyu2019efficient} that allows us to use maximum information gain to bound the sum of  $\sigma_{t}^f(\xbm_{t+1})$. 
    Then, we use our techniques in the proof of Theorem~\ref{theorem:CEI-convg-nonoise} to bound an individual $\sigma_{t_k}^f(\xbm_{t_k+1})$. 
    In~\cite{bull2011convergence}, the $\sigma_{t_k}^f(\xbm_{t_k+1})$ is bounded by the Taylor expansion of the kernel functions. Therefore, the rates of decrease are limited to quadratic terms for both SE and Matérn kernels, since their Taylor expansions around $0$ for $\norm{\xbm-\xbm'}_2$ are quadratic at best. On the other hand, maximum information gain can lead to tight bounds on $\gamma_t^f$ that take advantages of the spectral properties of the kernels~\citep{vakili2021information,iwazaki2025improved}. Hence, using $\gamma_t^f$ to bound $\sigma_{t_k}^f(\xbm_{t_k+1})$ can produce a faster rate. 
    As the open question raised in~\cite{vakili2022open} gets answered, further improvement of the convergence rates is possible, \textit{e.g.}, using techniques from~\cite{iwazaki2025gaussian}.
\end{remark}

\subsection{Simple regret upper bound under Bayesian objective assumption}\label{se:cei-bayesian}
In this section, we present the simple regret upper bound for CEI under the Bayesian objective assumptions. We again use the maximum information gain to derive the simple regret upper bound. 

\begin{assumption}\label{assp:GP}
 The bound constraint set $C\subset [0,r]^d$ is compact and convex.
	The objective function $f$ is sampled from  $GP(0,k_f(\xbm,\xbm'))$.
    Further, the objective function $f$ is assumed to be Lipschitz continuous (of 1-norm) with Lipschitz constant $L_f$ with probability $\geq 1-da_fe^{L_f^2/b_f^2}$ for some constants $a_f>0$ and $b_f>0$.
    The kernels satisfy $k_f(\xbm,\xbm')\leq 1$ and  $k_f(\xbm,\xbm)= 1$.
    The constraint function $c$ remains in the RKHS of $k_c$, similarly to the frequentist setting. 

\end{assumption}
In the remaining of this section we will work under Assumption~\ref{assp:GP}.

\paragraph{Technical Challenges under Bayesian  Assumptions.}
In addition to the challenges in the frequentist setting, the bounds on EI in the Bayesian setting are not available in current literature, to the best of our knowledge. Starting from the confidence interval on $|f(\xbm)-\mu_t^f(\xbm)|$, we derive the bounds on $|I_t^f(\xbm)-EI_t^f(\xbm)|$ with high probability, an important step towards the bound on $r_t$ . Noticeably, under the Bayesian setting, the bounds are satisfied with a given probability, \textit{e.g.}, $1-\delta$, where $\delta \in (0,1)$. 

The simple regret upper bound is given in the following theorem.
\begin{theorem}\label{theorem:CEI-convg-noise}
   Let $\beta=2\log(6c_{\alpha}/\delta)$ and $\beta_t=2\log(3\pi_t/\delta)$, where $c_{\alpha} = \frac{1+2\pi}{2\pi}$ and $\pi_t=\frac{\pi^2t^2}{6}$. Under Assumption~\ref{assp:GP}, the CEI algorithm leads to the simple regret upper bound  
	\begin{equation} \label{eqn:cei-convg-noise-1}
		\centering
		\begin{aligned}
          r_t  \leq&  c_{\tau}(\beta)\frac{1}{\Phi(
          -B_c)}\left[\frac{4M_f}{t-2}+\frac{2\beta^{1/2}_t}{t-2}\sqrt{C_{\gamma}t\gamma_t^f} +(0.4+\beta^{1/2})\sigma_{t_k}^f(\xbm_{t_k+1})\right],\\
		\end{aligned}
	\end{equation} 
	for some $t_k\in [\frac{t}{2}-1,t]$, $c_{\tau}( \beta)=\frac{\tau(\beta^{1/2})}{\tau(-\beta^{1/2})}$, and constant $M_f>0$ with probability $\geq 1-\delta$. 
\end{theorem}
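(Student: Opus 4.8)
}
The plan is to transport the architecture of the proof of Theorem~\ref{theorem:CEI-convg-nonoise} to the Bayesian setting, replacing the \emph{deterministic} RKHS confidence bounds on $f$ by Gaussian-process posterior concentration and carrying the resulting failure probabilities through a single union bound, while keeping the RKHS treatment of the constraint $c$ unchanged. First I would fix the events on which the argument runs. Since $f\sim GP(0,k_f)$ with $k_f\le 1$ and, by Assumption~\ref{assp:GP}, $f$ is Lipschitz with high probability, a Borell--TIS/discretization estimate yields a constant $M_f$ with $\sup_{\xbm\in C}|f(\xbm)|\le M_f$, hence the telescoping bound $\sum_{i\ge 1}(f^+_{i-1}-f^+_i)\le 2M_f$. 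Standard GP-posterior concentration (in the spirit of the noise-free UCB analysis of~\citet{lyu2019efficient}) then gives two further events: (i) at the single fixed point $\xbm^*$, $|f(\xbm^*)-\mu_t^f(\xbm^*)|\le\beta^{1/2}\sigma_t^f(\xbm^*)$ for all $t$, with the \emph{constant} $\beta=2\log(6c_\alpha/\delta)$; (ii) at the queried points, which are $\mathcal{F}_t$-measurable so that $f(\xbm_{t+1})-\mu_t^f(\xbm_{t+1})\mid\mathcal{F}_t$ is Gaussian with variance $(\sigma_t^f(\xbm_{t+1}))^2$, the bound $|f(\xbm_{i+1})-\mu_i^f(\xbm_{i+1})|\le\beta_i^{1/2}\sigma_i^f(\xbm_{i+1})$ for all $i$, obtained by summing failure probabilities $\delta/(3\pi_i)$ over $i$ (using $\sum_i\pi_i^{-1}=1$). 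For $c$ I keep the deterministic interval $|c(\xbm)-\mu_t^c(\xbm)|\le B_c\sigma_t^c(\xbm)$ of Lemma~\ref{lem:rkhs-bound-nonoise}. With $\delta$ split across these events, everything below holds with probability $\ge 1-\delta$.

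On this event I would follow Theorem~\ref{theorem:CEI-convg-nonoise} step by step. A pigeonhole applied to the telescoping sum over a window of length $\sim t/2$ produces an index $t_k\in[t/2-1,t]$ with $f^+_{t_k}-f^+_{t_k+1}\le\frac{4M_f}{t-2}$, and monotonicity of $f^+$ gives $r_t\le r_{t_k}=I_{t_k}^f(\xbm^*)$. Using event (i) at $\xbm^*$, the pointwise Bayesian improvement-to-EI bound of Theorem~\ref{prop:EI-convg-rate-gamma} — the analogue of Lemma~\ref{lem:EI-ratio-bound} with $B_f$ replaced by $\beta^{1/2}$ — gives $I_{t_k}^f(\xbm^*)\le c_\tau(\beta)\,EI_{t_k}^f(\xbm^*)$. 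The multiplicative structure of CEI then yields $EI_{t_k}^f(\xbm^*)=P_{t_k}(\xbm^*)^{-1}EI^C_{t_k}(\xbm^*)\le P_{t_k}(\xbm^*)^{-1}EI^C_{t_k}(\xbm_{t_k+1})\le P_{t_k}(\xbm^*)^{-1}EI_{t_k}^f(\xbm_{t_k+1})$, while the deterministic interval for $c$ at $\xbm^*$ together with $c(\xbm^*)\le 0$ gives $P_{t_k}(\xbm^*)=\Phi(-\mu_{t_k}^c(\xbm^*)/\sigma_{t_k}^c(\xbm^*))\ge\Phi(-B_c)$. To bound $EI_{t_k}^f(\xbm_{t_k+1})$ I would write it as $\sigma_{t_k}^f(\xbm_{t_k+1})\,\tau(z_{t_k}^f(\xbm_{t_k+1}))$, use monotonicity of $\tau$ and $\tau(u)\le (u)_+ + \phi(0)$ with $\phi(0)<0.4$, and feed in the improvement bound together with the GP intervals: this peels off a $\frac{4M_f}{t-2}$ term, a $(0.4+\beta^{1/2})\sigma_{t_k}^f(\xbm_{t_k+1})$ term, and a cumulative confidence contribution $\sum_{i=0}^{t-1}\beta_i^{1/2}\sigma_i^f(\xbm_{i+1})\le\beta_t^{1/2}\sqrt{C_\gamma t\gamma_t^f}$ (via the maximum-information-gain bound, Lemmas~\ref{lem:variancebound} and~\ref{lem:sigma-noise}, exactly as in the sketch of Theorem~\ref{prop:EI-convg-rate-nonoise-gamma}) which, divided by the window length, becomes $\frac{2\beta_t^{1/2}}{t-2}\sqrt{C_\gamma t\gamma_t^f}$. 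Multiplying through by $c_\tau(\beta)/\Phi(-B_c)$ gives~\eqref{eqn:cei-convg-noise-1}.

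The new and hardest ingredient is Theorem~\ref{prop:EI-convg-rate-gamma}: starting from the GP confidence interval on $|f(\xbm)-\mu_t^f(\xbm)|$, one must establish high-probability (pointwise and cumulative) bounds on $|I_t^f(\xbm)-EI_t^f(\xbm)|$ when $f$ is a GP \emph{sample} rather than an RKHS element — unlike the frequentist case, this control is only probabilistic and must carry an explicit confidence level $1-\delta$. Interlocked with it is the probability bookkeeping that keeps the confidence radius at the \emph{fixed} point $\xbm^*$ the constant $\beta^{1/2}$ while the radius at the \emph{data-dependent} queried points grows like $\beta_t^{1/2}$; it is precisely this distinction that produces the two qualitatively different $\sigma$-contributions in~\eqref{eqn:cei-convg-noise-1}. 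A secondary subtlety, absent from the unconstrained analysis of~\citet{bull2011convergence}, is that $\xbm_{t_k+1}$ may be infeasible, so $f^+$ need not decrease at step $t_k+1$; one must check that the POF factor inside $EI^C$, combined with the deterministic interval for $c$, still lets the pigeonhole argument control $EI^C_{t_k}(\xbm_{t_k+1})$. I would expect the same constants as in Theorem~\ref{theorem:CEI-convg-nonoise} to reappear under $B_f\mapsto\beta^{1/2}$ and $2B_f\mapsto 2M_f$, with the new $\gamma_t^f$-term being the price of Bayesian (rather than deterministic) control of $f$.
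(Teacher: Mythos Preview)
Your overall architecture matches the paper's: boundedness of $f$ via $M_f$, the Bayesian $I$-to-$EI$ ratio bound $I_{t_k}^f(\xbm^*)\le c_\tau(\beta)\,EI_{t_k}^f(\xbm^*)$ (this is proved in the paper as a separate lemma, the analogue of Lemma~\ref{lem:EI-ratio-bound}; your reference to Theorem~\ref{prop:EI-convg-rate-gamma} here is a mislabel), the CEI swap $EI_{t_k}^f(\xbm^*)\le P_{t_k}(\xbm^*)^{-1}EI_{t_k}^f(\xbm_{t_k+1})$, and $P_{t_k}(\xbm^*)\ge\Phi(-B_c)$ from the deterministic interval for $c$. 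The gap is in the pigeonhole step, which is also where the paper deviates from the frequentist proof.

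You pigeonhole on $\sum_i(f_i^+-f_{i+1}^+)\le 2M_f$ alone to fix $t_k$ with $f_{t_k}^+-f_{t_k+1}^+\le\frac{4M_f}{t-2}$, and then assert that bounding $EI_{t_k}^f(\xbm_{t_k+1})$ ``peels off'' a cumulative term $\sum_i\beta_i^{1/2}\sigma_i^f(\xbm_{i+1})$ divided by the window length. But once $t_k$ is fixed this way, the \emph{single} quantity $f_{t_k}^+-\mu_{t_k}^f(\xbm_{t_k+1})$ is controlled via event (ii) only by $\beta_{t_k}^{1/2}\sigma_{t_k}^f(\xbm_{t_k+1})$; no cumulative sum enters, and you get $(0.4+\beta_{t_k}^{1/2})\sigma_{t_k}^f(\xbm_{t_k+1})$ rather than the stated three terms. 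The paper instead runs the pigeonhole \emph{directly} on the nonnegative sequence $\max\{f_i^+-\mu_i^f(\xbm_{i+1}),0\}$: combining event (ii), Lemmas~\ref{lem:variancebound}--\ref{lem:sigma-noise}, and the telescoping bound gives $\sum_{i=0}^{t-1}\max\{f_i^+-\mu_i^f(\xbm_{i+1}),0\}\le 2M_f+\beta_t^{1/2}\sqrt{C_\gamma t\gamma_t^f}$ with probability $\ge 1-2\delta/3$, and a single pigeonhole on this sum over $[k,2k]$ with $k=[t/2]$ yields $t_k$ with $\max\{f_{t_k}^+-\mu_{t_k}^f(\xbm_{t_k+1}),0\}\le\frac{2M_f}{k}+\frac{\beta_t^{1/2}}{k}\sqrt{C_\gamma t\gamma_t^f}$. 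Feeding this into $EI_{t_k}^f(\xbm_{t_k+1})\le\max\{f_{t_k}^+-\mu_{t_k}^f(\xbm_{t_k+1}),0\}+0.4\,\sigma_{t_k}^f(\xbm_{t_k+1})$ produces~\eqref{eqn:cei-convg-noise-1} directly; this joint pigeonhole is precisely the ``more defined criterion'' for $t_k$ mentioned in the sketch. Your two-sum idea could be salvaged, but only with a window of length $\sim 2t/3$ (as in Theorem~\ref{prop:EI-convg-rate-nonoise-gamma}) so that a common index survives both counts, and the constants would then not match the statement. A smaller imprecision: your event (i), a constant-$\beta$ interval at $\xbm^*$ valid for \emph{all} $t$, is unachievable; the paper applies the $I$-to-$EI$ ratio lemma once at the random $t_k$, arguing that the choice of $t_k$ depends on no randomness after iteration $t_k$.
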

The constant $M_f$ is from Lemma~\ref{lem:f-bound-bayesian}.
The convergence rate is given in the next theorem.
\begin{theorem}\label{prop:EI-convg-rate-gamma}
Let $\beta=2\log(6c_{\alpha}/\delta)$ and $\beta_t=2\log(3\pi_t/\delta)$, where $c_{\alpha} = \frac{1+2\pi}{2\pi}$ and $\pi_t=\frac{\pi^2t^2}{6}$. Under Assumption~\ref{assp:GP}, the CEI algorithm leads to the convergence rates of 
	\begin{equation} \label{eqn:CEI-convg-rate-2}
		\centering
		\begin{aligned}
			\mathcal{O} \left(t^{-\frac{1}{2}}\log^{\frac{d+2}{2}}(t) \right) \ \text{and }\ \mathcal{O}\left(t^{\frac{-\nu}{2\nu+d}} \log^{\frac{2\nu+0.5d}{2\nu+d}}(t)\right),
		\end{aligned}
	\end{equation}
    for SE and Matérn kernels, respectively, with probability $\geq 1-\delta$. 
\end{theorem}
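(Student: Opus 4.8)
}
The plan is to take the explicit simple regret bound of Theorem~\ref{theorem:CEI-convg-noise}, which already holds with probability $\geq 1-\delta$, and substitute the known asymptotics of each of its three terms; since the bound itself holds with probability $\geq 1-\delta$, so will the resulting rate, with no further union bound needed. First I would observe that the prefactors $c_{\tau}(\beta)$, $1/\Phi(-B_c)$, $M_f$, $C_\gamma$, and $\beta=2\log(6c_\alpha/\delta)$ do not grow with $t$, while $\beta_t=2\log(3\pi^2 t^2/(6\delta))=\mathcal{O}(\log t)$, so $\beta_t^{1/2}=\mathcal{O}(\log^{1/2}(t))$. The first term $4M_f/(t-2)=\mathcal{O}(1/t)$ will turn out to be dominated by the others, since both of the remaining terms decay strictly slower than $1/t$ for the kernels considered.

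For the second term $\frac{2\beta_t^{1/2}}{t-2}\sqrt{C_\gamma t\gamma_t^f}$, I would rewrite it as $\mathcal{O}\bigl(\beta_t^{1/2}\sqrt{\gamma_t^f/t}\bigr)$ and plug in the sharp maximum-information-gain bounds referenced in Remark~\ref{remark:improvedrate}: $\gamma_t^f=\mathcal{O}(\log^{d+1}(t))$ for the SE kernel, and $\gamma_t^f=\mathcal{O}\bigl(t^{d/(2\nu+d)}\log^{2\nu/(2\nu+d)}(t)\bigr)$ for the Mat\'{e}rn kernel. For SE this gives $\mathcal{O}\bigl(\log^{1/2}(t)\cdot t^{-1/2}\log^{(d+1)/2}(t)\bigr)=\mathcal{O}\bigl(t^{-1/2}\log^{(d+2)/2}(t)\bigr)$. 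For Mat\'{e}rn, $\sqrt{\gamma_t^f/t}=\mathcal{O}\bigl(t^{-\nu/(2\nu+d)}\log^{\nu/(2\nu+d)}(t)\bigr)$, and multiplying by $\beta_t^{1/2}$ and using the identity $\frac{\nu}{2\nu+d}+\frac12=\frac{2\nu+0.5d}{2\nu+d}$ yields the stated $\mathcal{O}\bigl(t^{-\nu/(2\nu+d)}\log^{(2\nu+0.5d)/(2\nu+d)}(t)\bigr)$.

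For the third term $(0.4+\beta^{1/2})\sigma_{t_k}^f(\xbm_{t_k+1})$, I would reuse the device already invoked inside the proofs of Theorem~\ref{theorem:CEI-convg-noise} and Theorem~\ref{prop:EI-convg-rate-nonoise-gamma}: with the choice $k=[t/3]$, the pigeonhole argument applied to $\sum_{i=0}^{t-1}\sigma_i^f(\xbm_{i+1})\leq\sqrt{C_\gamma t\gamma_t^f}$ produces an index $t_k$ with $\sigma_{t_k}^f(\xbm_{t_k+1})\leq\sqrt{C_\gamma t\gamma_t^f}/k=\mathcal{O}\bigl(\sqrt{\gamma_t^f/t}\bigr)$. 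Since $\beta^{1/2}$ is a constant, this term is $\mathcal{O}\bigl(\sqrt{\gamma_t^f/t}\bigr)$, i.e.\ smaller by a factor $\log^{1/2}(t)$ than the second term, hence dominated. Summing the three bounds and keeping the leading order gives exactly the claimed rates, with probability $\geq 1-\delta$.

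The obstacle I anticipate is bookkeeping rather than conceptual: one must check that the logarithmic exponents assemble precisely as stated (in particular the Mat\'{e}rn identity above, and that $\gamma_t^f$ is invoked via the tight spectral bounds and not the cruder $\mathcal{O}(t^{(d+1)/(d+2)})$-type estimates), and that every high-probability event used in passing from Theorem~\ref{theorem:CEI-convg-noise} to the rate is already contained in the $(1-\delta)$-event there, so the confidence level is not further degraded. A secondary point is ensuring the noise-free posterior variances $\sigma_i^f(\xbm_{i+1})$ legitimately satisfy the information-gain sum bound $\sum_{i=0}^{t-1}\sigma_i^f(\xbm_{i+1})\leq\sqrt{C_\gamma t\gamma_t^f}$ used above, which is inherited from the ingredients already established for Theorem~\ref{theorem:CEI-convg-noise}.
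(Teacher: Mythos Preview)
Your approach is essentially the paper's, and the asymptotic bookkeeping (including the Mat\'{e}rn exponent identity and the observation that $\beta_t^{1/2}=\mathcal{O}(\log^{1/2}t)$ contributes the extra half-power of log) is exactly right. There is, however, one coordination issue in your framing that the paper handles explicitly and you gloss over.

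You say you will ``take the explicit simple regret bound of Theorem~\ref{theorem:CEI-convg-noise}'' and then, separately, pigeonhole on $\sum_i\sigma_i^f(\xbm_{i+1})$ with $k=[t/3]$ to control the third term. But the bound of Theorem~\ref{theorem:CEI-convg-noise} holds for a \emph{specific} $t_k\in[t/2-1,t]$, namely the one selected (with $k=[t/2]$) so that $\max\{f_{t_k}^+-\mu_{t_k}^f(\xbm_{t_k+1}),0\}$ is small. A second pigeonhole applied to the $\sigma$'s produces \emph{some} index with small posterior standard deviation, but there is no reason it is the same $t_k$. You cannot bound the first two terms at one index and the third at another.

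The fix is precisely what the paper does (and what the proof sketch of Theorem~\ref{prop:EI-convg-rate-nonoise-gamma}, which you cite, already illustrates): re-run the argument of Theorem~\ref{theorem:CEI-convg-noise} with $k=[t/3]$ and search for $t_k$ in the wider window $[k,3k]$ so that \emph{both} pigeonhole conditions — the one on $\max\{f_i^+-\mu_i^f(\xbm_{i+1}),0\}$ and the one on $\sigma_i^f(\xbm_{i+1})$ — hold simultaneously. Each condition fails at most $k$ times, so at most $2k$ indices in $[k,3k]$ fail at least one, and the window contains $2k+1$ indices, guaranteeing a common $t_k$. Once this single $t_k$ is fixed, your asymptotic substitutions go through verbatim and the $(1-\delta)$ probability is inherited without an additional union bound, exactly as you say.
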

\paragraph{Sketch of Proof for Theorem~\ref{theorem:CEI-convg-noise}.}
Recall that $f$ and $c$ are assumed conditionally independent in CEI. 
We start from the bound on the confidence interval for $f$: $|f(\xbm)-\mu_t^f(\xbm)|\leq \beta^{1/2}\sigma_t^f(\xbm)$, with probability $\geq 1-\delta $, where $\beta=2\log(1/\delta)$, as in Lemma~\ref{lem:fmu}. The confidence interval of $c$ remains the same as in the frequentist setting. 
These are well-known results \citep{srinivas2009gaussian}.
Then, we derive the subsequent bounds $|I_t^f(\xbm)-EI_t^f(\xbm)|\leq \sqrt{\beta}\sigma_t^f(\xbm)$, where $\beta=\max\{1.44,2\log(c_{\alpha}/\delta)\}$ and $c_{\alpha}=\frac{1+2\pi}{2\pi}$ with probability $\geq 1-\delta$ (Lemma~\ref{lem:IEIbound}).
Then, we prove the relationship in Lemma~\ref{lem:IEIbound-ratio-bay} that $ I_t^f(\xbm) \leq \frac{\tau(\sqrt{\beta})}{\tau(-\sqrt{\beta})} EI_{t}^f(\xbm)$ with probability $\geq 1-\delta$. We can now follow the general analysis framework in Section~\ref{se:cei} and Theorem~\ref{theorem:CEI-convg-nonoise} to obtain the simple regret upper bound under Bayesian objective assumptions, while choosing $t_k$ with a more defined criterion.  

\begin{remark}[Comparison to the frequentist setting]
    Comparing Theorem~\ref{prop:EI-convg-rate-nonoise-gamma} to Theorem~\ref{prop:EI-convg-rate-gamma}, the convergence rates in the frequentist and Bayesian settings are the same except for a $\log^{1/2}(t)$ term. This is partially because simple regret focuses on the best feasible solution $f_t^+$ and thus many of the parameters in Theorem~\ref{theorem:CEI-convg-nonoise} and~\ref{theorem:CEI-convg-noise} do not depend on $t$. 
\end{remark}

\begin{remark}[Multiple constraints]\label{remark:multiconstraint}
    As mentioned in Section~\ref{se:introduction}, our results can be readily applied to CEI with multiple constraints for both frequentist and Bayesian settings. Consider $m$ constraints $c_i(\xbm) \leq 0, i=1,\dots,m$. Assuming conditional independence of the constraints, the CEI function is $EI^C_{t}(\xbm) = \Pi_{i=1}^m P_t^i(\xbm) EI_{t}^f(\xbm) = \Pi_{i=1}^m \Phi\left(\frac{-\mu_{t}^{c_i}(\xbm)}{\sigma_{t}^{c_i}(\xbm)}\right) EI_{t}^f(\xbm)$, where $P_t^i$ is the probability of feasibility function of constraint $c_i(\xbm)\leq 0$, and $\mu_{t}^{c_i}(\xbm)$ and $\sigma_{t}^{c_i}(\xbm)$ are the posterior mean and standard deviation for $c_i$, respectively. By making the assumption that each constraint function lies in its corresponding RKHS of the kernel $k_{c_i}$, we have $|c_i(\xbm)-\mu_t^{c_i}(\xbm)|\leq B_{c_i}\sigma_t^{c_i}(\xbm)$, where $B_{c_i}$ is the upper bound of RKHS norm associated with kernel $k_{c_i}$ and function $c_i$. 
    We can then apply the analysis framework in this paper to obtain an upper bound similar to that of Theorem~\ref{theorem:CEI-convg-nonoise}, 
    where the term $\frac{1}{\Phi(
          -B_c)}$ is replaced with  $\Pi_{i=1}^m\frac{1}{\Phi(
         -B_{c_i})}$. We note that in the Bayesian objective setting, to ensure probability $1-\delta$, the parameter $\beta$ needs to increase with the number of constraints as well, \textit{e.g.}, $\beta=2\log((m+5)c_{\alpha}/\delta)$.
\end{remark}

\begin{remark}[Extension to the noisy setting]\label{remark:noisy}
Extending our analysis to the noisy setting is non-trivial, and we discuss the associated challenges for noisy objective and constraint functions separately. A noisy constraint function introduces additional complications in defining feasibility. If only noisy observations of the constraint values are available, the notion of a feasible sample and the definition of $f_t^+$ becomes ambiguous. As a result, major modifications to the CEI algorithm are required to appropriately handle the uncertainty introduced by noise.
    
For the noisy objective function, CEI can be adapted similarly to the noisy EI formulation by treating the best feasible noisy observation as the incumbent. However, to the best of our knowledge, a theoretical guarantee on the simple regret bound for the noisy unconstrained setting remains unavailable. Recent work by \citet{wang2025convergence} provides a framework for deriving noisy simple regret bounds based on the best observed value, $r_t^s = y_t^+ - f(\xbm_t)$, which can be extended to CEI. Specifically, by defining $r_t^s$ as the simple regret for CEI with $y_t^+$ denoting the best feasible noisy observation, a similar proof strategy as in Theorem~\ref{theorem:CEI-convg-noise} yields an analogous upper bound. In the Bayesian setting with i.i.d. Gaussian noise on the objective and noise-free constraint observations, the convergence rate of the upper bound on $r_t^s$ can be obtained. However, we note that given the noise, $r_t^s$ is possibly negative.  
\end{remark}

\begin{remark}[Infeasible initial sample]
It is well known that  CEI requires initial feasible sample~\citep{gardner2014}. That is, $f_t^+$ exists from the initial samples so that the CEI calculation can proceed. 
Methods proposed to address this issue typically employ separate strategies when no feasible samples are available and revert to the standard CEI formulation once feasibility is established~\citep{lin2024multi,letham2019constrained}. In addition, introducing a tolerance parameter in the constraint can further mitigate this problem by allowing near-feasible points when the degree of violation is small.  
\end{remark}

\begin{remark}[Tolerance in constraints]
In gradient-based optimization methods, a tolerance for constraint violation is often used to improve the performance and flexibility of algorithms~\cite{ipopt,optimization}. Motivated by this, we introduce a tolerance parameter $\lambda \ge 0$, where a point $\xbm$ is considered feasible if $c(\xbm) \le \lambda$ and infeasible otherwise. The corresponding CEI with tolerance is defined as
$EI^C_{t}(\xbm,\lambda) = P_t(\xbm,\lambda) EI_{t}^f(\xbm) = \Phi\left(\frac{\lambda-\mu_{t}^c(\xbm)}{\sigma_{t}^c(\xbm)}\right) EI_{t}^f(\xbm).$ Clearly, the standard CEI formulation is recovered when $\lambda = 0$. 

The simple regret bound is affected by $\lambda$ and should lead to $\frac{1}{\Phi(\lambda-B_c)}$ in place of $\frac{1}{\Phi(-B_c)}$. 
In fact, we can replace $\frac{1}{\Phi(\lambda-B_c)}$ with  $1/{\Phi\left(\frac{\lambda}{\sigma^c_{t_k}(x^*)}-B_c\right)}$, which is time-varying. One can follow the proof of Theorem~\ref{theorem:CEI-convg-nonoise} to obtain this term, which emerges from the confidence interval of $c$ at $t_k$, $x^*$ and $c(x^*)\leq0$. 

As the sample iteration increases, the inclusion of $\sigma^c_{t_k}(x^*)$ is important in balancing $-B_c$ that can lead to a large simple regret upper bound.  We explain the intuition below. 
As $t\to\infty$, $t_k\to\infty$ and $k\to\infty$. We know $\sigma_t^f(\xbm_{t+1})\to 0$, and hence $\sigma_{t_k}^f(\xbm_{t_k+1})\to 0$ and $r_t\to 0$. That is, the simple regret upper bound of CEI with $\lambda>0$ converges.  
Thus, $\xbm_t$ approaches at least one of the optimal solutions. Suppose without losing generality, $\xbm_t\to \xbm^*$. Then, by definition $\sigma_{t_k}^c(\xbm^*)\to 0$.
Consequently, we should have $\frac{\lambda}{\sigma^c_{t_k}(x^*)}\rightarrow \infty$ for $\lambda>0$. Then, we have $\Phi\left(\frac{\lambda}{\sigma^c_{t_k}(x^*)}\right)\to 1$. Therefore, $1/{\Phi(\frac{\lambda}{\sigma^c_{t_k}(x^*)}-B_c)}\to 1$ and $B_c$ does not affect the simple regret upper bound asymptotically. We note that the convergence rate of CEI with tolerance remains similar since it is dominated by the maximum information gain of $f$.
\end{remark}




\section{Numerical experiments}\label{se:exp}
Although this paper is primarily theoretical, we conduct numerical experiments to support the theoretical results. We apply the CEI algorithm to eight synthetic problems that are randomly generated from RKHS of kernels and GP priors, and five benchmark problems commonly used in the CBO literature. These numerical experiments are not intended to demonstrate superior performance over the state-of-the-art CBO algorithms. Instead, they serve as empirical evidence for the theoretical analysis presented in this work. All experiments are conducted on M1 (16GB memory)\footnote{Codes are available in https://github.com/Haowei-Wang/Convergence-Rates-of-Constrained-Expected-Improvement.}.

\subsection{Synthetic problems}
In this section, we study objective and constraint functions drawn from reproducing kernel Hilbert spaces (RKHSs) as well as from Gaussian process priors with  Matérn  ($\nu = 2.5$) and squared exponential (SE) kernels, across input dimensions $d \in {2,4}$. The domain is the hypercube $[0,1]^d$. For RKHS cases (the frequentist setting), the functions are generated with a similar approach to~\cite{chowdhury2017kernelized}. Specifically, both objective $f(x)$ and constraint functions $c(x)$ are generated by sampling from the RKHS associated with a chosen kernel (Matérn/SE kernels with a length scale of $0.2$). Each function is constructed as a weighted sum of kernel evaluations at $100$ randomly selected basis points, with weights drawn from a standard normal distribution. Formally, the function takes the form $f(x) = \sum_{i=1}^{n} \alpha_i k(x, X_i)$ , where $k$  is the kernel, $X_i$ are basis points, and $\alpha_i$ are random coefficients; $c(x)$ is generated similarly. For the GP cases (the Bayesian setting),  the functions are generated with an approach similar to~\cite{srinivas2009gaussian}. Specifically, we uniformly choose 1000 points in the design space and sample randomly from a multivariate Gaussian distribution defined by the GP prior with the chosen kernel. 

For each synthetic problem, we conducted 100 independent trials. The number of initial design is set to $10d$, and 50 optimization iterations were performed for all cases. We plotted the log-log curve of simple regret against the number of iterations in Figure \ref{fig:synthetic}.  In all cases, we consistently observed sublinear convergence patterns, which align well with our theoretical guarantees.  

\begin{figure}[!h]
\subfigure[RKHS, Matérn, $d=2$]{
	\centering
	\includegraphics[width=0.225\textwidth]{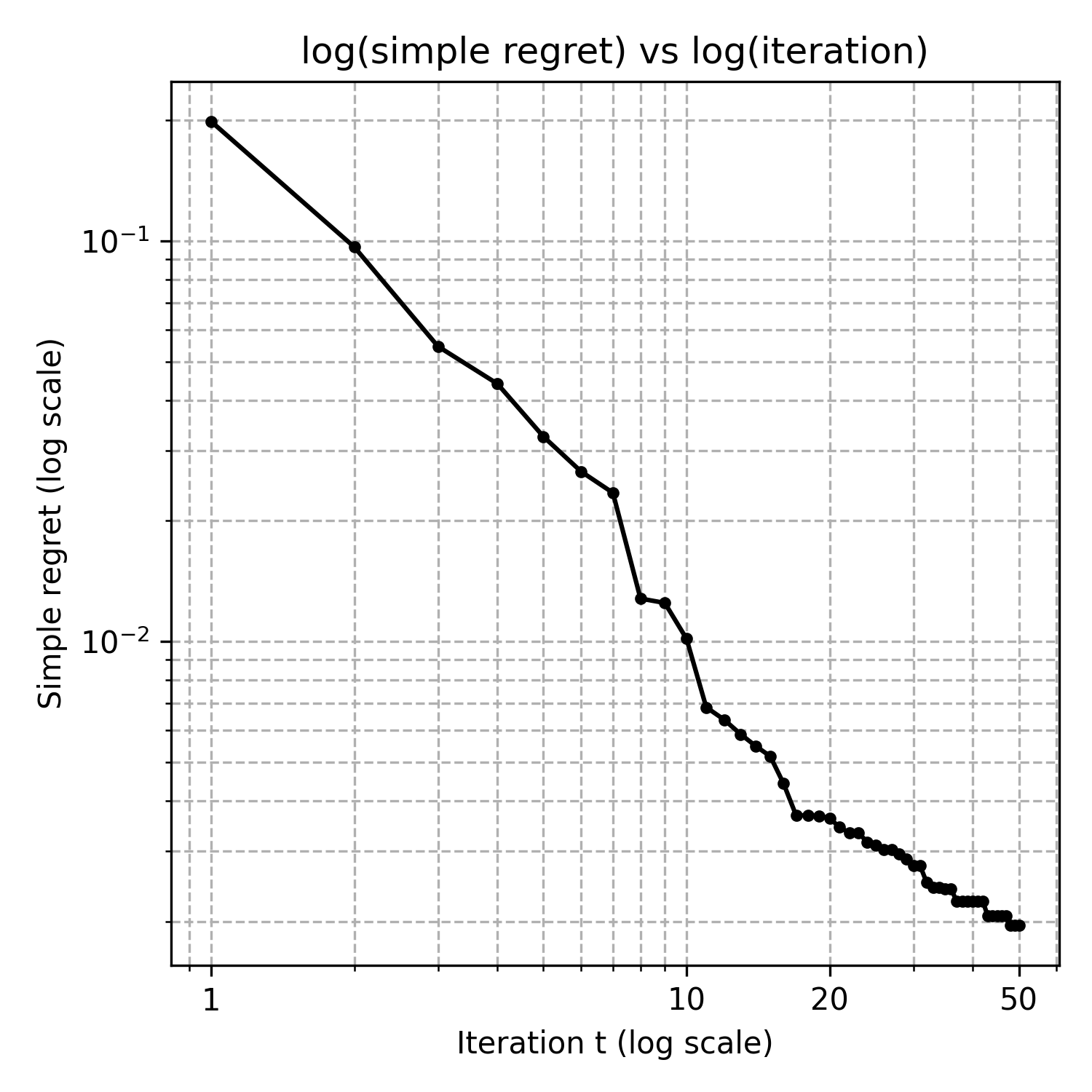}
	\label{fig:s1}
}
\subfigure[RKHS, Matérn, $d=4$]{
	\centering
	\includegraphics[width=0.225\textwidth]{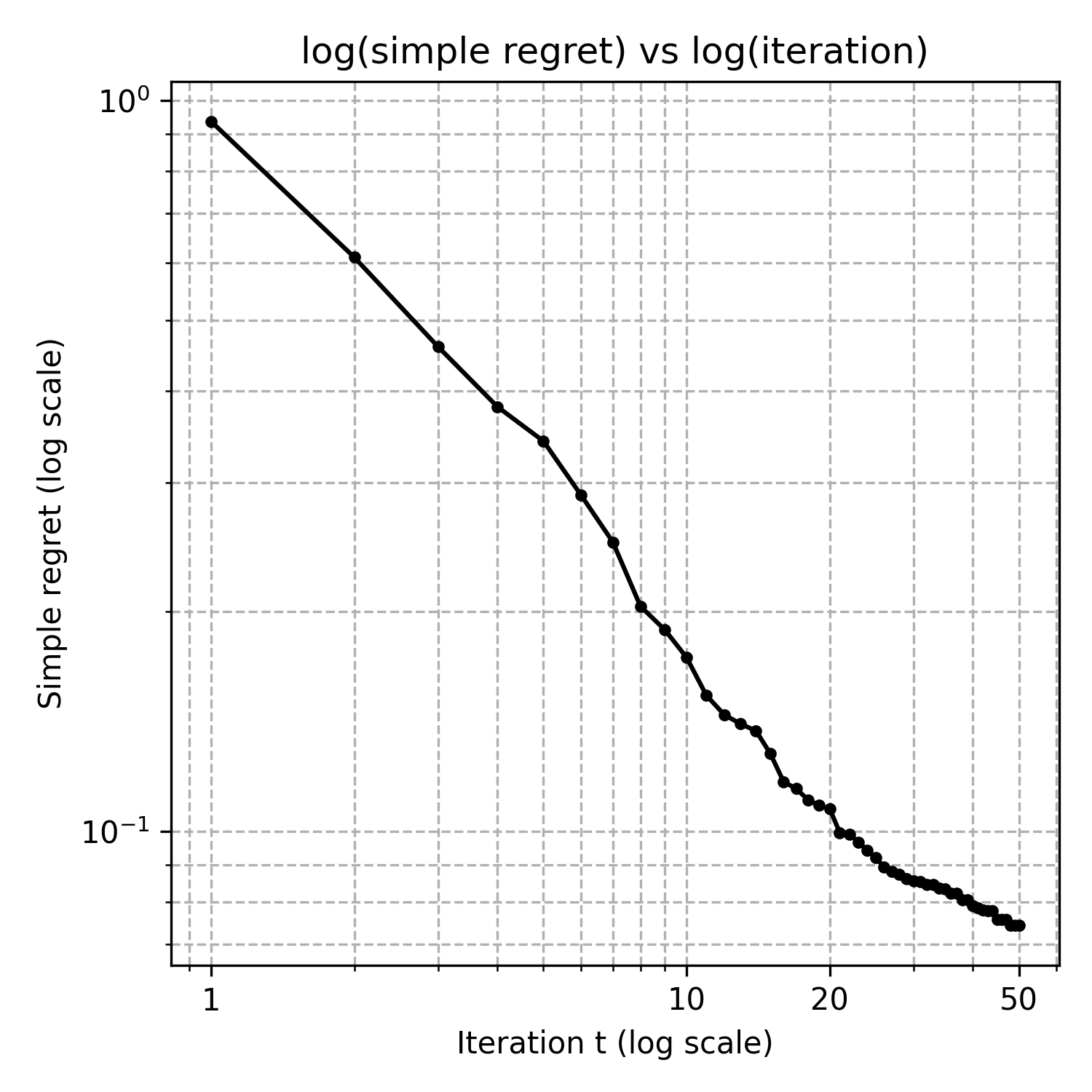}
	\label{fig:s2}
	}
\subfigure[RKHS, SE, $d=2$]{
	\centering
	\includegraphics[width=0.225\textwidth]{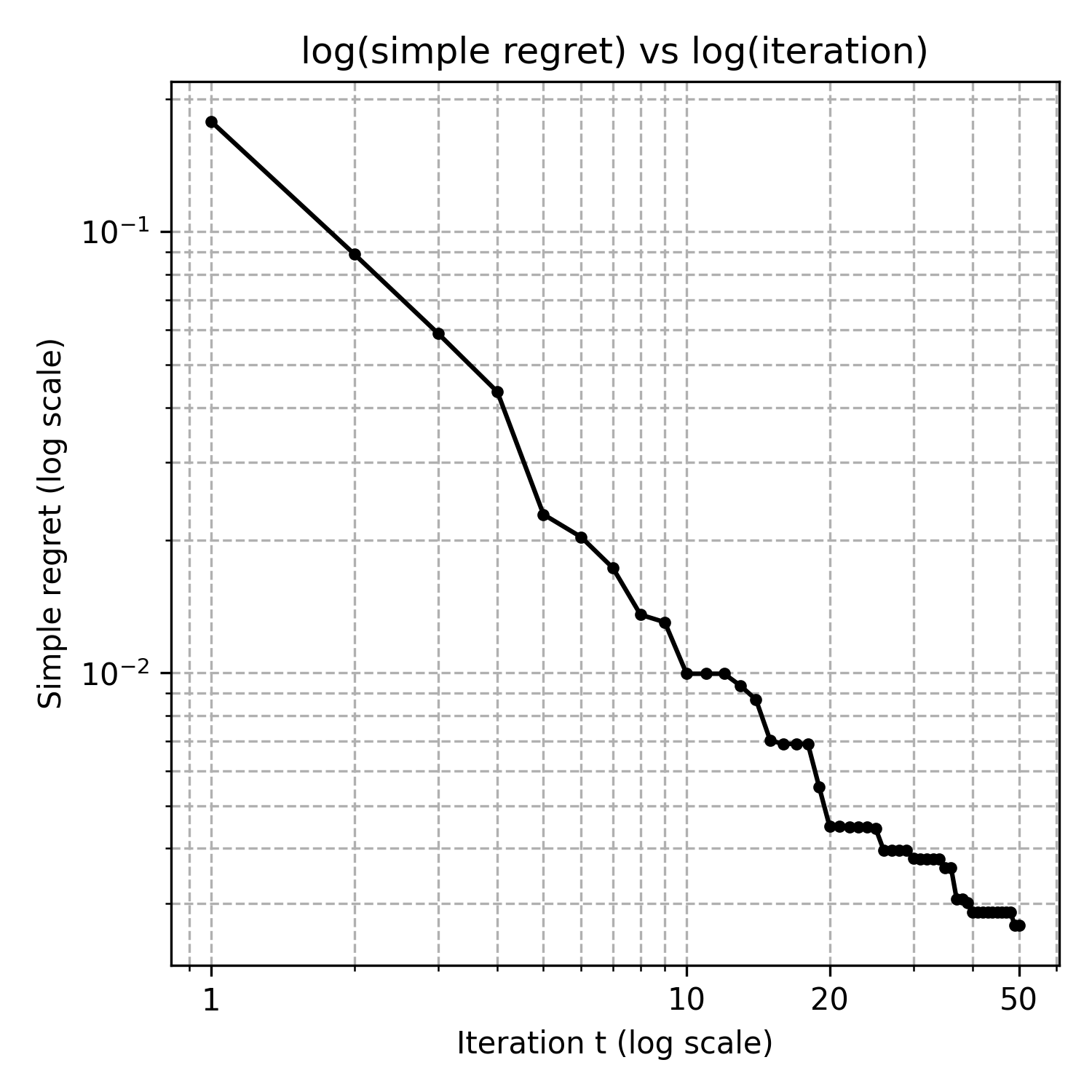}
	\label{fig:3}
}
\subfigure[RKHS, SE, $d=4$]{
	\centering
	\includegraphics[width=0.225\textwidth]{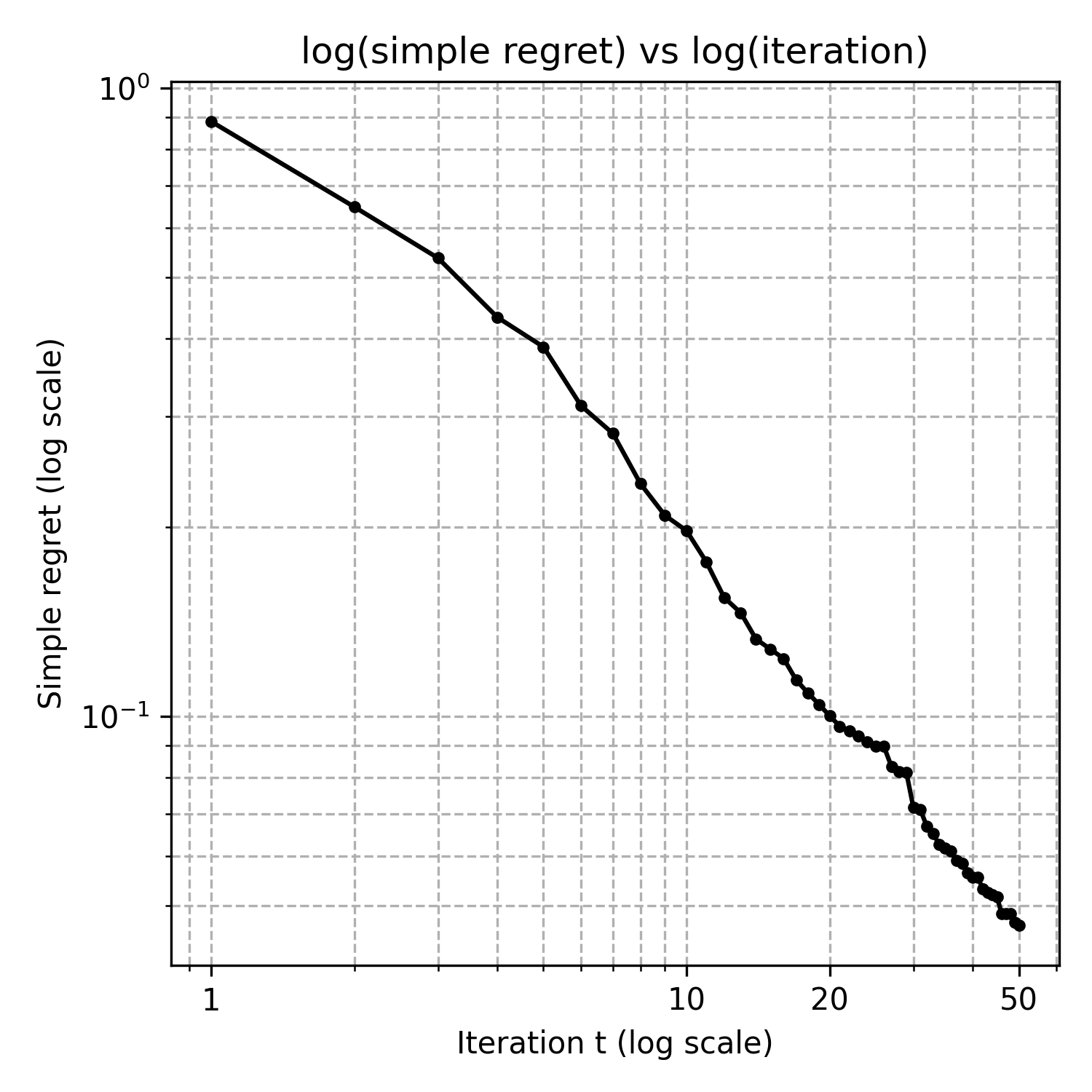}
	\label{fig:4}
}
\subfigure[GP, Matérn, $d=2$]{
	\centering
	\includegraphics[width=0.225\textwidth]{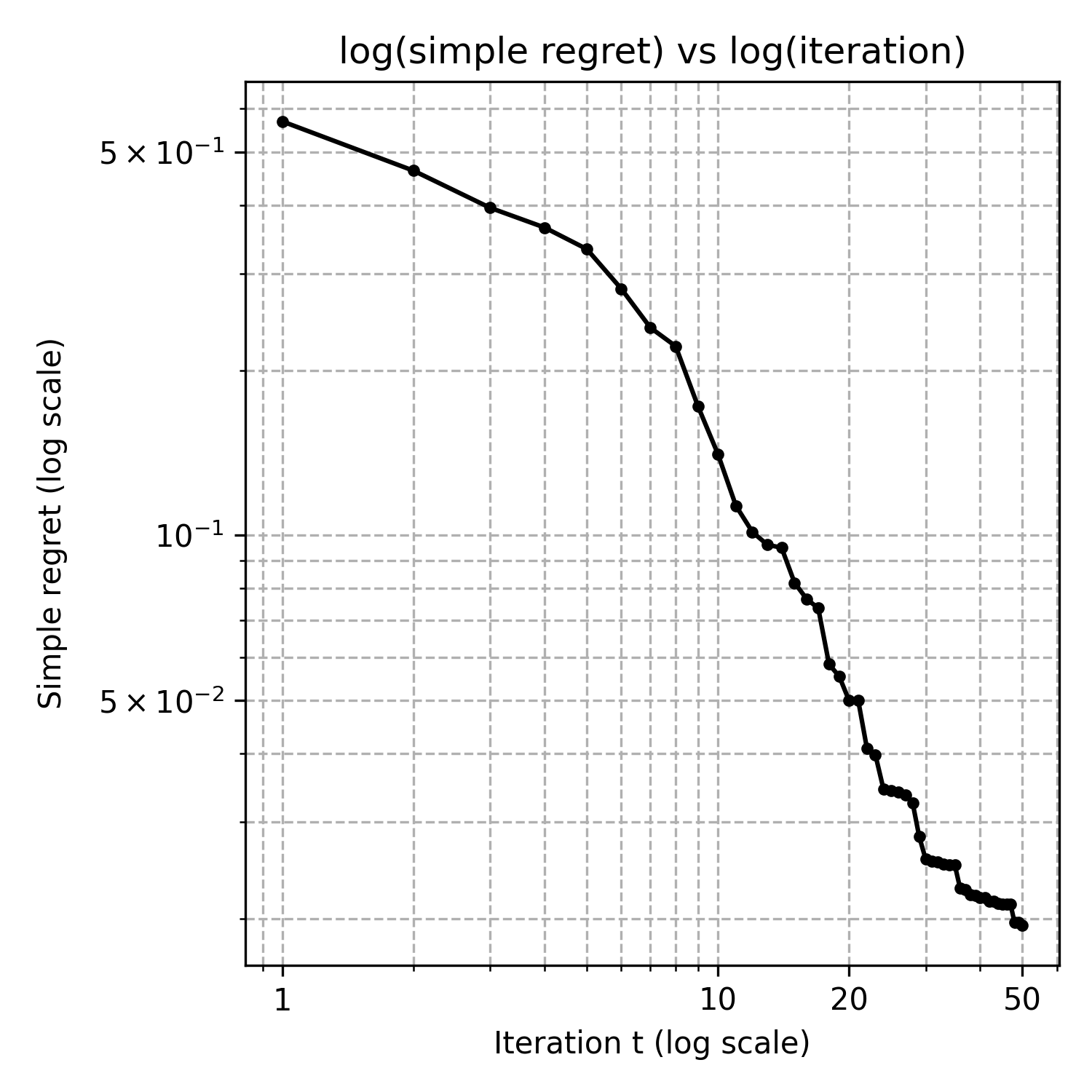}
	\label{fig:5}
}
\subfigure[GP, Matérn, $d=4$]{
	\centering
	\includegraphics[width=0.225\textwidth]{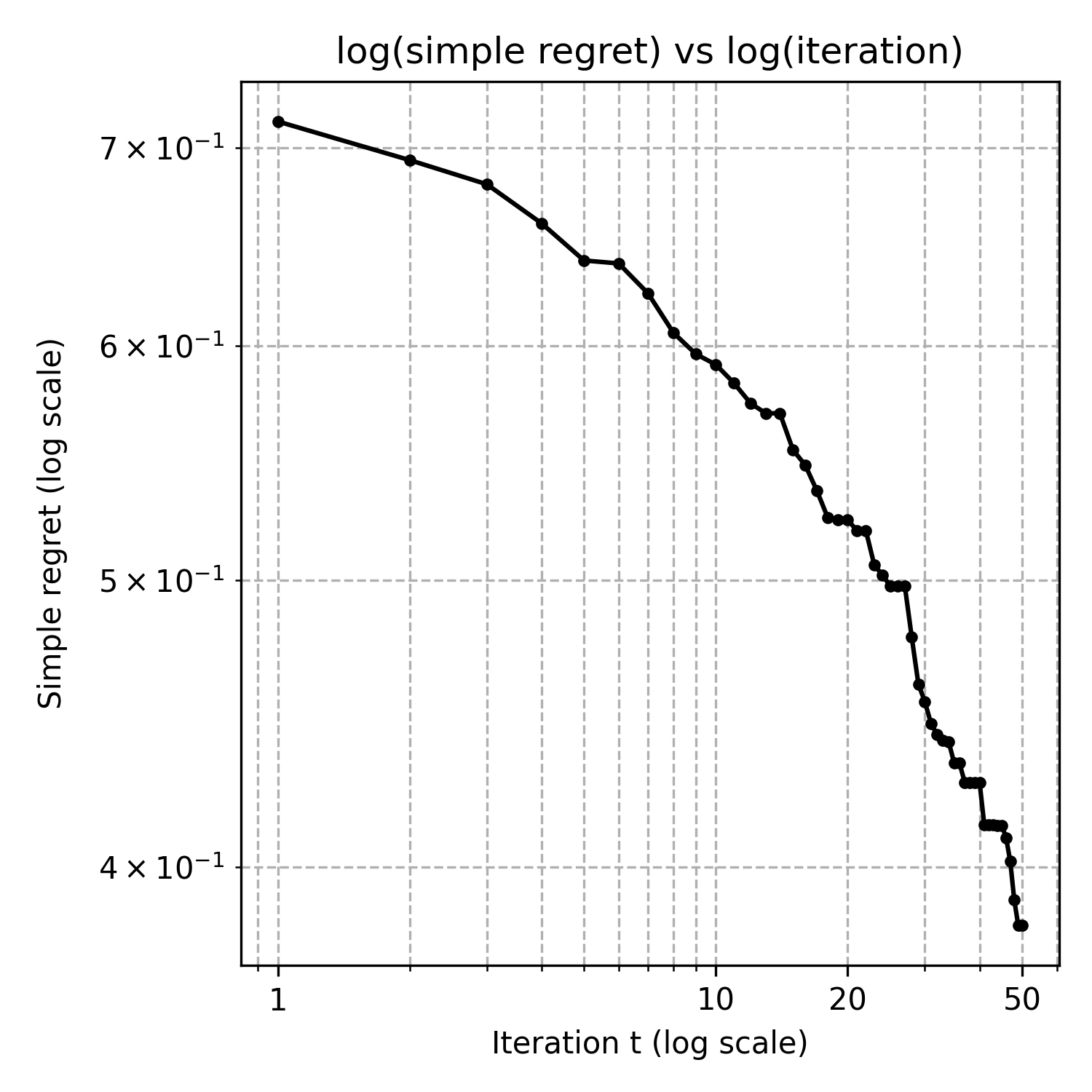}
	\label{fig:s2}
	}
\subfigure[GP, SE, $d=2$]{
	\centering
	\includegraphics[width=0.225\textwidth]{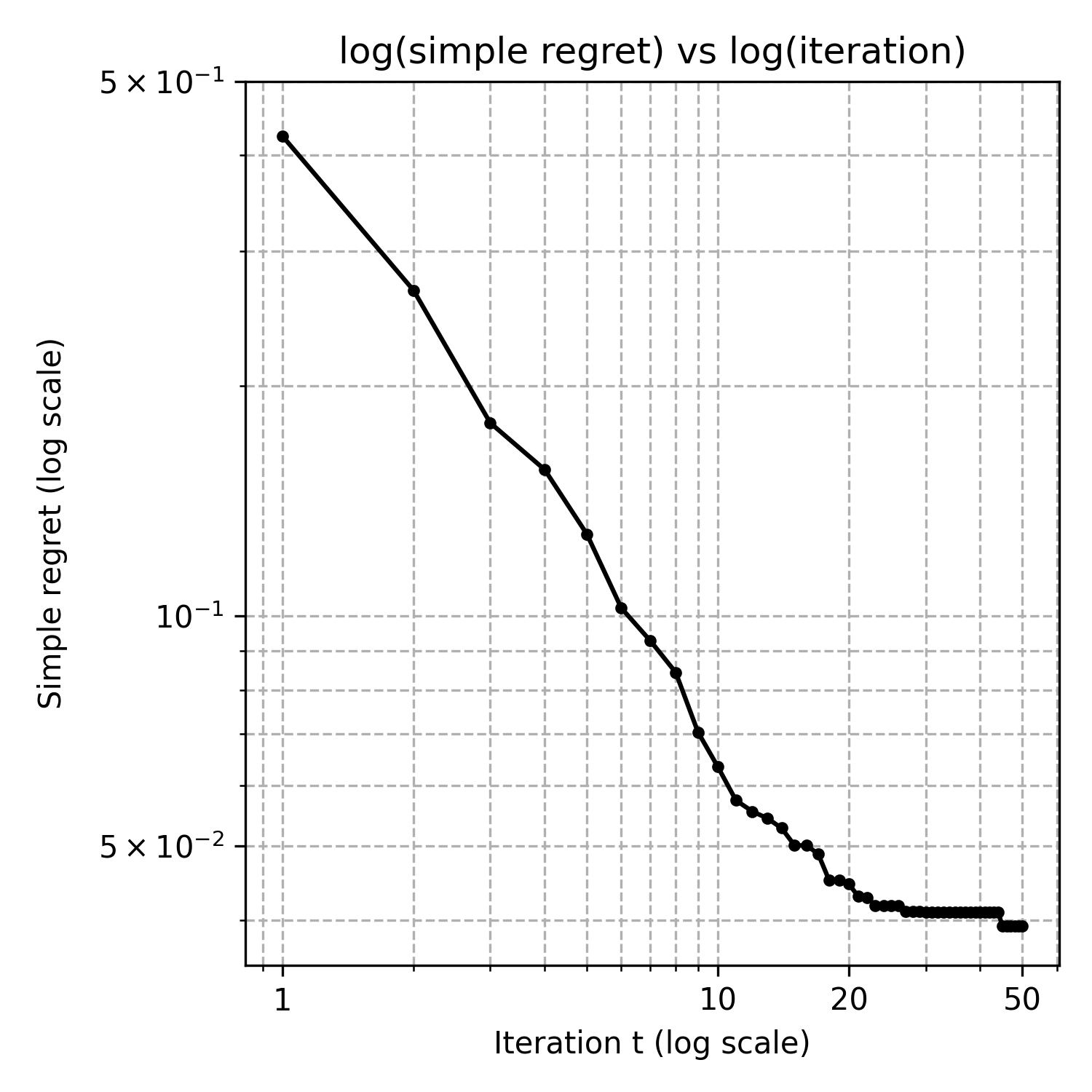}
	\label{fig:3}
}
\subfigure[GP, SE, $d=4$]{
	\centering
	\includegraphics[width=0.225\textwidth]{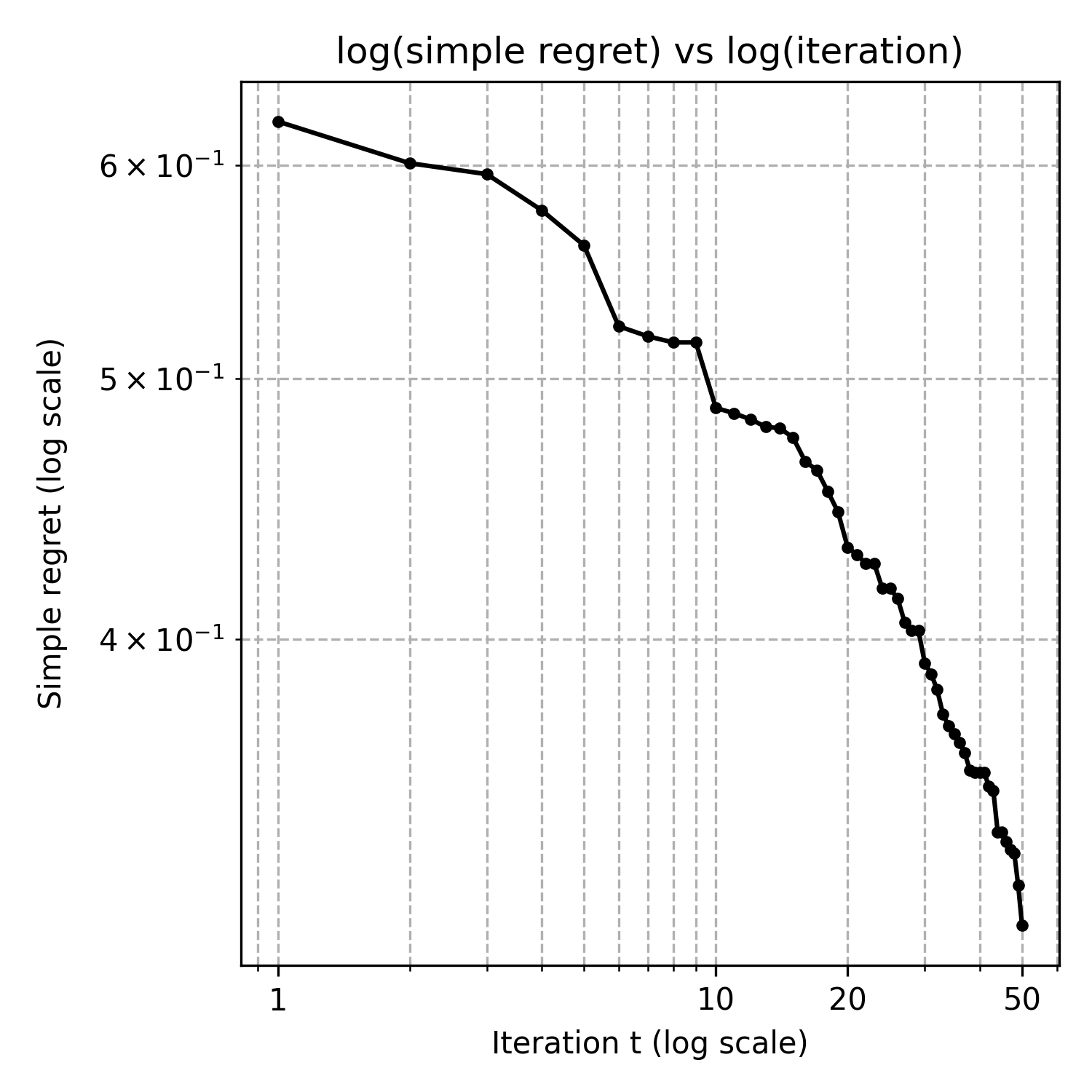}
	\label{fig:4}
}
\caption{The log-log plots for simple regret vs optimization iterations of CEI for the synthetic problems.}
\label{fig:synthetic}
\end{figure}

\subsection{Test problems}\label{se:testproblem}

Next, we evaluate simple regret of CEI on five commonly used test problems in the literature of CBO.  Specifically, Problem 1 tests performance in a small feasible region, which was previously studied in~\citet{gardner2014,ariafar2019admmbo}. 
Problem 2 includes multiple constraints and local minimums, which has been used in~\citet{gramacy2016modeling,hernandez2015predictive}. 
Problem 3 is a four-dimensional problem, previously studied in~\citet{picheny2016bayesian,ariafar2019admmbo}. 
Problem 4 is the six-dimensional Hartmann problem, previously tested in~\citet{letham2019constrained}.
Problem 5 is the Rosenbrock function, where the global minimum lies in a narrow region. The mathematical formulations of the five functions are presented in Appendix~\ref{se:exp_contour}. For two-dimensional problems, we also include the contour plots of the objective and constraint functions in Appendix~\ref{se:exp_contour}. The SE kernel is used for the GP modeling (similar performance is observed for the Matérn kernel) and the hyper-parameters are estimated by a standard maximum likelihood method.

For each test problem, we conducted 100 independent trials with  different random initial designs. When CEI fails to identify a feasible sample, we adopt the same heuristic strategy as in~\citet{letham2019constrained}. The numerical results are summarized in Figure \ref{fig:numerical}. The solid line represents the median of the simple regret, and the dotted lines represent the 25th percentile and 75th percentile of the simple regret, respectively. From the figures, we observe that CEI consistently reduces simple regret, aligning with the asymptotic convergence theories established in this paper. Accross all problems, the simple regret converges to $0$ quickly. The 25th percentile and 75th percentile results demonstrate the good statistical properties of CEI.

\begin{figure}[!h]
\subfigure[Problem 1: a small feasible region problem]{
	\centering
	\includegraphics[width=0.47\textwidth]{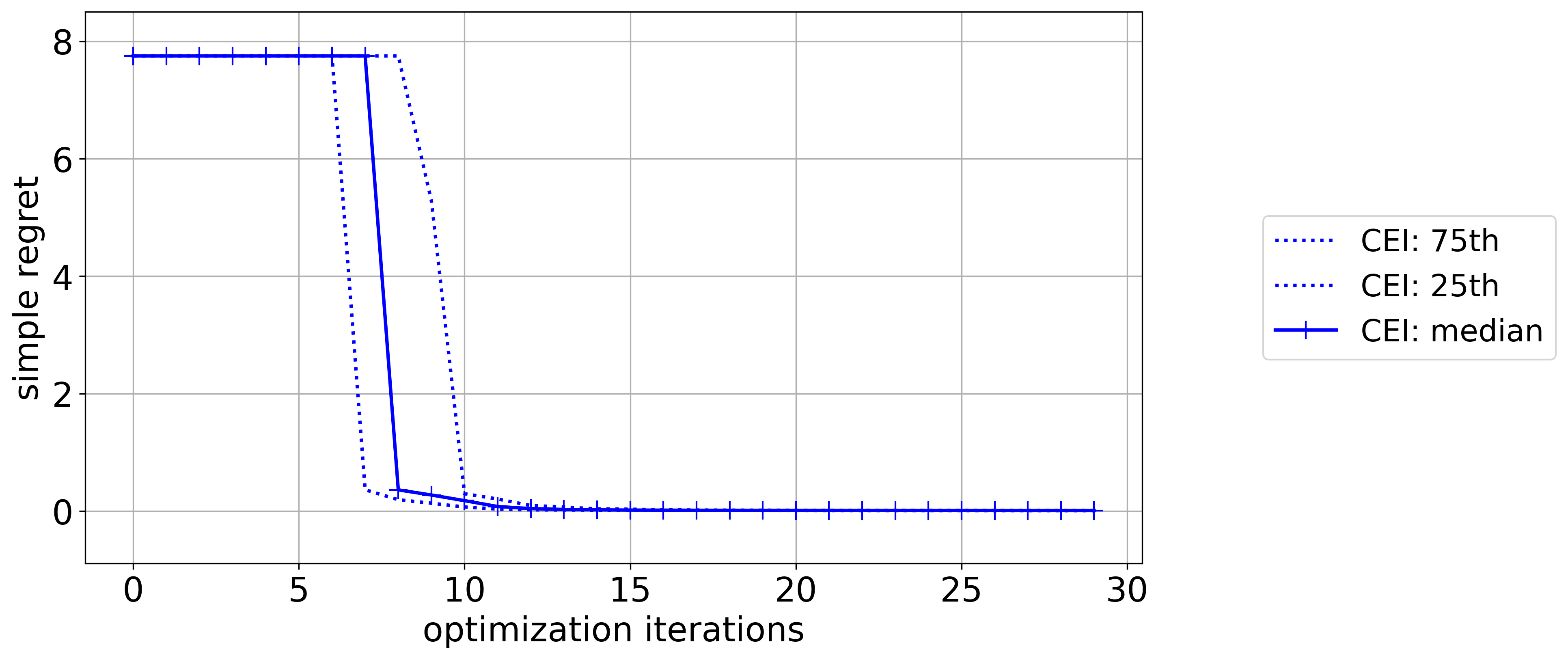}
	\label{fig:ex1}
}
\subfigure[Problem 2: a multiple-constraints problem]{
	\centering
	\includegraphics[width=0.47\textwidth]{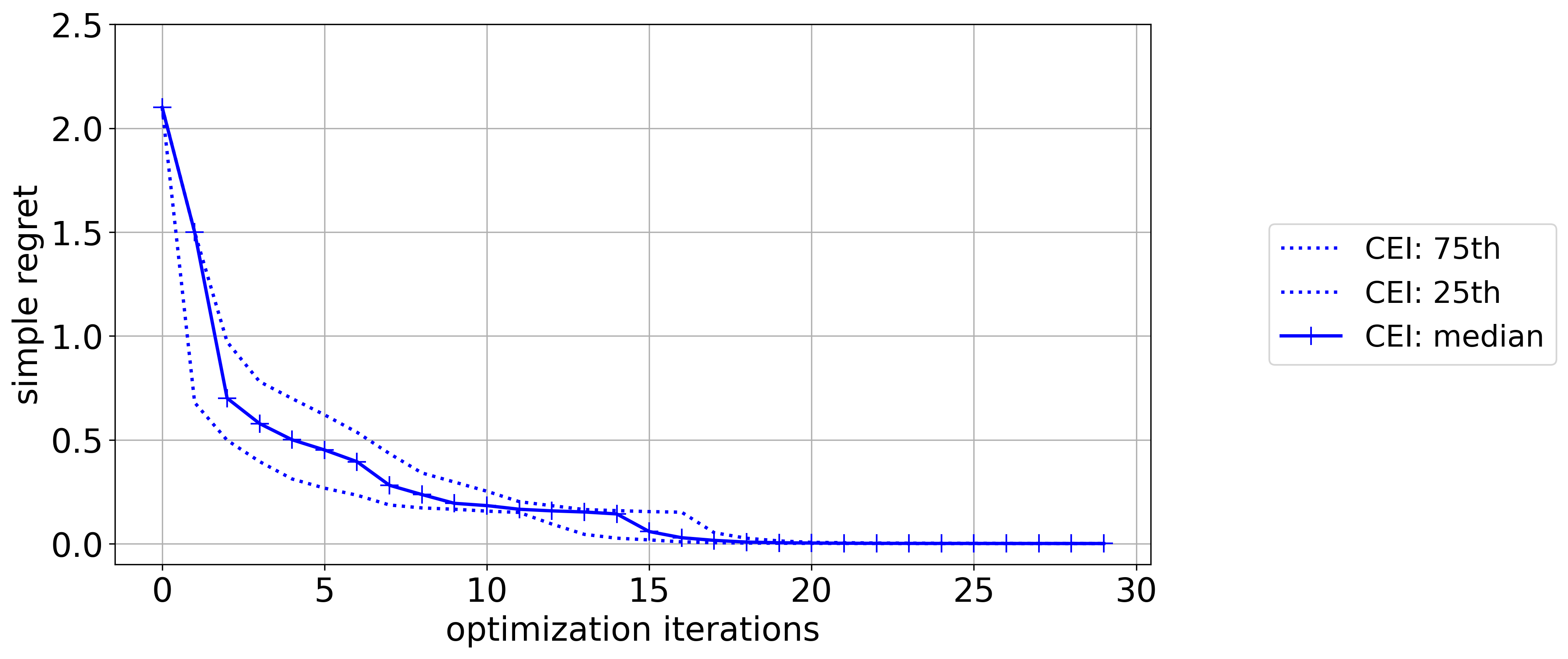}
	\label{fig:ex2}
	}
\subfigure[Problem 3: a four-dimensional problem]{
	\centering
	\includegraphics[width=0.47\textwidth]{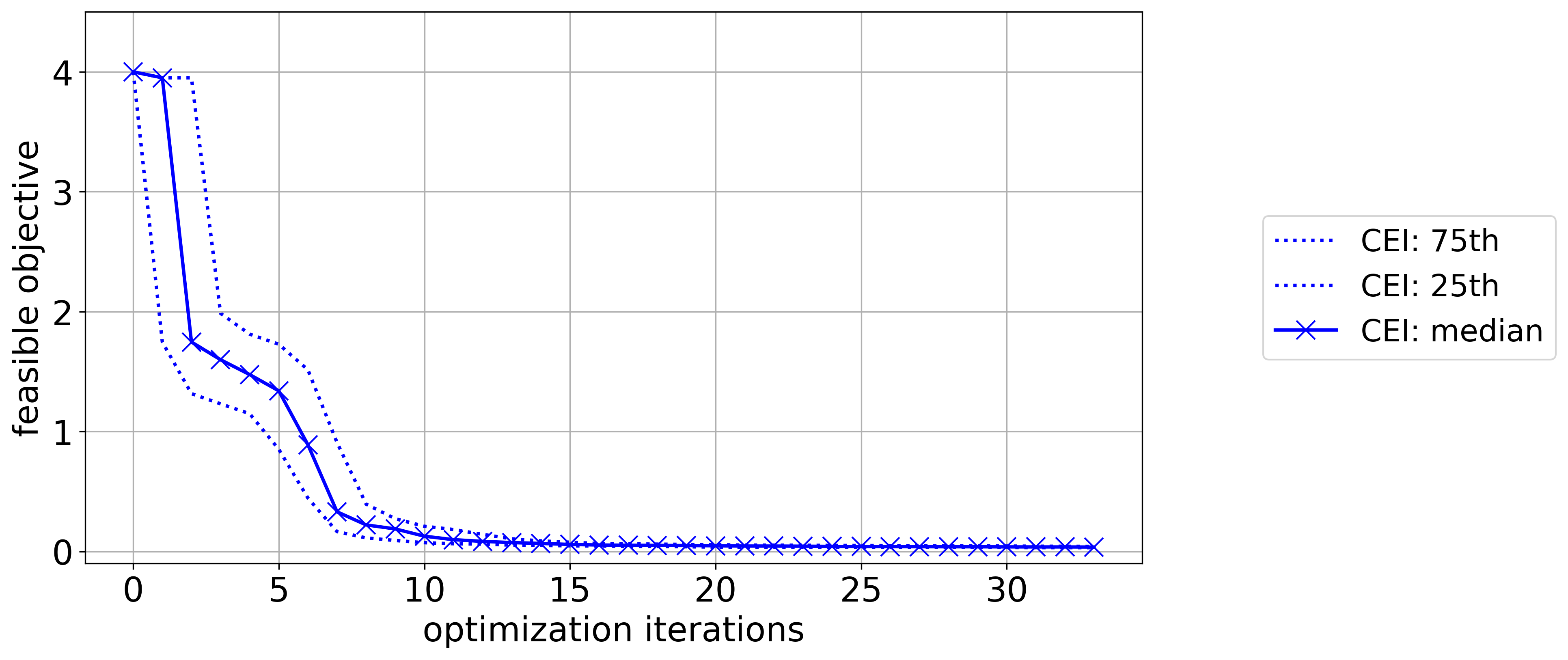}
	\label{fig:ex3}
}
\subfigure[Problem 4: Hartmann6 function]{
	\centering
	\includegraphics[width=0.47\textwidth]{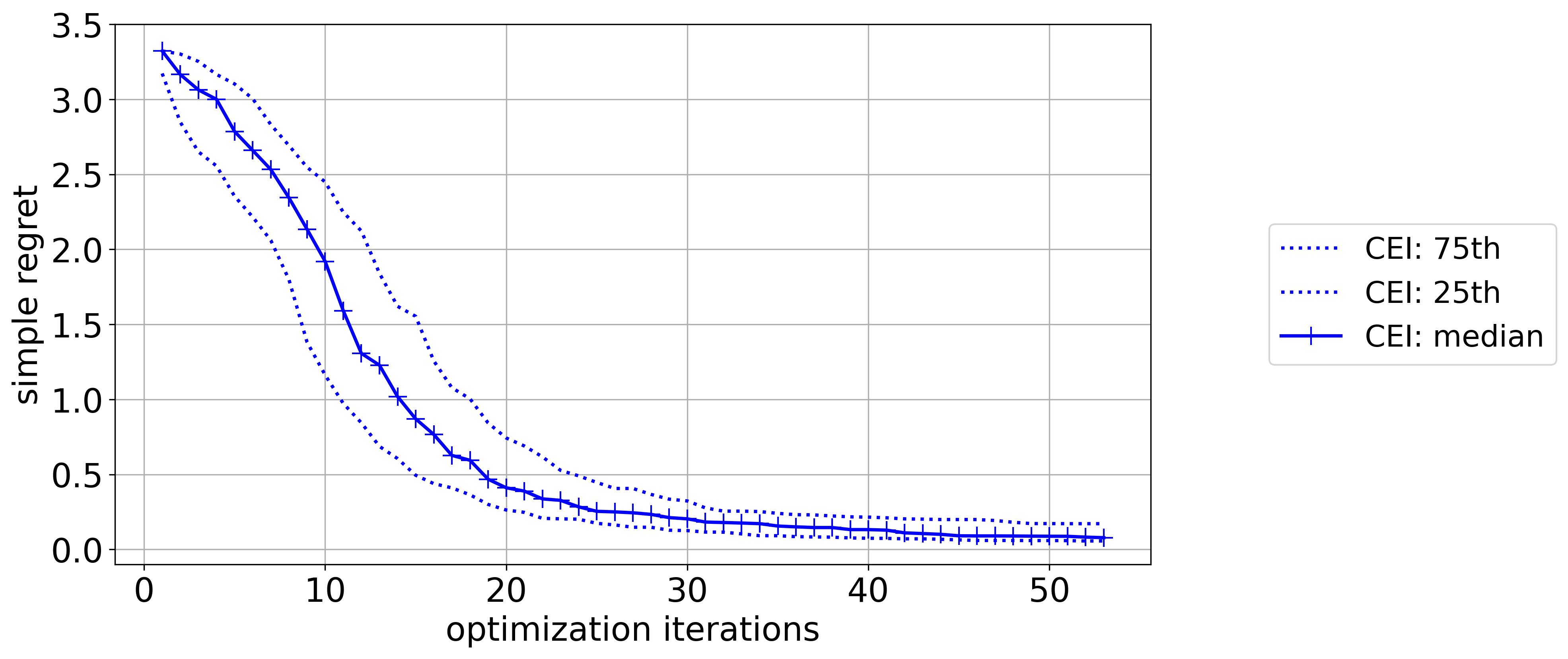}
	\label{fig:ex4}
}
\subfigure[Problem 5: Rosenbrock function (in log scale due to large range of objective values.)]{
	\centering
	\includegraphics[width=0.47\textwidth]{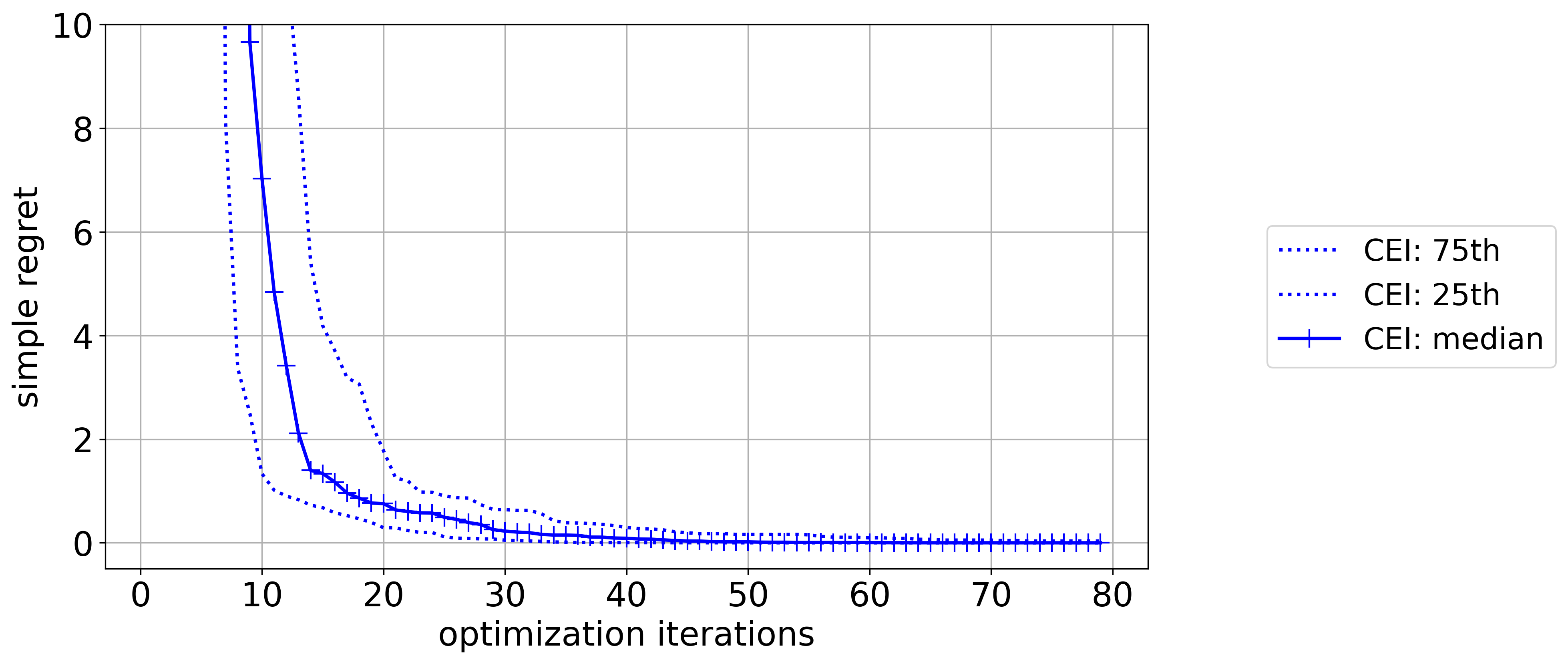}
	\label{fig:ex5}
}
\caption{Simple regret of CEI for five test problems.}
\label{fig:numerical}
\end{figure}

\section{Conclusions}\label{se:conclusion}
In this paper, we studied the simple regret upper bounds of the CEI algorithm, one of the most widely adopted CBO methods. Under both frequentist setting and Bayesian objective assumptions, we establish for the first time the convergence rates for CEI. Our results provide theoretical support and validation for the empirical success of CEI. 


\medskip
\clearpage

\setcitestyle{numbers}
\bibliographystyle{plainnat}
\bibliography{bibliography}

\newpage
\appendix
\onecolumn

\section{Background information and preliminary results}\label{se:app-bck}
\subsection{Information gain}\label{se:informationgain}

To obtain state-of-the-art simple regret upper bound, we utilize the information theory results that are well-established in previous literature~\citep{srinivas2009gaussian,chowdhury2017kernelized,vakili2021information}.
Using $f$ as the example, let $A\subset C$ denote a set of sampling points. Assume that the observations are noisy at sample points with $y_A = f(\xbm_A)+\epsilon_A$ at $\xbm \in A$, where $\epsilon_A\sim\mathcal{N}(0,\sigma^2)$ denotes the independent and identically distributed Gaussian noises.  
The maximum information gain is defined as follows.
\begin{definition}\label{def:infogain}
	Given $\xbm_A$ and $\ybm_A$, the mutual information between $f$ and $\ybm_A$ is $I(\ybm_A;f_A)=H(\ybm_A)-H(\ybm_A|f_A)$, where $H$ denotes the entropy. The maximum information gain $\gamma_T^f$ after $T$ samples is $\gamma_T^f = \max_{A\subset C,|A|=T} I(\ybm_A;\fbm_A)$. 
\end{definition}

The rate of increase for $\gamma^f_t$ depends on the property of the kernel.
For common kernels such as the SE kernel and the Matérn kernel, $\gamma^f_t$ has been studied in literature and
the state-of-the-art rates of $\gamma^f_t$ are summarized in Lemma~\ref{lem:gammarate}~\citep{vakili2021information,iwazaki2025improved}.
\begin{lemma}\label{lem:gammarate}
  For GP with $t$ samples, the SE kernel has $\gamma_t=\mathcal{O}(\log^{d+1}(t))$, and
  the Matérn kernel with smoothness parameter $\nu>0.5$ has $\gamma_t=\mathcal{O}(t^{\frac{d}{2\nu+d}}(\log^{\frac{2\nu}{2\nu+d}} (t)))$.
\end{lemma}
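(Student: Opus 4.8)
\textbf{Proof proposal for Lemma~\ref{lem:gammarate}.}

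The plan is to treat this as a citation-driven recollection of known spectral bounds on the maximum information gain, organized around the standard eigendecay argument. First I would recall the basic information-theoretic identity for GP regression: with i.i.d.\ Gaussian noise of variance $\sigma^2$, the information gain after observing $\ybm_A$ at $\xbm_A$ with $|A| = t$ equals $\tfrac{1}{2}\log\det(\Ibm_t + \sigma^{-2}\Kbm_t)$, where $\Kbm_t$ is the kernel Gram matrix. Maximizing over $A$ gives $\gamma_t = \tfrac12 \max_{|A|=t}\log\det(\Ibm_t + \sigma^{-2}\Kbm_t)$. The key structural fact is that this quantity is controlled by the eigenvalues $\{\lambda_j\}_{j\ge1}$ of the kernel integral operator on $C$ with respect to the underlying measure: by Mercer's theorem and a truncation argument, one splits the spectrum at an index $D$, bounds the contribution of the top $D$ eigenvalues by $\tfrac{D}{2}\log(1 + \sigma^{-2} t / D \cdot \text{(trace-type term)})$ and the tail $\sum_{j>D}\lambda_j$ by a term that is made small by choosing $D$ appropriately.

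The key steps, in order, would be: (i) state the operator eigenvalue decay rates for the two kernels---for the SE kernel the eigenvalues decay \emph{exponentially}, $\lambda_j = \mathcal{O}(\exp(-c\, j^{1/d}))$, while for the Mat\'ern kernel with smoothness $\nu$ the decay is \emph{polynomial}, $\lambda_j = \mathcal{O}(j^{-(2\nu+d)/d})$; (ii) invoke the refined truncation bound of \citet{vakili2021information}, which shows that for a kernel whose operator eigenvalues satisfy a given decay profile, $\gamma_t = \mathcal{O}\big( g(t) \log t\big)$ where $g(t)$ is essentially the truncation index $D(t)$ chosen to balance the head and tail contributions; (iii) carry out this balancing for each kernel. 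For the SE kernel, exponential decay forces $D(t) = \mathcal{O}(\log^d t)$, yielding $\gamma_t = \mathcal{O}(\log^{d+1}(t))$. For the Mat\'ern kernel, polynomial decay with exponent $(2\nu+d)/d$ gives $D(t) = \mathcal{O}\big(t^{d/(2\nu+d)} (\log t)^{-d/(2\nu+d)}\big)$ after optimization, which produces $\gamma_t = \mathcal{O}\big(t^{d/(2\nu+d)} \log^{2\nu/(2\nu+d)}(t)\big)$; here the restriction $\nu > 1/2$ ensures the exponent $d/(2\nu+d) < 1$, so the bound is genuinely sublinear, and also that the relevant Sobolev-embedding regularity conditions in \citet{vakili2021information,iwazaki2025improved} are satisfied. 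I would then note that these match the state-of-the-art rates and cite \citet{iwazaki2025improved} for the sharpest logarithmic factors.

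The main obstacle---and the reason this is really a \emph{survey} of existing results rather than a self-contained derivation---is establishing the precise eigenvalue decay of the Mercer operators and, more importantly, the tight matching between that decay and the $\log$-factor in $\gamma_t$. The naive Srinivas--Krause-style truncation argument loses a polynomial factor for the Mat\'ern kernel; obtaining the sharp $\log^{2\nu/(2\nu+d)}(t)$ exponent requires the more delicate analysis of \citet{vakili2021information} (and its correction/refinement in \citet{iwazaki2025improved}), which carefully accounts for the interaction between the eigenvalue tail and the determinant bound. Reproducing that argument in full would be lengthy, so in the paper I would present steps (i)--(iii) at the level of the balancing computation and defer the technical eigendecay estimates and the sharp truncation lemma to the cited works, since Lemma~\ref{lem:gammarate} is explicitly stated as a summary of known results that we use as a black box in Theorems~\ref{prop:EI-convg-rate-nonoise-gamma} and~\ref{prop:EI-convg-rate-gamma}.
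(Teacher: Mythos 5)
Your proposal matches the paper's treatment: Lemma~\ref{lem:gammarate} is stated there purely as a summary of the known maximum-information-gain bounds, cited to \citet{vakili2021information,iwazaki2025improved} with no self-contained proof, exactly as you propose to handle it, and your sketch of the eigendecay/truncation balancing behind those citations is accurate. No gap to report.
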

The maximum information gain $\gamma_t^c$ for the constraint function can be defined similarly.  

While $\gamma_t^f$ is defined in the noisy case, we can readily apply it to the noise-free case and bound $\sigma_{t}^f(\xbm_{t+1})$ using techniques similar to that in~\cite{lyu2019efficient}. 
To do so, we note that given the Gaussian observation noise $\epsilon_t\sim\mathcal{N}(0,\sigma^2)$ in the GP model, the posterior prediction for $f$ becomes
\begin{equation} \label{eqn:GP-post-2}
 \centering
  \begin{aligned}
  &\Tilde{\mu}_t^f(\xbm)\ =\ \kbm_t^f(\xbm) (\Kbm_t^f+ \sigma^2 \Ibm)^{-1} \fbm_{1:t},\\
  &(\tilde{\sigma}_t^f)^2(\xbm)\ =\
k^f(\xbm,\xbm)-(\kbm_t^f)^T(\xbm)(\Kbm_t^f+\sigma^2 \Ibm)^{-1}\kbm_t^f(\xbm)\ ,
\end{aligned}
\end{equation}
Similarly, we can define the posterior predictions for $c$ with noise in the GP model as $\tilde{\mu}_t^f(\xbm)$ and $\tilde{\sigma}_t^f(\xbm)$.
The sum of posterior variance for GP generated by~\eqref{eqn:GP-post} satisfy the next lemma, based on information theory~\cite{srinivas2009gaussian} .
\begin{lemma}\label{lem:variancebound}
 The sum of GP posterior variances given $t$ samples satisfy
 \begin{equation} \label{eqn:var-1}
  \centering
  \begin{aligned}
    \sum_{t=1}^T  \tilde{\sigma}_{t-1}^c(\xbm_t) \leq \sqrt{C_{\gamma} T \gamma^c_T}, \ \sum_{t=1}^T  \tilde{\sigma}_{t-1}^f(\xbm_t) \leq \sqrt{C_{\gamma} T \gamma^f_T},  
  \end{aligned}
\end{equation}
 where $C_{\gamma} = \frac{2}{log(1+\sigma^{-2})}$ and $\gamma_t^f$ and $\gamma_t^c$ are the maximum information gains for $f$ and $c$, respectively. 
\end{lemma}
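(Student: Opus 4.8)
\textbf{Proof proposal for Lemma~\ref{lem:variancebound}.}

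The plan is to reduce the noise-free statement to the standard information-theoretic bound for the \emph{noisy} GP posterior, for which the argument is classical~\citep{srinivas2009gaussian}. I would first observe that the sum of posterior variances appearing on the left-hand side of~\eqref{eqn:var-1} is, in the noise-free setting of~\eqref{eqn:GP-post}, upper bounded by the corresponding sum of \emph{noisy} posterior variances from~\eqref{eqn:GP-post-2}: indeed, adding a regularizer $\sigma^2 \Ibm$ to the Gram matrix can only increase each posterior variance, so $\sigma_{t-1}^f(\xbm_t)\le \tilde\sigma_{t-1}^f(\xbm_t)$ pointwise along the sampled sequence, and similarly for $c$. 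Hence it suffices to prove the claimed bound with $\tilde\sigma$ in place of $\sigma$. (If the lemma is meant to be stated directly for the noisy quantities—as the appearance of $\tilde\sigma$ in the display suggests—this monotonicity remark can be skipped, but it is worth recording since the lemma is later invoked to bound the noise-free $\sigma_{t_k}^f$, cf.\ the sketch of Theorem~\ref{prop:EI-convg-rate-nonoise-gamma} via Lemma~\ref{lem:sigma-noise}.)

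Next I would carry out the standard chain: the mutual information between $\fbm_{1:T}$ and the noisy observations $\ybm_{1:T}$ telescopes as $I(\ybm_{1:T};\fbm_{1:T}) = \tfrac12\sum_{t=1}^T \log\bigl(1+\sigma^{-2}\tilde\sigma_{t-1}^f(\xbm_t)^2\bigr)$, using the Gaussian conditional-entropy formula and the fact that conditioning on $\ybm_{1:t-1}$ leaves the predictive distribution at $\xbm_t$ Gaussian with variance $\tilde\sigma_{t-1}^f(\xbm_t)^2$. Since each $\tilde\sigma_{t-1}^f(\xbm_t)^2\le k_f(\xbm_t,\xbm_t)\le 1$ (by Assumption~\ref{assp:rkhs}/\ref{assp:GP}), and the elementary inequality $u\le \frac{\log(1+u)}{\log(1+\sigma^{-2})}\cdot \sigma^{-2}\cdot(\text{const})$ holds for $u\in[0,\sigma^{-2}]$—more precisely $\sigma^{-2}u\le \frac{1}{\log(1+\sigma^{-2})}\log(1+\sigma^{-2}u)$ for $u\in[0,1]$—we get $\sum_{t=1}^T \tilde\sigma_{t-1}^f(\xbm_t)^2 \le \frac{2}{\log(1+\sigma^{-2})}\, I(\ybm_{1:T};\fbm_{1:T}) \le C_\gamma\, \gamma_T^f$, where $C_\gamma=\frac{2}{\log(1+\sigma^{-2})}$ and the last step uses the definition of the maximum information gain (Definition~\ref{def:infogain}). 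Finally, by Cauchy–Schwarz, $\sum_{t=1}^T \tilde\sigma_{t-1}^f(\xbm_t) \le \sqrt{T\sum_{t=1}^T \tilde\sigma_{t-1}^f(\xbm_t)^2} \le \sqrt{C_\gamma\, T\, \gamma_T^f}$, which is the claim; the bound for $c$ is identical with $\gamma_T^c$.

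The only genuinely delicate point is the elementary inequality $\sigma^{-2}u\le \frac{1}{\log(1+\sigma^{-2})}\log(1+\sigma^{-2}u)$ on $u\in[0,1]$, which is what forces the range restriction $\tilde\sigma^2\le 1$ and hence why the normalized-kernel hypothesis $k_f(\xbm,\xbm)=1$ is invoked; it follows from concavity of $u\mapsto\log(1+\sigma^{-2}u)$ together with comparing endpoints at $u=0$ and $u=1$. Everything else is bookkeeping: the telescoping of mutual information is a direct consequence of the chain rule for entropy and the Gaussianity of the GP, and the maximization over sample sets of size $T$ to pass from $I(\ybm_{1:T};\fbm_{1:T})$ to $\gamma_T^f$ is immediate from the definition. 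I would therefore present the proof compactly, citing~\citet{srinivas2009gaussian} for the mutual-information identity and spelling out only the monotonicity reduction to the noisy case and the final Cauchy–Schwarz step.
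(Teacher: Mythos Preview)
Your proposal is correct and is exactly the classical argument of~\citet{srinivas2009gaussian}, which the paper simply cites rather than reproducing (Lemma~\ref{lem:variancebound} is stated there without proof). One small slip: the ``more precisely'' inequality you wrote carries an extra factor of $\sigma^{-2}$ on the left; the correct version is $u\le \tfrac{1}{\log(1+\sigma^{-2})}\log(1+\sigma^{-2}u)$ for $u\in[0,1]$ (by concavity and matching endpoints), which is precisely what gives $\sum_{t=1}^T\tilde\sigma_{t-1}^2\le C_\gamma\,I(\ybm_{1:T};\fbm_{1:T})$ as you then correctly conclude.
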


We have the following lemma for $\tilde{\sigma}_t^f(\xbm)$ and $\sigma_t^f(\xbm)$.
\begin{lemma}\label{lem:sigma-noise}
  The noise-free (GP) posterior standard deviation satisfies
  $\sigma_t^f(\xbm) <\tilde{\sigma}_t^f(\xbm)$ for $\forall \sigma>0$.
\end{lemma}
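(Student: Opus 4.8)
\textbf{Proof proposal for Lemma~\ref{lem:sigma-noise}.}

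The plan is to compare the two posterior variance formulas directly at a fixed point $\xbm\in C$, viewing both as quadratic forms built from the same covariance data but with the noiseless matrix $\Kbm_t^f$ replaced by the regularized matrix $\Kbm_t^f+\sigma^2\Ibm$. Writing $\Kbm=\Kbm_t^f$ and $\kbm=\kbm_t^f(\xbm)$ for brevity, the claim is equivalent to
\begin{equation*}
\kbm^T(\Kbm+\sigma^2\Ibm)^{-1}\kbm \;<\; \kbm^T\Kbm^{-1}\kbm,
\end{equation*}
since $k_f(\xbm,\xbm)$ is common to both and the $\kbm$ vectors are identical (they depend only on the kernel and the sample locations, not on the noise). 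So the first step is to reduce the lemma to this inequality of quadratic forms.

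Next I would establish the operator inequality $(\Kbm+\sigma^2\Ibm)^{-1}\preceq\Kbm^{-1}$, indeed with strict inequality in the relevant direction. Here one must be slightly careful about whether $\Kbm$ is positive \emph{definite} or only positive \emph{semidefinite}: under Assumption~\ref{assp:rkhs} the kernel values give $\Kbm\succeq 0$, and for distinct sample points with a strictly positive definite kernel (SE, Mat\'ern) $\Kbm$ is in fact positive definite, so $\Kbm^{-1}$ exists. Given $\Kbm\succ 0$, we have $\Kbm\prec\Kbm+\sigma^2\Ibm$ for every $\sigma>0$, and inverting order-reverses positive definite matrices, giving $(\Kbm+\sigma^2\Ibm)^{-1}\prec\Kbm^{-1}$; subtracting, $\Kbm^{-1}-(\Kbm+\sigma^2\Ibm)^{-1}\succ 0$. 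Diagonalizing $\Kbm=U\Lambda U^T$ with $\Lambda=\mathrm{diag}(\lambda_i)$, $\lambda_i>0$, this difference equals $U\,\mathrm{diag}\!\big(\tfrac{1}{\lambda_i}-\tfrac{1}{\lambda_i+\sigma^2}\big)U^T$, whose eigenvalues $\tfrac{\sigma^2}{\lambda_i(\lambda_i+\sigma^2)}$ are all strictly positive.

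The final step is to apply this to the vector $\kbm$. If $\kbm\ne \bm 0$, strict positive definiteness of $\Kbm^{-1}-(\Kbm+\sigma^2\Ibm)^{-1}$ immediately yields $\kbm^T\big(\Kbm^{-1}-(\Kbm+\sigma^2\Ibm)^{-1}\big)\kbm>0$, hence $(\sigma_t^f)^2(\xbm)<(\tilde\sigma_t^f)^2(\xbm)$, and taking square roots gives the claim. If $\kbm=\bm 0$ then both quadratic forms vanish and $\sigma_t^f(\xbm)=\tilde\sigma_t^f(\xbm)=k_f(\xbm,\xbm)^{1/2}$, so strictness fails; I would note that this degenerate case (the new point is uncorrelated with all previous samples) can be excluded—either it does not arise for the kernels considered, or the statement should be read as $\le$ with strictness whenever $\sigma_t^f(\xbm)>0$, which is all that the downstream use in Lemma~\ref{lem:variancebound}'s noise-free counterpart requires. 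The main obstacle is thus not the algebra (which is a one-line eigenvalue computation) but being precise about the invertibility of $\Kbm$ and the boundary case $\kbm=\bm 0$; handling those carefully is what makes the proof rigorous rather than merely formal.
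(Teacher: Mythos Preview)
Your proposal is correct and follows essentially the same route as the paper's own proof: reduce to the quadratic-form inequality $\kbm^T(\Kbm+\sigma^2\Ibm)^{-1}\kbm<\kbm^T\Kbm^{-1}\kbm$ and deduce it from the eigenvalue (Loewner) ordering $(\Kbm+\sigma^2\Ibm)^{-1}\prec\Kbm^{-1}$. If anything, you are more careful than the paper, which asserts the strict inequality ``for $\forall\,\kbm_t^f(\xbm)$'' without flagging the degenerate case $\kbm=\bm 0$ that you correctly isolate.
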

\begin{proof}
    We first note that all the eigenvalues of $\Kbm_t^f$ is smaller than those of $\Kbm_t^f+\sigma^2 \Ibm$, since $\Kbm_t^f$ is symmetric and positive definite. 
    Thus,
    \begin{equation} \label{eqn:lem:sigma-noise-pf-1}
 \centering
  \begin{aligned}
     (\kbm_t^f)^T(\xbm)(\Kbm_t^f+\sigma^2 \Ibm)^{-1}\kbm_t^f(\xbm) < (\kbm_t^f)^T(\xbm)(\Kbm_t^f)^{-1}\kbm_t^f(\xbm), 
  \end{aligned}
\end{equation}
   for $\forall \sigma>0$ and $\forall \kbm_t^f(\xbm)$.
    Therefore, by their definitions~\eqref{eqn:GP-post} and~\eqref{eqn:GP-post-2}, the proof is complete. 
\end{proof}
The posterior standard deviation under assumptions (1)-(4) in~\cite{bull2011convergence} in the frequentist and noise-free setting is given in Lemma~\ref{lem:EI-sigma}, whose proof is given below.
\begin{proof}
   By Lemma 7 in~\cite{bull2011convergence}, there exists $C'>0$ so that 
   \begin{equation} \label{eqn:EI-sigma-pf-1}
  \centering
  \begin{aligned}
         \sigma_i^f(\xbm_{i+1}) \geq C' k^{-\frac{\min\{\nu,1\}}{d}} \log^{\eta}(k),
  \end{aligned}
  \end{equation}
  at most $k$ times, for $\forall k\in\Nbb$, $k\leq t$, and $i=1,\dots,t-1$.
  Therefore, for the SE kernel, 
   \begin{equation} \label{eqn:EI-sigma-pf-2}
  \centering
  \begin{aligned}
         \sigma_i^f(\xbm_{i+1}) \geq C' k^{-\frac{1}{d}},
  \end{aligned}
  \end{equation}
   at most $k$ times. For  Matérn kernels, we have~\eqref{eqn:EI-sigma-pf-1} at most $k$ times.
\end{proof}

Using maximum information gain, a tighter bound on the posterior standard deviation can be obtained in the next lemma. 
\begin{lemma}\label{lem:EI-sigma-gamma}
 Given $\forall t\in\Nbb$ and $i=1,2,\dots,t-1$, for SE kernel, 
  there exists constant $C'>0$ so that    
  \begin{equation} \label{eqn:EI-sigma-gamma-1}
  \centering
  \begin{aligned}
         \sigma_i^f(\xbm_{i+1}) \geq C' \frac{t^{\frac{1}{2}} \log^{\frac{d+1}{2}}(t)}{k}, 
  \end{aligned}
  \end{equation}
  holds for at most $k$ times,  for $\forall k \leq t$.
  Similarly, for Matérn kernels with $\nu>0.5$,
  \begin{equation} \label{eqn:EI-sigma-gamma-2}
  \centering
  \begin{aligned}
         \sigma_i^f(\xbm_{i+1}) \geq C'  \frac{t^{\frac{\nu+ d}{2\nu+d}}\log^{\frac{\nu}{2\nu+d}} (t)}{k},  
  \end{aligned}
  \end{equation}
  holds at most $k$ times.
\end{lemma}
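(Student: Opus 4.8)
}
The plan is to turn the claimed per-step lower bound into a counting statement via a pigeonhole argument, using as input a global bound on $\sum_i \sigma_i^f(\xbm_{i+1})$ together with the kernel-specific rates for the maximum information gain from Lemma~\ref{lem:gammarate}.

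First I would establish the noise-free sum bound $\sum_{i=1}^{t-1}\sigma_i^f(\xbm_{i+1}) \le \sqrt{C_\gamma\, t\, \gamma_t^f}$. To do this I would fix an arbitrary noise level $\sigma>0$ (e.g.\ $\sigma = 1$, which pins $C_\gamma = 2/\log(1+\sigma^{-2})$ to an absolute constant), invoke Lemma~\ref{lem:sigma-noise} for the pointwise domination $\sigma_i^f(\xbm_{i+1}) < \tilde{\sigma}_i^f(\xbm_{i+1})$, and then apply Lemma~\ref{lem:variancebound} --- which is a sequence-independent inequality, hence valid for the points selected by CEI --- to bound $\sum_i \tilde{\sigma}_i^f(\xbm_{i+1})$. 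Since dropping the nonnegative $i=0$ term only shrinks the left-hand side, the claimed sum bound follows.

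Next I would run the pigeonhole step. Fix $k \le t$; if $k \ge t-1$ the claim is vacuous, since there are only $t-1$ indices $i \in \{1,\dots,t-1\}$. Otherwise, if $\sigma_i^f(\xbm_{i+1}) \ge \sqrt{C_\gamma}\,\sqrt{t\gamma_t^f}/k$ were to hold for more than $k$ of these indices, then summing over any $k+1$ of them would already force $\sum_{i=1}^{t-1}\sigma_i^f(\xbm_{i+1}) > \sqrt{C_\gamma\, t\, \gamma_t^f}$, contradicting the bound from the previous step. Hence the inequality holds at most $k$ times, with the constant $\sqrt{C_\gamma}$ (which can be enlarged freely).

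Finally I would substitute the $\gamma_t^f$ rates of Lemma~\ref{lem:gammarate} and absorb constants into $C'$: for the SE kernel, $\gamma_t^f = \mathcal{O}(\log^{d+1}(t))$ gives $\sqrt{C_\gamma\, t\, \gamma_t^f} = \mathcal{O}(t^{1/2}\log^{(d+1)/2}(t))$, matching \eqref{eqn:EI-sigma-gamma-1}; for the Mat\'{e}rn kernel with $\nu>1/2$, $\gamma_t^f = \mathcal{O}(t^{d/(2\nu+d)}\log^{2\nu/(2\nu+d)}(t))$ gives $\sqrt{C_\gamma\, t\, \gamma_t^f} = \mathcal{O}(t^{(\nu+d)/(2\nu+d)}\log^{\nu/(2\nu+d)}(t))$ after simplifying $\tfrac12\bigl(1 + \tfrac{d}{2\nu+d}\bigr) = \tfrac{\nu+d}{2\nu+d}$, matching \eqref{eqn:EI-sigma-gamma-2}. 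I do not anticipate a deep obstacle; the points requiring care are that fixing a noise level $\sigma$ for Lemmas~\ref{lem:sigma-noise} and~\ref{lem:variancebound} is harmless ($\sigma$ enters only through the constant $C_\gamma$, and the rates of Lemma~\ref{lem:gammarate} refer to that fixed noise level), that the finitely many small $t$ for which the $\mathcal{O}$-rates of $\gamma_t^f$ have not yet taken effect are absorbed by enlarging $C'$, and that the boundary values of $k$ near $t$ are vacuous.
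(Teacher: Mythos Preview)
Your proposal is correct and follows essentially the same approach as the paper: combine Lemma~\ref{lem:sigma-noise} and Lemma~\ref{lem:variancebound} to obtain the noise-free sum bound $\sum_i \sigma_i^f(\xbm_{i+1}) \le \sqrt{C_\gamma t\gamma_t^f}$, apply a pigeonhole/counting argument to get the ``at most $k$ times'' statement, and then substitute the kernel-specific rates from Lemma~\ref{lem:gammarate}. Your treatment is in fact more careful than the paper's (you spell out the role of the fixed noise level in $C_\gamma$, the vacuous boundary case $k\ge t-1$, and the absorption of small-$t$ behavior into $C'$), whereas the paper simply asserts ``without losing generality $C_\gamma\le 1$'' and proceeds.
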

\begin{proof}
   From Lemma~\ref{lem:sigma-noise} and Lemma~\ref{lem:variancebound}, we can write that  
  \begin{equation} \label{eqn:EI-sigma-gamma-pf-1}
  \centering
  \begin{aligned}
        \sum_{i=1}^t \sigma_{i-1}^f(\xbm_{i}) \leq  \sqrt{t \gamma_t^f},
  \end{aligned}
  \end{equation}
  where we use without losing generality $C_{\gamma}\leq 1$.
  Therefore, for any $k\in\Nbb$ and $k\leq t$,  
  \begin{equation} \label{eqn:EI-sigma-gamma-pf-2}
  \centering
  \begin{aligned}
         \sigma_i^f(\xbm_{i+1}) \geq  \frac{\sqrt{t \gamma_t^f}}{k},
  \end{aligned}
  \end{equation}
  at most $k$ times.
  Therefore, for SE kernel, by Lemma~\ref{lem:gammarate}, there exists $C'>0$ such that  
  \begin{equation} \label{eqn:EI-sigma-gamma-pf-3}
  \centering
  \begin{aligned}
         \sigma_i^f(\xbm_{i+1}) \geq C' \frac{t^{\frac{1}{2}} \log^{\frac{d+1}{2}}(t)}{k}, 
  \end{aligned}
  \end{equation}
   at most $k$ times. For  Matérn kernels with $\nu>0.5$, 
  \begin{equation} \label{eqn:EI-sigma-gamma-pf-4}
  \centering
  \begin{aligned}
         \sigma_i^f(\xbm_{i+1}) \geq  C'  \frac{t^{\frac{\nu+ d}{2\nu+d}}\log^{\frac{\nu}{2\nu+d}} (t)}{k},  
  \end{aligned}
  \end{equation}
  at most $k$ times.
\end{proof}

Next, we state some basic properties of $\phi$, $\Phi$ and $\tau$ as a lemma.
\begin{lemma}\label{lem:phi}
The PDF and CDF of standard normal distribution satisfy $0< \phi(x)\leq \phi(0)<0.4, \Phi(x)\in(0,1)$, 
for any $x\in\Rbb$. 
 Given a random variable $\xi$ sampled from the standard normal distribution: $\xi\sim\mathcal{N}(0,1)$, we have $\Pbb\{\xi>c|c>0\}\leq \frac{1}{2}e^{-c^2/2}$. 
 Similarly, for $c<0$, $\Pbb\{\xi<c|c<0\}\leq \frac{1}{2}e^{-c^2/2}$. 
 The function $\tau(\cdot)$ is monotonically increasing. 
\end{lemma}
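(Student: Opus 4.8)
The plan is to verify each of the three assertions directly from the elementary definitions $\phi(x)=\frac{1}{\sqrt{2\pi}}e^{-x^2/2}$, $\Phi(x)=\int_{-\infty}^x\phi(u)\,du$, and $\tau(z)=z\Phi(z)+\phi(z)$ (the expected improvement of a standard normal above level $z$, which is the $\tau$ used throughout).

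For the first claim, $0<\phi(x)\le\phi(0)$ is immediate because $e^{-x^2/2}$ is positive and attains its maximum at $x=0$, while the numerical bound $\phi(0)=1/\sqrt{2\pi}<0.4$ is a one-line computation. That $\Phi(x)\in(0,1)$ for every finite $x$ follows since the standard normal density is strictly positive on all of $\Rbb$, so neither tail integral degenerates to $0$ or $1$.

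For the Gaussian tail bound I would first record the Mills-ratio inequality $c\,\Phi(-c)\le\phi(c)$ for $c>0$, obtained from $\int_c^\infty\phi(u)\,du\le\int_c^\infty\frac{u}{c}\phi(u)\,du=\frac{\phi(c)}{c}$. Then set $h(c)=2e^{c^2/2}\Phi(-c)$, note $h(0)=1$, and differentiate to get $h'(c)=2e^{c^2/2}\bigl(c\,\Phi(-c)-\phi(c)\bigr)\le0$, using the Mills inequality together with $\phi'(c)=-c\,\phi(c)$. Hence $h(c)\le h(0)=1$ for all $c\ge0$, which rearranges to $\Phi(-c)\le\tfrac12 e^{-c^2/2}$, i.e. $\Pbb\{\xi>c\}\le\tfrac12 e^{-c^2/2}$ when $c>0$; the case $c<0$ then follows from the symmetry $\Pbb\{\xi<c\}=\Phi(c)=\Phi(-|c|)$.

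For monotonicity of $\tau$, I would simply compute $\tau'(z)=\Phi(z)+z\phi(z)+\phi'(z)=\Phi(z)$, since $\phi'(z)=-z\phi(z)$; as $\Phi(z)>0$ everywhere, $\tau$ is strictly increasing. None of these steps poses a genuine obstacle — the only point requiring a little care is arranging the tail estimate so that the sharper constant $\tfrac12$ (rather than the crude sub-Gaussian constant $1$) is produced, and the Mills-ratio argument above handles this cleanly.
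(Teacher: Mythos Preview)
Your proposal is correct. The paper does not actually prove this lemma: it merely states the claims and appends the single sentence ``The last statement in Lemma~\ref{lem:phi} is a well-known result (\textit{e.g.}, see proof of Lemma 5.1 in~\cite{srinivas2009gaussian}),'' deferring the Gaussian tail bound to Srinivas et al.\ and treating the remaining facts as self-evident. Your write-up therefore supplies strictly more than the paper does: a direct verification of the bounds on $\phi$ and $\Phi$, a Mills-ratio argument giving the sharp constant $\tfrac12$ in the tail inequality, and the one-line derivative computation $\tau'(z)=\Phi(z)>0$ for the monotonicity of $\tau(z)=z\Phi(z)+\phi(z)$. One small point: your symmetry line ``$\Phi(c)=\Phi(-|c|)$'' is a tautology when $c<0$; what you mean (and what makes the argument go through) is $\Pbb\{\xi<c\}=\Pbb\{\xi>-c\}=\Pbb\{\xi>|c|\}$, after which the $c>0$ bound applies directly.
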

The last statement in Lemma~\ref{lem:phi} is a well-known result (\textit{e.g.}, see proof of Lemma 5.1 in~\cite{srinivas2009gaussian}).

The next lemma proves basic properties for $EI_t^f$.
\begin{lemma}\label{lem:EI}
 For $\forall \xbm\in C$, 
    $EI_t^f(\xbm) \geq 0$ and $EI_t^f(\xbm) \geq f^+_{t}- \mu_{t}^f(\xbm)$.
Moreover, 
\begin{equation} \label{eqn:EI-property-2}
 \centering
  \begin{aligned}
   z_{t}^f(\xbm)\leq  \frac{EI_{t}^f(\xbm)}{\sigma_{t}^f(\xbm) } < \begin{cases}  \phi(z_{t}^f(\xbm)), \ &z_{t}^f(\xbm)<0,\\
             z_{t}^f(\xbm) +\phi(z_{t}^f(\xbm)), \ &z_{t}^f(\xbm)\geq 0.
      \end{cases}
  \end{aligned}
\end{equation}
\end{lemma}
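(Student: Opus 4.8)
\textbf{Proof proposal for Lemma~\ref{lem:EI}.}

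The plan is to establish each claim directly from the closed form of $EI_t^f$ in~\eqref{eqn:EI-1} together with the basic facts about $\phi$ and $\Phi$ from Lemma~\ref{lem:phi}. Throughout, write $z = z_t^f(\xbm)$ and $\sigma = \sigma_t^f(\xbm) > 0$, so that $EI_t^f(\xbm) = \sigma\big(z\,\Phi(z) + \phi(z)\big)$, using $f_t^+ - \mu_t^f(\xbm) = \sigma z$.

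First I would prove nonnegativity and the bound $EI_t^f(\xbm) \ge f_t^+ - \mu_t^f(\xbm)$. For nonnegativity, note that the function $h(z) := z\Phi(z) + \phi(z)$ satisfies $h'(z) = \Phi(z) + z\phi(z) - z\phi(z) = \Phi(z) > 0$, so $h$ is strictly increasing; combined with $\lim_{z\to-\infty} h(z) = 0$ (since $z\Phi(z)\to 0$ and $\phi(z)\to 0$), this gives $h(z) > 0$ for all $z$, hence $EI_t^f(\xbm) = \sigma h(z) \ge 0$. For the second inequality, since $\Phi(z) \le 1$ and $\phi(z) > 0$, when $z \ge 0$ we have $z\Phi(z) + \phi(z) \ge z\Phi(z) \ge \dots$ — more cleanly, observe $h(z) - z = z(\Phi(z)-1) + \phi(z)$; when $z \ge 0$, $z(\Phi(z)-1) \ge 0$ is false, so instead I would argue: $h(z) \ge z$ iff $\phi(z) \ge z(1-\Phi(z))$, and since $1 - \Phi(z) = \Phi(-z) \ge 0$ and $\int_z^\infty \phi(u)\,du \le \int_z^\infty (u/z)\phi(u)\,du = \phi(z)/z$ for $z>0$ (the standard Mills-ratio bound), we get $z(1-\Phi(z)) \le \phi(z)$; for $z \le 0$, $z(1-\Phi(z)) \le 0 < \phi(z)$ trivially. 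Multiplying by $\sigma$ gives $EI_t^f(\xbm) \ge \sigma z = f_t^+ - \mu_t^f(\xbm)$.

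Next I would prove~\eqref{eqn:EI-property-2}. The left inequality $z \le EI_t^f(\xbm)/\sigma = h(z)$ is exactly $h(z) - z \ge 0$, which is the Mills-ratio argument just given. For the right inequality, split on the sign of $z$. When $z < 0$: $h(z) = z\Phi(z) + \phi(z) < \phi(z)$ since $z\Phi(z) < 0$. When $z \ge 0$: $h(z) = z\Phi(z) + \phi(z) \le z + \phi(z)$ since $\Phi(z) \le 1$, and I should check the inequality is strict — indeed $\Phi(z) < 1$ for all finite $z$, so $z\Phi(z) < z$ when $z > 0$ and at $z=0$ we have $h(0) = \phi(0) < 0 + \phi(0)$ is equality, not strict; here I would note the statement uses $<$ but at $z=0$ the value is $\phi(0)$ which equals the claimed upper bound $z + \phi(z) = \phi(0)$, so strictly speaking one needs $\le$ at that single point — I expect the paper intends the non-strict reading at the boundary, and I would either state $\le$ or observe that in all applications $\sigma > 0$ and the relevant comparisons are with the PDF term which is handled separately.

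I do not anticipate a genuine obstacle here; the only point requiring care is the Mills-ratio inequality $z(1-\Phi(z)) \le \phi(z)$ for $z > 0$, which underpins both the $EI_t^f(\xbm) \ge f_t^+ - \mu_t^f(\xbm)$ claim and the left half of~\eqref{eqn:EI-property-2}, and the minor strictness issue at $z = 0$ in the right half of~\eqref{eqn:EI-property-2}. Everything else is direct substitution into the closed form~\eqref{eqn:EI-1} and monotonicity of $h(z) = z\Phi(z) + \phi(z)$ via $h'(z) = \Phi(z)$.
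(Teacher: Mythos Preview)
Your proposal is correct, but takes a different route for the first two claims than the paper does. The paper observes that $EI_t^f(\xbm) \ge 0$ and $EI_t^f(\xbm) \ge f_t^+ - \mu_t^f(\xbm)$ follow \emph{immediately} from the definition of $EI_t^f$ as the conditional expectation of $I_t^f(\xbm) = \max\{f_t^+ - f(\xbm),0\}$: since $I_t^f(\xbm)\ge 0$ and $I_t^f(\xbm)\ge f_t^+ - f(\xbm)$ pointwise, taking expectations gives both inequalities directly (using $\Ebb[f(\xbm)\mid\cdot] = \mu_t^f(\xbm)$). This avoids your monotonicity argument for $h(z)=z\Phi(z)+\phi(z)$ and, more importantly, the Mills-ratio bound $z(1-\Phi(z))\le \phi(z)$ entirely. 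Your analytic route via the closed form is valid and self-contained, but the probabilistic argument is shorter and does not require the Mills inequality. For the right-hand inequality in~\eqref{eqn:EI-property-2}, you and the paper do the same thing: drop $z\Phi(z)<0$ when $z<0$, and use $\Phi(z)<1$ when $z\ge 0$. Your observation about strictness at $z=0$ is accurate; the paper writes $<$ without comment, and in downstream uses only the non-strict version is needed.
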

\begin{proof}
     From the definition of $I_{t}^f$ and $EI_{t}^f$, $EI_t^f(\xbm) \geq 0$ and $EI_t^f(\xbm) \geq y^+_{t}- \mu_{t}^f(\xbm)$ follow immediately.
     By~\eqref{eqn:EI-1},  
\begin{equation} \label{eqn:EI-property-pf-1}
 \centering
  \begin{aligned}
     \frac{EI_{t}^f(\xbm)}{\sigma_{t}^f(\xbm) } = z_{t}^f(\xbm) \Phi(z_{t}^f(\xbm)) + \phi(z_{t}^f(\xbm)). 
  \end{aligned}
\end{equation}
If $z_{t}^f(\xbm)<0$, or equivalently $f_{t}^+ -\mu_{t}^f(\xbm)<0$,~\eqref{eqn:EI-property-pf-1} leads to
     $\frac{EI_{t}^f(\xbm)}{\sigma_{t}^f(\xbm)} < \phi(z_{t}^f(\xbm))$. 
If $z_{t}^f(\xbm)\geq 0$, we can write
     $\frac{EI_{t}^f(\xbm)}{\sigma_{t}^f(\xbm)} < z_{t}^f(\xbm) +\phi(z_{t}^f(\xbm))$. 
 The left inequality in~\eqref{eqn:EI-property-2} is an immediate result of $EI_{t}^f(\xbm) \geq f^+_{t}- \mu_{t}^f(\xbm)$.
\end{proof}

\section{Proofs for simple regret upper bound under frequentist assumptions}
We state the boundedness result as a Lemma for easy reference. 
\begin{lemma}\label{lem:f-bound}
  Under Assumption~\ref{assp:rkhs}, $|f(\xbm)|\leq B_f$ and $|c(\xbm)|\leq B_c$ for all $\xbm\in C$.
\end{lemma}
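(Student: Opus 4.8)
The plan is to invoke the reproducing property of the RKHS together with the Cauchy--Schwarz inequality. Fix an arbitrary $\xbm \in C$. By Definition~\ref{def:rkhs}, the function $k_f(\cdot,\xbm)$ belongs to $H_k^f$ and satisfies $\langle f, k_f(\cdot,\xbm)\rangle_{H_k^f} = f(\xbm)$. Applying Cauchy--Schwarz in the inner product $\langle\cdot,\cdot\rangle_{H_k^f}$ gives
\begin{equation*}
|f(\xbm)| \;=\; \bigl|\langle f, k_f(\cdot,\xbm)\rangle_{H_k^f}\bigr| \;\leq\; \norm{f}_{H_k^f}\,\norm{k_f(\cdot,\xbm)}_{H_k^f}.
\end{equation*}
Next I would evaluate the second factor using the reproducing property again: $\norm{k_f(\cdot,\xbm)}_{H_k^f}^2 = \langle k_f(\cdot,\xbm), k_f(\cdot,\xbm)\rangle_{H_k^f} = k_f(\xbm,\xbm)$. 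Under Assumption~\ref{assp:rkhs} we have $k_f(\xbm,\xbm) = 1$, so $\norm{k_f(\cdot,\xbm)}_{H_k^f} = 1$, and therefore $|f(\xbm)| \leq \norm{f}_{H_k^f} \leq B_f$, where the last step uses the norm bound in Assumption~\ref{assp:rkhs}. Since $\xbm \in C$ was arbitrary, the bound holds uniformly on $C$.

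The argument for $c$ is verbatim the same, replacing $f$, $k_f$, $H_k^f$, $B_f$ by $c$, $k_c$, $H_k^c$, $B_c$ and using $k_c(\xbm,\xbm) = 1$ and $\norm{c}_{H_k^c} \leq B_c$ from Assumption~\ref{assp:rkhs}, which yields $|c(\xbm)| \leq B_c$ for all $\xbm \in C$. There is no substantive obstacle here; the only point worth flagging is that it is the normalization $k_f(\xbm,\xbm) = 1$ (and $k_c(\xbm,\xbm)=1$), rather than merely the boundedness $k_f(\xbm,\xbm') \leq 1$, that makes the resulting constant exactly $B_f$ (resp.\ $B_c$), and that this holds pointwise for every $\xbm$ so the bound is global on the compact set $C$.
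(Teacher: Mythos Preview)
Your proof is correct and follows the same approach as the paper: reproducing property plus Cauchy--Schwarz, then $k_f(\xbm,\xbm)=1$ and the RKHS norm bound. If anything, your version is more careful---the paper's one-line display writes $k_f(\xbm,\xbm)$ where $\sqrt{k_f(\xbm,\xbm)}$ is what Cauchy--Schwarz actually gives, a distinction that is immaterial here since the value is $1$.
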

\begin{proof}
    By Assumption~\ref{assp:rkhs}, we can write 
    \begin{equation} \label{eqn:f-bound-pf-1}
  \centering
  \begin{aligned}
     |f(\xbm)|\leq \norm{f}_{H_k^f} k_f(\xbm,\xbm)\leq B_f.
  \end{aligned}
\end{equation}
 Similarly, 
  \begin{equation} \label{eqn:f-bound-pf-2}
  \centering
  \begin{aligned}
     |c(\xbm)|\leq \norm{c}_{H_k^c} k_c(\xbm,\xbm)\leq B_c.
  \end{aligned}
\end{equation}
\end{proof}

The following Lemma is a well-established result~\cite{chowdhury2017kernelized}.
\begin{lemma}\label{lem:rkhs-bound-nonoise}
    At any given $\xbm\in C$  and $t\in\Nbb$, the confidence intervals satisfy   
 \begin{equation} \label{eqn:rkhs-bound-f-nonoise-1}
  \centering
  \begin{aligned}
     |f(\xbm) - \mu_{t}^f(\xbm)|\leq B_f \sigma_{t}^f(\xbm), |c(\xbm) - \mu_{t}^c(\xbm)|\leq B_c \sigma_{t}^c(\xbm).
  \end{aligned}
\end{equation}
\end{lemma}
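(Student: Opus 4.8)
The plan is to prove the confidence interval bound $|f(\xbm)-\mu_t^f(\xbm)|\le B_f\,\sigma_t^f(\xbm)$ using the reproducing property of the RKHS together with the explicit form of the noise-free GP posterior in~\eqref{eqn:GP-post}; the bound for $c$ follows identically with $B_f$ replaced by $B_c$ and the $f$-quantities by their $c$-counterparts, so I would only write out the $f$ case. The key observation is that, in the noise-free setting, the posterior mean $\mu_t^f(\xbm)=(\kbm_t^f(\xbm))^T(\Kbm_t^f)^{-1}\fbm_{1:t}$ is exactly the kernel interpolant of the data, and the posterior standard deviation $\sigma_t^f(\xbm)$ equals the RKHS norm of the representer of the pointwise-evaluation-minus-interpolation functional.

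First I would fix $\xbm\in C$ and $t\in\Nbb$ and define the function $u_{\xbm}(\cdot) = k_f(\cdot,\xbm) - \sum_{i=1}^t w_i\, k_f(\cdot,\xbm_i)\in H_k^f$, where the weights $\bm{w}=(\Kbm_t^f)^{-1}\kbm_t^f(\xbm)$ are chosen so that $u_{\xbm}$ vanishes at each sample point $\xbm_j$, $j=1,\dots,t$. Indeed, evaluating via the reproducing property, $u_{\xbm}(\xbm_j)=k_f(\xbm_j,\xbm)-(\Kbm_t^f\bm{w})_j = k_f(\xbm_j,\xbm)-(\kbm_t^f(\xbm))_j = 0$. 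Next, using the reproducing property $\langle f, k_f(\cdot,\ybm)\rangle_{H_k^f}=f(\ybm)$ and the fact that $f(\xbm_i)$ are the observed values, I would compute
\begin{equation*}
\langle f, u_{\xbm}\rangle_{H_k^f} = f(\xbm) - \sum_{i=1}^t w_i f(\xbm_i) = f(\xbm) - (\kbm_t^f(\xbm))^T(\Kbm_t^f)^{-1}\fbm_{1:t} = f(\xbm)-\mu_t^f(\xbm).
\end{equation*}
Then Cauchy--Schwarz in $H_k^f$ gives $|f(\xbm)-\mu_t^f(\xbm)| \le \norm{f}_{H_k^f}\,\norm{u_{\xbm}}_{H_k^f} \le B_f\,\norm{u_{\xbm}}_{H_k^f}$, where the second inequality is Assumption~\ref{assp:rkhs}.

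It remains to identify $\norm{u_{\xbm}}_{H_k^f}$ with $\sigma_t^f(\xbm)$. Expanding the squared norm via bilinearity and the reproducing property, $\norm{u_{\xbm}}_{H_k^f}^2 = k_f(\xbm,\xbm) - 2\bm{w}^T\kbm_t^f(\xbm) + \bm{w}^T\Kbm_t^f\bm{w}$; substituting $\bm{w}=(\Kbm_t^f)^{-1}\kbm_t^f(\xbm)$ collapses the last two terms and yields $k_f(\xbm,\xbm) - (\kbm_t^f(\xbm))^T(\Kbm_t^f)^{-1}\kbm_t^f(\xbm) = (\sigma_t^f)^2(\xbm)$ by~\eqref{eqn:GP-post}. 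Combining, $|f(\xbm)-\mu_t^f(\xbm)|\le B_f\,\sigma_t^f(\xbm)$, and the same argument with $(k_c,\Kbm_t^c,\kbm_t^c,\cbm_{1:t},B_c)$ gives the constraint bound. I do not anticipate a genuine obstacle here since this is the classical deterministic error bound for kernel interpolation (it is ``well-established'' as the statement notes, cf.~\citet{chowdhury2017kernelized}); the only point requiring care is the algebraic simplification of $\norm{u_{\xbm}}_{H_k^f}^2$ and making sure the weights are exactly those appearing in the posterior mean, which is why I would carry the computation through the functional $u_{\xbm}$ rather than by a more direct but less transparent matrix manipulation.
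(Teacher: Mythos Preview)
Your proof is correct and is precisely the standard argument for this deterministic RKHS error bound. Note, however, that the paper does not actually prove this lemma: it simply states it as ``a well-established result'' and cites \citet{chowdhury2017kernelized}, so there is no paper proof to compare against. Your derivation via the representer $u_{\xbm}$, the reproducing property, and Cauchy--Schwarz is exactly the classical argument that underlies the cited result, and all the algebraic steps (the vanishing of $u_{\xbm}$ at the design points, the identification of $\langle f,u_{\xbm}\rangle_{H_k^f}$ with the prediction error, and the collapse of $\norm{u_{\xbm}}_{H_k^f}^2$ to $(\sigma_t^f)^2(\xbm)$) are carried out cleanly.
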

Next, we present a lemma on the relationship between $I_t^f$ and $EI_t^f$, previously seen in~\cite{bull2011convergence}. 
\begin{lemma}\label{lem:I-bound-EI}
   At $\xbm\in C,  t\in\Nbb$,
 \begin{equation} \label{eqn:rkhs-bound-nonoise-1}
  \centering
  \begin{aligned}
     I_t^f(\xbm) - EI_{t}^f(\xbm) \leq B_f \sigma_{t}^f(\xbm).
  \end{aligned}
\end{equation}
\end{lemma}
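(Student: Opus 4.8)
\textbf{Proof proposal for Lemma~\ref{lem:I-bound-EI}.}

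The plan is to bound the difference $I_t^f(\xbm) - EI_t^f(\xbm)$ by splitting into the two natural cases according to the sign of $z_t^f(\xbm)$, or equivalently according to whether $f_t^+ - \mu_t^f(\xbm)$ is nonnegative. First, recall the closed forms: $I_t^f(\xbm) = \max\{f_t^+ - f(\xbm), 0\}$ and $EI_t^f(\xbm) = (f_t^+ - \mu_t^f(\xbm))\Phi(z_t^f(\xbm)) + \sigma_t^f(\xbm)\phi(z_t^f(\xbm))$. The key inputs are the RKHS confidence interval from Lemma~\ref{lem:rkhs-bound-nonoise}, namely $|f(\xbm) - \mu_t^f(\xbm)| \le B_f \sigma_t^f(\xbm)$, and the elementary properties of $\phi$ and $\Phi$ from Lemma~\ref{lem:phi} (in particular $0 < \phi(x) < 0.4$ and $\Phi(x) \in (0,1)$), together with the lower bound $EI_t^f(\xbm) \ge f_t^+ - \mu_t^f(\xbm)$ from Lemma~\ref{lem:EI}.

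In the case $f_t^+ \le f(\xbm)$, we have $I_t^f(\xbm) = 0$, so $I_t^f(\xbm) - EI_t^f(\xbm) = -EI_t^f(\xbm) \le 0 \le B_f \sigma_t^f(\xbm)$ since $EI_t^f(\xbm) \ge 0$; this case is immediate. In the remaining case $f_t^+ > f(\xbm)$, we have $I_t^f(\xbm) = f_t^+ - f(\xbm)$, and I would write
\begin{equation*}
I_t^f(\xbm) - EI_t^f(\xbm) = \big(f_t^+ - f(\xbm)\big) - \big(f_t^+ - \mu_t^f(\xbm)\big) + \big(f_t^+ - \mu_t^f(\xbm)\big) - EI_t^f(\xbm) = \big(\mu_t^f(\xbm) - f(\xbm)\big) + \Big[\big(f_t^+ - \mu_t^f(\xbm)\big) - EI_t^f(\xbm)\Big].
\end{equation*}
The first bracket is at most $|f(\xbm) - \mu_t^f(\xbm)| \le B_f \sigma_t^f(\xbm)$ by Lemma~\ref{lem:rkhs-bound-nonoise}, and the second bracket is nonpositive because $EI_t^f(\xbm) \ge f_t^+ - \mu_t^f(\xbm)$ by Lemma~\ref{lem:EI}. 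Hence $I_t^f(\xbm) - EI_t^f(\xbm) \le B_f \sigma_t^f(\xbm)$, which is the claimed bound.

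I do not anticipate a serious obstacle here; the result is essentially a repackaging of the confidence interval together with the elementary inequality $EI_t^f \ge f_t^+ - \mu_t^f$. The only point requiring a little care is making sure the decomposition is handled uniformly — it is cleanest to note that $I_t^f(\xbm) \le \max\{f_t^+ - f(\xbm),\, 0\} \le (f_t^+ - \mu_t^f(\xbm)) + |f(\xbm) - \mu_t^f(\xbm)| \le EI_t^f(\xbm) + B_f \sigma_t^f(\xbm)$, where the middle inequality uses $\max\{a+b,0\} \le \max\{a,0\} + |b|$ with $a = f_t^+ - \mu_t^f(\xbm)$ and $b = \mu_t^f(\xbm) - f(\xbm)$, and then $\max\{a,0\} \le EI_t^f(\xbm)$ follows since $EI_t^f(\xbm) \ge 0$ and $EI_t^f(\xbm) \ge a$. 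This one-line chain covers both cases simultaneously and avoids any case analysis.
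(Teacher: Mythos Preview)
Your proof is correct. The paper itself does not prove this lemma but merely states it with a citation to \cite{bull2011convergence}; your argument is essentially the standard one found there, combining the RKHS confidence bound $|f(\xbm)-\mu_t^f(\xbm)|\le B_f\sigma_t^f(\xbm)$ with the elementary inequality $EI_t^f(\xbm)\ge \max\{f_t^+-\mu_t^f(\xbm),0\}$.
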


\begin{lemma}\label{lem:EI-ratio-bound}
  The improvement function $I^f_{t}(\xbm)$ and $EI^f_{t}(\xbm)$ satisfy
  \begin{equation} \label{eqn:EI-ratio-bound-1}
  \centering
  \begin{aligned}
             I_{t}^f(\xbm) \leq \frac{\tau(B_f)}{\tau(-B_f)} EI^f_{t}(\xbm),
  \end{aligned}
  \end{equation}
  for $\forall  \xbm\in C$ and $t\geq 1$.
\end{lemma}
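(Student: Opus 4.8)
The plan is to reduce the inequality $I_t^f(\xbm) \le \frac{\tau(B_f)}{\tau(-B_f)} EI_t^f(\xbm)$ to a statement purely about the normalized quantity $z = z_t^f(\xbm)$ and the function $\tau$, exactly in the spirit of the analogous step in~\citet{bull2011convergence}. First I would dispose of the trivial cases: if $f_t^+ - f(\xbm) \le 0$ then $I_t^f(\xbm) = 0$ and the right-hand side is nonnegative by Lemma~\ref{lem:EI}, so the inequality holds. Likewise if $\sigma_t^f(\xbm) = 0$, then by the noise-free confidence interval (Lemma~\ref{lem:rkhs-bound-nonoise}) we have $f(\xbm) = \mu_t^f(\xbm)$, hence $I_t^f(\xbm) = \max\{f_t^+ - \mu_t^f(\xbm), 0\} = EI_t^f(\xbm)$ and the bound holds since $\tau(B_f)/\tau(-B_f) \ge 1$ (using monotonicity of $\tau$, Lemma~\ref{lem:phi}, and $B_f \ge -B_f$). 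So assume $\sigma_t^f(\xbm) > 0$ and $f_t^+ - f(\xbm) > 0$.

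Next, normalize by $\sigma_t^f(\xbm)$. Writing $u = \frac{f_t^+ - f(\xbm)}{\sigma_t^f(\xbm)}$, we have $I_t^f(\xbm) = \sigma_t^f(\xbm)\, u$ with $u > 0$, and $EI_t^f(\xbm) = \sigma_t^f(\xbm)\,\tau(z)$ where $\tau(z) = z\Phi(z) + \phi(z)$ and $z = z_t^f(\xbm) = \frac{f_t^+ - \mu_t^f(\xbm)}{\sigma_t^f(\xbm)}$. The confidence interval $|f(\xbm) - \mu_t^f(\xbm)| \le B_f \sigma_t^f(\xbm)$ from Lemma~\ref{lem:rkhs-bound-nonoise} translates, after dividing by $\sigma_t^f(\xbm)$, into $|u - z| \le B_f$, i.e.\ $z - B_f \le u \le z + B_f$. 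So it suffices to show
\begin{equation*}
  z + B_f \;\le\; \frac{\tau(B_f)}{\tau(-B_f)}\,\tau(z) \qquad \text{for all } z \in \Rbb,
\end{equation*}
since then $u \le z + B_f \le \frac{\tau(B_f)}{\tau(-B_f)}\tau(z) = \frac{\tau(B_f)}{\tau(-B_f)} \cdot \frac{EI_t^f(\xbm)}{\sigma_t^f(\xbm)}$, and multiplying back by $\sigma_t^f(\xbm)$ gives the claim. The reason $u \le z + B_f$ can be upgraded to the full statement for all real $z$ is that we also know $u > 0$, so in fact $u \le \max\{0, z + B_f\}$; but since $\tau > 0$ everywhere and $z + B_f$ may be handled uniformly, I would just prove the displayed scalar inequality directly.

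The core of the argument is therefore the one-variable inequality $\tau(-B_f)(z + B_f) \le \tau(B_f)\,\tau(z)$. I would prove it by exhibiting that $g(z) := \tau(z + B_f)$ and $h(z) := \frac{\tau(B_f)}{\tau(-B_f)}\tau(z)$ satisfy $g \le h$: note $\tau(z+B_f) \ge z + B_f$ always (since $\tau(w) \ge w$, as $\tau(w) - w = -w\Phi(-w) + \phi(w) \ge 0$), so it is enough to show $\tau(z + B_f)\,\tau(-B_f) \le \tau(B_f)\,\tau(z)$, equivalently $\frac{\tau(z+B_f)}{\tau(z)} \le \frac{\tau(B_f)}{\tau(-B_f)} = \frac{\tau(0 + B_f)}{\tau(0 - B_f)}$. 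This follows if the function $z \mapsto \frac{\tau(z + B_f)}{\tau(z - \text{?})}$... more cleanly: one shows that $\frac{\tau(z + a)}{\tau(z)}$ is non-increasing in $z$ for fixed $a = B_f > 0$, which reduces via logarithmic differentiation to the log-concavity of $\tau$ (equivalently that $\tau'/\tau = \Phi/\tau$ is non-increasing, which holds because $\tau$ is log-concave — a standard fact, since $\tau$ is the integral $\int_{-\infty}^z \Phi$ up to the right normalization and $\Phi$ is log-concave). Evaluating the monotone ratio at $z = -B_f$ gives $\frac{\tau(z+B_f)}{\tau(z)} \le \frac{\tau(0)}{\tau(-B_f)}$; this is weaker than needed, so instead I would directly compare at the right endpoint: since the ratio $\tau(z+B_f)/\tau(z)$ decreases in $z$ and we want its value at a general $z$ bounded by its value at $z = -B_f$...

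The main obstacle, and the place I would spend the most care, is pinning down exactly which monotonicity/log-concavity property of $\tau$ makes $\tau(-B_f)\,\tau(z + B_f) \le \tau(B_f)\,\tau(z)$ hold for \emph{all} $z \in \Rbb$; the cleanest route is to verify that $w \mapsto \log\tau(w)$ is concave (so $\tau$ is log-concave), deduce that $\log\tau(z+B_f) - \log\tau(z)$ is non-increasing in $z$, and then — rather than comparing to $z=0$ — observe that because $\tau(w)\ge w$ and $\tau(w) \to 0$ super-exponentially as $w \to -\infty$ while $\tau(w) \sim w$ as $w\to+\infty$, the supremum of $\tau(z+B_f)/\tau(z)$ over $z\in\Rbb$ is attained as $z\to-\infty$ and equals $\lim_{z\to-\infty}\tau(z+B_f)/\tau(z) = e^{B_f^2/2 + \text{lower order}}$, which one checks is $\le \tau(B_f)/\tau(-B_f)$. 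This asymptotic/limiting comparison is the delicate computation; everything else is bookkeeping with $\phi$, $\Phi$, and the inequalities already collected in Lemma~\ref{lem:phi} and Lemma~\ref{lem:EI}. I would present it by following the corresponding lemma in~\citet{bull2011convergence} closely, since the structure there is identical once the constraint and POF factor are stripped away (they play no role in this particular lemma).
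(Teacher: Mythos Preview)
Your reduction to the scalar inequality $z + B_f \le \frac{\tau(B_f)}{\tau(-B_f)}\,\tau(z)$ (needed only for $z > -B_f$, since $u>0$ forces $z > -B_f$, and for $z\le -B_f$ the left side is nonpositive) is correct, but your attempt to prove it goes off the rails. Two concrete errors: (i) the ratio $\tau(z+B_f)/\tau(z)$ is \emph{not} bounded as $z\to-\infty$; using $\tau(z)\sim\phi(z)/z^2$ one gets $\tau(z+B_f)/\tau(z)\sim e^{-zB_f - B_f^2/2}\to+\infty$, so your asymptotic claim that the limit equals $e^{B_f^2/2+\text{l.o.t.}}$ is wrong and the ``delicate computation'' you propose cannot succeed; (ii) when you restrict to $z\ge -B_f$ and get $\tau(z+B_f)/\tau(z)\le \tau(0)/\tau(-B_f)$ via log-concavity, you call this ``weaker than needed,'' but in fact $\tau(0)\le\tau(B_f)$ by monotonicity, so this bound already \emph{implies} what you want. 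Your argument was finished and you didn't notice.

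More to the point, both the detour through $\tau(z+B_f)$ and the appeal to log-concavity are unnecessary. The paper's proof is a three-line computation that uses only two facts you already cite: the confidence bound gives $z>-B_f$, hence by monotonicity $EI_t^f(\xbm)=\sigma_t^f(\xbm)\tau(z)>\tau(-B_f)\sigma_t^f(\xbm)$; Lemma~\ref{lem:I-bound-EI} gives $I_t^f(\xbm)\le EI_t^f(\xbm)+B_f\sigma_t^f(\xbm)$. Substituting the first into the second and using the elementary identity $\tau(B_f)-\tau(-B_f)=B_f$ (immediate from $\Phi(B_f)+\Phi(-B_f)=1$ and $\phi(B_f)=\phi(-B_f)$) yields $I_t^f(\xbm) < \bigl(1+B_f/\tau(-B_f)\bigr)EI_t^f(\xbm)=\frac{\tau(B_f)}{\tau(-B_f)}EI_t^f(\xbm)$. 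Equivalently, in your scalar variables, expand $\tau(B_f)=\tau(-B_f)+B_f$ to rewrite the target as $\tau(-B_f)\bigl(z-\tau(z)\bigr)\le B_f\bigl(\tau(z)-\tau(-B_f)\bigr)$; the left side is $\le 0$ since $\tau(w)\ge w$, and the right side is $\ge 0$ since $z>-B_f$. No log-concavity, no asymptotics.
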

\begin{proof}
   If $f^+_{t}-f(\xbm) \leq 0$, then $I_t^f(\xbm) = 0$. Since $EI_t^f(\xbm)\geq 0$,~\eqref{eqn:EI-ratio-bound-1} is trivial.
   If  $f^+_{t}-f(\xbm) > 0$, by Lemma~\ref{lem:rkhs-bound-nonoise}, 
 \begin{equation} \label{eqn:IEI-bound-ratio-pf-1}
 \centering
  \begin{aligned}
     f^+_{t}-\mu_{t}^f(\xbm) &=     f^+_{t}-f(\xbm)+f(\xbm)-\mu_{t}^f(\xbm) >f(\xbm)-\mu_{t}^f(\xbm)>- B_f\sigma_{t}^f(\xbm).
  \end{aligned}
\end{equation}
  From the monotonicity of $\tau$, we have 
 \begin{equation} \label{eqn:IEI-bound-ratio-pf-1.5}
 \centering
  \begin{aligned}
     \tau\left(\frac{f^+_{t}-\mu_{t}^f(\xbm)}{\sigma_{t}^f(\xbm)}\right) >\tau(-B_f),
  \end{aligned}
\end{equation}
 and therefore,  
 \begin{equation} \label{eqn:IEI-bound-ratio-pf-2}
 \centering
  \begin{aligned}
     EI_t^f(\xbm) = \sigma_t^f(\xbm)\tau\left(\frac{f^+_{t}-\mu_{t}^f(\xbm)}{\sigma_{t}^f(\xbm)}\right) >\tau(-B_f)\sigma_{t}^f(\xbm).
  \end{aligned}
\end{equation}
  From Lemma~\ref{lem:I-bound-EI}, 
  \begin{equation} \label{eqn:IEI-bound-ratio-pf-4}
  \centering
  \begin{aligned}
      I_t^f(\xbm)- EI_{t}^f(\xbm) \leq B_f\sigma_{t}^f(\xbm). 
  \end{aligned}
\end{equation}
  Applying~\eqref{eqn:IEI-bound-ratio-pf-4} to~\eqref{eqn:IEI-bound-ratio-pf-2} leads to 
 \begin{equation} \label{eqn:IEI-bound-ratio-pf-5}
 \centering
  \begin{aligned}
     EI_t^f(\xbm)  > \frac{\tau(-B_f)}{B_f+\tau(-B_f)} I_t^f(\xbm) = \frac{\tau(-B_f)}{\tau(B_f)} I_t^f(\xbm).
  \end{aligned}
\end{equation}
\end{proof}

Proof of Theorem~\ref{theorem:CEI-convg-nonoise} is given next.
\begin{proof}
  From Lemma~\ref{lem:f-bound}, 
  \begin{equation} \label{eqn:CEI-convg-nonoise-pf-1}
  \centering
  \begin{aligned}
        \sum_{i=0}^{t-1}  f^+_{i}-f^+_{i+1} = f^+_0 - f_t^+ \leq 2 B_f. 
  \end{aligned}
  \end{equation}
  Since $f^+_{i}-f^+_{i+1}\geq 0$, $f^+_{i}-f^+_{i+1} \geq \frac{2 B_f }{k}$ at most $k$ times for any $k\in \Nbb$. 
  Further, $f(\xbm_{t})\geq f_{t}^+$ for $\forall t\in\Nbb$.
  Choose $k=[t/2]$, where $[x]$ is the largest integer smaller than $x$ so that $2k\leq t\leq 2(k+1)$.
  Then, there exists $k \leq t_k \leq 2k$ so that $f_{t_k}^+-f_{t_k+1}^+<\frac{2B_f}{k}$ and $f_{t_{k}+1}^+-f(\xbm_{t_k+1}) \leq 0$.

  From Lemma~\ref{lem:EI-ratio-bound}, 
  \begin{equation} \label{eqn:CEI-convg-nonoise-pf-2}
  \centering
  \begin{aligned}
          r_t =& f^+_{t}-f(\xbm^*)\leq f^+_{t_k}-f(\xbm^*)\leq I_{t_k}^f(\xbm^*)\\
               \leq& \frac{\tau(B_f)}{\tau(-B_f)}  EI_{t_k}^f(\xbm^*) 
               = c_{\tau B} \frac{P_{t_k}(\xbm^*)}{P_{t_k}(\xbm^*)} EI_{t_k}^f(\xbm^*) \\
               \leq& c_{\tau B} \frac{P_{t_k}(\xbm_{t_k+1})}{P_{t_k}(\xbm^*)} EI_{t_k}^f(\xbm_{t_k+1}), \\
   \end{aligned}
  \end{equation}
  where $c_{\tau B}=\frac{\tau(B_f)}{\tau(-B_f)} $. 
  Using $P_t(\xbm)\leq 1$,~\eqref{eqn:CEI-convg-nonoise-pf-2} implies
  \begin{equation} \label{eqn:CEI-convg-nonoise-pf-3}
  \centering
  \begin{aligned}
          r_t 
                \leq&  \frac{c_{\tau B}}{P_{t_k}(\xbm^*)} \left[(f^+_{t_k}-\mu_{t_k}^f(\xbm_{t_k+1}))\Phi(z_{t_k}^f(\xbm_{t_k+1}))+\sigma_{t_k}^f(\xbm_{t_k+1})\phi(z_{t_k}^f(\xbm_{t_k+1}))\right]  \\
            \leq& \frac{c_{\tau B}}{P_{t_k}(\xbm^*)}\left[(f^+_{t_k}-\mu_{t_k}^f(\xbm_{t_k+1}))\Phi(z_{t_k}^f(\xbm_{t_k+1}))+0.4\sigma_{t_k}^f(\xbm_{t_k+1})\right],
   \end{aligned}
  \end{equation}
   where the last inequality uses $\phi(\cdot)<0.4$.
  From Lemma~\ref{lem:rkhs-bound-nonoise}, 
 \begin{equation} \label{eqn:CEI-convg-nonoise-pf-4}
  \centering
  \begin{aligned}
          f^+_{t_k}-\mu_{t_k}^f(\xbm_{t_k+1}) =& f^+_{t_k}-f^+_{t_k+1}+f^+_{t_k+1}-f(\xbm_{t_k+1})+f(\xbm_{t_k+1})-\mu_{t_k}^f(\xbm_{t_k+1}) \\
          \leq& f^+_{t_k}-f^+_{t_k+1}+B_f\sigma_{t_k}^f(\xbm_{t_k+1})\leq \frac{2B_f}{k} + B_f\sigma_{t_k}^f(\xbm_{t_k+1}).
    \end{aligned}
  \end{equation}
  Using~\eqref{eqn:CEI-convg-nonoise-pf-4} in~\eqref{eqn:CEI-convg-nonoise-pf-3}, we have 
  \begin{equation} \label{eqn:CEI-convg-nonoise-pf-5}
  \centering
  \begin{aligned}
          r_t 
            \leq& \frac{c_{\tau B}}{P_{t_k}(\xbm^*)} \left[ \frac{2B_f}{k} + (B_f+0.4)\sigma_{t_k}^f(\xbm_{t_k+1})\right].
   \end{aligned}
  \end{equation}

   Next, we consider the function $P_{t_k}$ at $\xbm^*$.
   Using the fact that $c(\xbm^*) \leq 0$, we have by Lemma~\ref{lem:rkhs-bound-nonoise},
   \begin{equation} \label{eqn:CEI-convg-nonoise-pf-6}
  \centering
  \begin{aligned}
     \mu_{t_k}^c(\xbm^*)\leq B_c \sigma_{t_k}^c(\xbm^*)+ c(\xbm^*) \leq  B_c \sigma_{t_k}^c(\xbm^*). 
    \end{aligned}
  \end{equation}
   Thus, 
  \begin{equation} \label{eqn:CEI-convg-nonoise-pf-7}
  \centering
  \begin{aligned}
     \frac{-\mu_{t_k}^c(\xbm^*)}{\sigma_{t_k}^c(\xbm^*)} \geq -B_c .
    \end{aligned}
  \end{equation}
  From the monotonicity of $\Phi$, we have 
   \begin{equation} \label{eqn:CEI-convg-nonoise-pf-8}
  \centering
  \begin{aligned}
    P_{t_k}(\xbm^*) =\Phi\left(\frac{-\mu_{t_k}^c(\xbm^*)}{\sigma_{t_k}^c(\xbm^*)}\right) \geq \Phi\left(-B_c\right). 
    \end{aligned}
  \end{equation}
  Applying~\eqref{eqn:CEI-convg-nonoise-pf-8} to~\eqref{eqn:CEI-convg-nonoise-pf-3}, we have
    \begin{equation} \label{eqn:CEI-convg-nonoise-pf-9}
  \centering
  \begin{aligned}
          r_t  \leq&  \frac{c_{\tau B}}{\Phi(-B_c) }\left[\frac{2B_f}{k}+(B_f+0.4)\sigma_{t_k}^f(\xbm_{t_k+1})\right].\\
  \end{aligned}
  \end{equation}
  As $t\to\infty$, $t_k\to\infty$ and $k\to\infty$. Further, if $\sigma_t^f(\xbm_{t+1})\to 0$, $\sigma_{t_k}^f(\xbm_{t_k+1})\to 0$ and $r_t\to 0$.  
  \end{proof}
Proof of Corollary~\ref{prop:EI-convg-rate-nonoise} is presented next.
\begin{proof}
  We consider the convergence rate for $r_t$ under additional assumptions for the kernel.  
  From Lemma~\ref{lem:EI-sigma}, for both SE and  Matérn kernels,
  $\sigma_{i}^f(\xbm_{i+1})  \geq C' k^{-\frac{\min\{\nu,1\}}{d}} \log^{\eta}(k) $ at most $k$ times for any $k\in \Nbb$ and $i=1,\dots,t$.
  
  Choose $k=[t/3]$ so that $3k\leq t\leq 3(k+1)$.
  Following the proof of Theorem~\ref{theorem:CEI-convg-nonoise}, there exists $k\leq t_k\leq 3k$ where $f^+_{t_k}-f^+_{t_k+1} < \frac{2B_f}{k}$, $f(\xbm_{t_k+1})\geq f_{t_k+1}^+$, and $\sigma_{t_k}^f(\xbm_{t_k+1})< C' k^{-\frac{\min\{\nu,1\}}{d}} \log^{\eta}(k)$.
  Similar to~\eqref{eqn:CEI-convg-nonoise-pf-9}, we can obtain
  \begin{equation} \label{eqn:CEI-convg-nonoise-pf-10}
  \centering
  \begin{aligned}
          r_t  \leq&  \frac{c_{B_f}}{\Phi(-B_c) }\left[\frac{2B_f}{k}+(B_f+0.4)  C' k^{-\frac{\min\{\nu,1\}}{d}} \log^{\eta}(k) \right].\\
  \end{aligned}
  \end{equation}
  The convergence rates follow.
\end{proof}

\subsection{Proofs for improved simple regret upper bound under frequentist assumptions}
Proof of Theorem~\ref{prop:EI-convg-rate-nonoise-gamma} is given below.
\begin{proof}
  From the proof of Lemma~\ref{lem:EI-sigma-gamma}, 
  we know $\sigma_i^f(\xbm_{i+1}) \geq \frac{ \sqrt{t \gamma_t^f}}{k}$ at most $k$ times for any $k\leq t$ and $i=1,\dots,t$.  
 
  Choose $k=[t/3]$ so that $3k\leq t\leq 3(k+1)$.
  Following the proof of Theorem~\ref{theorem:CEI-convg-nonoise}, there exists $k\leq t_k\leq 3k$ where $f^+_{t_k}-f^+_{t_k+1} < \frac{2B_f}{k}$, $f(\xbm_{t_k+1})\geq f_{t_k+1}^+$, and $\sigma_{t_k}^f(\xbm_{t_k+1})<  3\frac{\sqrt{t \gamma_t^f}}{t-3}$.
  Similar to~\eqref{eqn:CEI-convg-nonoise-pf-10}, we can obtain
  \begin{equation} \label{eqn:CEI-convg-nonoise-pf-11}
  \centering
  \begin{aligned}
          r_t  \leq&  \frac{c_{B_f}}{\Phi(-B_c) }\left[3\frac{2B_f}{t-3}+3(B_f+0.4)  \frac{\sqrt{t \gamma_t^f}}{t-3} \right].\\
  \end{aligned}
  \end{equation}
  The convergence rates of the simple regret upper bound follow from Lemma~\ref{lem:EI-sigma-gamma}.
\end{proof}

We provide the sample complexity of Theorem~\ref{prop:EI-convg-rate-nonoise-gamma} below.
\begin{corollary}\label{cor:EI-convg-complexity-1}
	Under Assumption~\ref{assp:rkhs}, the CEI algorithm achieves a $\epsilon$ sample complexity of
	\begin{equation} \label{eqn:CEI-convg-complexity-1}
		\centering
		\begin{aligned}
			\mathcal{O}\!\left( 
\frac{1}{\epsilon^{2}} [\log(1/\epsilon)]^{d+1} \right) \ \text{and }\ \mathcal{O}\!\left(
\epsilon^{-\frac{2\nu+d}{\nu}} \,\log(1/\epsilon)
\right),
		\end{aligned}
	\end{equation}
    for SE and Matérn kernels, respectively. 
\end{corollary}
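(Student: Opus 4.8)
The plan is to invert the two convergence rates of Theorem~\ref{prop:EI-convg-rate-nonoise-gamma}. That theorem (which already runs under Assumption~\ref{assp:rkhs}) supplies constants $C_{0}>0$ and $t_{0}\in\Nbb$, depending only on $B_f,B_c,d,\nu$ and the kernel, such that $r_t\leq C_{0}\,t^{-1/2}\log^{(d+1)/2}(t)$ for the SE kernel and $r_t\leq C_{0}\,t^{-\nu/(2\nu+d)}\log^{\nu/(2\nu+d)}(t)$ for Mat\'ern kernels, for every $t\geq t_{0}$. Moreover $r_t=f_t^{+}-f(\xbm^{*})$ is nonincreasing in $t$ since $f_t^{+}$ is (a monotonicity already used in Theorem~\ref{theorem:CEI-convg-nonoise}), so it suffices, for each kernel, to exhibit one value $t(\epsilon)$ of the asserted order at which the corresponding bound is $\leq\epsilon$; then $r_{t'}\leq\epsilon$ for all $t'\geq t(\epsilon)$.

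For the SE kernel, $C_{0}\,t^{-1/2}\log^{(d+1)/2}(t)\leq\epsilon$ is equivalent to $t\geq C_{0}^{2}\,\epsilon^{-2}\log^{d+1}(t)$, a self-referential inequality whose only (mild) difficulty is the $\log t$ on the right. I would remove it by a one-step bootstrap: adopt the ansatz $t=C\,\epsilon^{-2}\,[\log(1/\epsilon)]^{d+1}$ with $C$ a large constant and $\epsilon$ small enough that $\log(1/\epsilon)\geq1$. Then $\log t=\log C+2\log(1/\epsilon)+(d+1)\log\log(1/\epsilon)=\Theta(\log(1/\epsilon))$, hence $\log^{d+1}(t)=\mathcal{O}([\log(1/\epsilon)]^{d+1})$, and substituting this back shows $t\geq C_{0}^{2}\epsilon^{-2}\log^{d+1}(t)$ holds once $C$ is chosen large relative to $C_{0}$ and $d$. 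This yields the complexity $\mathcal{O}(\epsilon^{-2}[\log(1/\epsilon)]^{d+1})$.

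For Mat\'ern kernels, put $\alpha=\nu/(2\nu+d)$, so that $C_{0}\,t^{-\alpha}\log^{\alpha}(t)\leq\epsilon$ rearranges to $t\geq C_{0}^{1/\alpha}\,\epsilon^{-1/\alpha}\log(t)$ with $1/\alpha=(2\nu+d)/\nu$. The same bootstrap with the ansatz $t=C\,\epsilon^{-(2\nu+d)/\nu}\log(1/\epsilon)$ gives $\log t=\Theta(\log(1/\epsilon))$ (the leading term being $\tfrac{2\nu+d}{\nu}\log(1/\epsilon)$), and plugging back confirms the inequality for $C$ large. This gives $\mathcal{O}(\epsilon^{-(2\nu+d)/\nu}\log(1/\epsilon))$, as claimed. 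The remaining bookkeeping is routine: checking that for small $\epsilon$ the chosen $t(\epsilon)$ exceeds $t_{0}$ and that $\log(1/\epsilon)\geq1$, and that every constant folded into $C$ depends only on $C_{0},d,\nu$ and not on $\epsilon$. The main obstacle here is nothing deep — it is simply the self-referential logarithm, dispatched by the bootstrap above.
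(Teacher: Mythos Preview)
Your proposal is correct and follows essentially the same approach as the paper: both invert the convergence rates of Theorem~\ref{prop:EI-convg-rate-nonoise-gamma} to obtain the sample complexity. The paper's own proof is terser---it simply sets the rate equal to $\epsilon$ and asserts ``solving asymptotically for $t$'' yields the claimed complexity---whereas you spell out the bootstrap that dispatches the self-referential logarithm; but the underlying argument is identical.
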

\begin{proof}
    Using Theorem~\ref{prop:EI-convg-rate-nonoise-gamma},
to achieve simple regret of most $\epsilon$ for SE kernel, set
\[
t^{-\frac{1}{2}} \log^{\frac{d+1}{2}}(t) = \epsilon .
\]
Solving asymptotically for $t$ gives the sample complexity
\[
t(\epsilon) = \mathcal{O}\!\left( 
\frac{1}{\epsilon^{2}} [\log(1/\epsilon)]^{d+1} 
\right).
\]
Similarly, for Matérn  kernels, we have 
\begin{equation} \label{eqn:CEI-convg-complexity-pf-2}
		\centering
		\begin{aligned}
		\epsilon = t^{\frac{-\nu}{2\nu+d}} \log^{\frac{\nu}{2\nu+d}}(t),
		\end{aligned}
	\end{equation}
    which completes the proof.
\end{proof}

\section{Proofs for simple  regret upper bound under Bayesian objective assumption}

We state the boundedness of $f$ and $c$ as a Lemma for easy reference. 
\begin{lemma}\label{lem:f-bound-bayesian}
  Under Assumption~\ref{assp:GP}, there exists $M_f>0$  such that $|f(\xbm)|\leq M_f$ with probability $\geq 1-\delta/3$.
  The constraint function is bounded by its RKHS norm bound $|c(\xbm)|\leq B_c$. 
\end{lemma}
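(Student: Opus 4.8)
The constraint bound requires no work: it is the exact analogue of Lemma~\ref{lem:f-bound}. Since $c\in\mathcal{H}_k^c$, the reproducing property and Cauchy--Schwarz give $|c(\xbm)| = |\langle c,k_c(\cdot,\xbm)\rangle_{H_k^c}| \le \|c\|_{H_k^c}\sqrt{k_c(\xbm,\xbm)}\le B_c$ for every $\xbm\in C$, using $k_c(\xbm,\xbm)=1$ from Assumption~\ref{assp:GP}. So the entire content of the lemma is the uniform bound on the GP sample path $f$, and that is where I would focus.

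For $f$, the plan is the standard discretization argument for GP sample paths on compact sets (as in~\cite{srinivas2009gaussian}): combine the pointwise Gaussian marginals of $f$ with the Lipschitz continuity granted by Assumption~\ref{assp:GP}. First I would fix the Lipschitz parameter: since $f$ is $L_f$-Lipschitz (in the $1$-norm) with probability at least $1-da_f e^{-L_f^2/b_f^2}$, I choose $L_f$ large enough --- depending only on $a_f,b_f,d,\delta$ --- that this failure probability is at most $\delta/6$; call the Lipschitz event $\mathcal{E}_L$. Next, since $C\subset[0,r]^d$, I take a finite grid $C_\tau\subset[0,r]^d$ with a fixed mesh $\tau$ (any convenient constant) such that every $\xbm\in C$ lies within $1$-norm distance $d\tau$ of some grid point, with $N_\tau := |C_\tau| = \mathcal{O}((r/\tau)^d)$. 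Because $k_f(\xbm,\xbm)=1$, each marginal satisfies $f(\xbm')\sim\mathcal{N}(0,1)$, so Lemma~\ref{lem:phi} gives $\Pbb\{|f(\xbm')|>s\}\le e^{-s^2/2}$, and a union bound over $C_\tau$ yields $\Pbb\{\max_{\xbm'\in C_\tau}|f(\xbm')|>s\}\le N_\tau e^{-s^2/2}$. Taking $s=\sqrt{2\log(6N_\tau/\delta)}$ makes the right-hand side at most $\delta/6$; call this grid event $\mathcal{E}_g$.

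Finally, on $\mathcal{E}_L\cap\mathcal{E}_g$, for any $\xbm\in C$ I pick its nearest grid point $\xbm'$ and bound $|f(\xbm)|\le|f(\xbm')|+L_f\|\xbm-\xbm'\|_1\le s+L_f d\tau=:M_f$, a finite constant independent of the iteration count $t$ (it depends only on $d,r,a_f,b_f,\delta$); since $\Pbb\{(\mathcal{E}_L\cap\mathcal{E}_g)^c\}\le\delta/6+\delta/6=\delta/3$, this gives the claim. The only delicate part is the constant bookkeeping --- splitting the budget $\delta/3$ between the two failure events while keeping $M_f$ finite --- but there is no real obstacle, since $\tau$ is fixed freely and $s$ grows only like $\sqrt{\log N_\tau}$. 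A slicker alternative would be to invoke the Borell--TIS inequality, writing $\sup_{\xbm\in C}f(\xbm)$ as concentrating around its mean (finite by Dudley's entropy bound on the compact $C$) with sub-Gaussian tails of variance proxy $\sup_{\xbm}k_f(\xbm,\xbm)=1$; but the discretization route stays within the paper's existing toolkit, so I would present that one.
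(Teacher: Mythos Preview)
The paper does not actually supply a proof of this lemma: after the sentence ``We state the boundedness result as a Lemma for easy reference'' it simply records the statement and moves straight to Lemma~\ref{lem:fmu}. So there is nothing to compare your argument against.

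That said, your proposal is correct and is exactly the intended route. The constraint bound is Lemma~\ref{lem:f-bound} verbatim. For the objective, your discretization-plus-Lipschitz argument is the standard one from \cite{srinivas2009gaussian} (which the paper leans on throughout Section~\ref{se:cei-bayesian}), and the bookkeeping --- choosing $L_f$ so that $da_f e^{-L_f^2/b_f^2}\le\delta/6$, union-bounding over a fixed grid with $s=\sqrt{2\log(6N_\tau/\delta)}$, then extending via Lipschitz --- is clean and yields a $t$-independent $M_f$ as required. You also silently fixed the sign typo in Assumption~\ref{assp:GP} (the exponent should be $-L_f^2/b_f^2$, else the probability bound is vacuous); that is the right reading. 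The Borell--TIS alternative you mention would also work and is arguably tidier, but the discretization argument is what the paper's citations point to, so presenting that one is the natural choice.
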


We recall a well-known result on confidence interval of $|f(\xbm)-\mu_t^f(\xbm)|$ under Assumption~\ref{assp:GP}~\cite{srinivas2009gaussian}.
\begin{lemma}\label{lem:fmu}
Given $\delta\in(0,1)$, let  $\beta = 2\log(1/\delta)$. For any given $\xbm\in C$ and $t\in\Nbb$,   
\begin{equation} \label{eqn:fmu-1}
 \centering
  \begin{aligned}
     |f(\xbm) - \mu_{t}^f(\xbm)  | \leq \sqrt{\beta} \sigma_{t}^f(\xbm),
  \end{aligned}
\end{equation}
holds with probability $\geq 1-\delta$. 
\end{lemma}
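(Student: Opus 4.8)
The plan is to reduce the statement to a one-dimensional Gaussian tail bound. Under Assumption~\ref{assp:GP}, the objective $f$ is a sample path of $GP(0,k_f)$, so for any fixed $\xbm\in C$ and any fixed $t\in\Nbb$ the posterior law of $f(\xbm)$ conditional on the history $\{\xbm_{1:t},\fbm_{1:t}\}$ is exactly the Gaussian $\mathcal{N}\big(\mu_t^f(\xbm),(\sigma_t^f(\xbm))^2\big)$ given by the update formulas in~\eqref{eqn:GP-post}. This is the standard GP conditioning identity, and it holds even though the sample locations $\xbm_1,\dots,\xbm_t$ are chosen adaptively from the past observations, since $(f(\xbm),f(\xbm_1),\dots,f(\xbm_t))$ is jointly Gaussian for any deterministic collection of points and conditioning a jointly Gaussian vector on part of its coordinates yields a Gaussian whose mean and covariance are the usual Schur-complement expressions.

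First I would dispose of the degenerate case $\sigma_t^f(\xbm)=0$: then the right-hand side is $0$, and since the posterior variance vanishes we have $f(\xbm)=\mu_t^f(\xbm)$ almost surely, so the inequality holds trivially. Assuming $\sigma_t^f(\xbm)>0$, define the normalized deviation $\xi := \big(f(\xbm)-\mu_t^f(\xbm)\big)/\sigma_t^f(\xbm)$, which by the conditioning identity satisfies $\xi\sim\mathcal{N}(0,1)$ conditional on the history. Applying the standard-normal tail estimate from Lemma~\ref{lem:phi} with $c=\sqrt{\beta}>0$ gives $\Pbb\{\xi>\sqrt\beta\}\le \frac{1}{2}e^{-\beta/2}$ and $\Pbb\{\xi<-\sqrt\beta\}\le \frac{1}{2}e^{-\beta/2}$, hence $\Pbb\{|\xi|>\sqrt\beta\}\le e^{-\beta/2}$. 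Substituting $\beta=2\log(1/\delta)$ yields $e^{-\beta/2}=e^{-\log(1/\delta)}=\delta$, so $\Pbb\{|f(\xbm)-\mu_t^f(\xbm)|>\sqrt\beta\,\sigma_t^f(\xbm)\}\le\delta$; since this conditional probability bound holds for every realization of the history, the tower property gives the unconditional bound, i.e.\ the inequality holds with probability $\ge 1-\delta$.

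The computation is entirely routine; the single point that merits a careful sentence is the first one, namely that the posterior of $f(\xbm)$ is the stated Gaussian despite the adaptivity of the design sequence, which is precisely the subtlety addressed in \citet{srinivas2009gaussian}. Note that no union bound over $\xbm\in C$ or over $t$ is needed here because the claim is pointwise in both arguments; the union-bound machinery (and the quantities $\pi_t$, $c_\alpha$, $\beta_t$ appearing in Theorem~\ref{theorem:CEI-convg-noise}) only enters later, when this lemma is combined with a finite discretization of the compact set $C$ together with the Lipschitz assumption on $f$ to upgrade the pointwise bound to one that is uniform over $\xbm$.
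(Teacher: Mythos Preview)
Your proof is correct and follows essentially the same approach as the paper: use the Gaussian posterior $f(\xbm)\mid\xbm_{1:t},\fbm_{1:t}\sim\mathcal{N}(\mu_t^f(\xbm),(\sigma_t^f(\xbm))^2)$, apply the two one-sided tail bounds from Lemma~\ref{lem:phi}, and set $e^{-\beta/2}=\delta$. Your version is slightly more careful (handling $\sigma_t^f(\xbm)=0$ explicitly and commenting on adaptivity and the absence of a union bound), but the argument is the same.
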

\begin{proof}
 We prove the inequalities for $f$. Under Assumption~\ref{assp:GP}, $f(\xbm)\sim \mathcal{N}(\mu_{t}(\xbm),\sigma_{t}^2(\xbm))$. By Lemma~\ref{lem:phi}, 
 \begin{equation} \label{eqn:fmu-a-pf-1}
 \centering
  \begin{aligned}
     \Pbb\left\{ f(\xbm)-\mu_t^f(\xbm) > \sqrt{\beta} \sigma_t^f(\xbm)\right\} = 1-\Phi\left(\sqrt{\beta} \right) \leq \frac{1}{2} e^{-\frac{\beta}{2}}.
  \end{aligned}
 \end{equation}
  Similarly, 
 \begin{equation} \label{eqn:fmu-a-pf-2}
 \centering
  \begin{aligned}
     \Pbb\left\{ f(\xbm)-\mu_t^f(\xbm) < -\sqrt{\beta} \sigma_t^f(\xbm)\right\} \leq \frac{1}{2} e^{-\frac{\beta}{2}}.
  \end{aligned}
 \end{equation}
  Thus, 
   \begin{equation} \label{eqn:fmu-a-pf-3}
 \centering
  \begin{aligned}
     \Pbb\left\{ |f(\xbm)-\mu_t^f(\xbm)| < \sqrt{\beta} \sigma_t^f(\xbm)\right\} \geq 1- e^{-\frac{\beta}{2}}.
  \end{aligned}
 \end{equation}
  Let 
     $e^{-\frac{\beta}{2}} = \delta$
  and~\eqref{eqn:fmu-1} is proven. 
\end{proof}
\begin{lemma}\label{lem:fmu-t}
Given $\delta\in(0,1)$, let  $\beta = 2\log(\pi_t/\delta)$, where $\pi_t=\frac{\pi^2t^2}{6}$. Then, for all $t\in\Nbb$, 
\begin{equation} \label{eqn:fmu-t-1}
 \centering
  \begin{aligned}
     |f(\xbm) - \mu_{t}^f(\xbm)  | \leq \sqrt{\beta_t} \sigma_{t}^f(\xbm),
  \end{aligned}
\end{equation}
holds with probability $\geq 1-\delta$. 
\end{lemma}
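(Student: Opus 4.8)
The plan is to obtain Lemma~\ref{lem:fmu-t} from the fixed-time concentration bound of Lemma~\ref{lem:fmu} by a standard union bound over the iteration index, exploiting the fact that the weights $\pi_t = \pi^2 t^2 / 6$ are normalized so that $\sum_{t=1}^{\infty}\pi_t^{-1} = 1$.

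First, I would fix $\xbm \in C$ and, for each $t \in \Nbb$, invoke Lemma~\ref{lem:fmu} with its confidence parameter set to $\delta/\pi_t \in (0,1)$ rather than $\delta$. Since replacing $\delta$ by $\delta/\pi_t$ in the definition $\beta = 2\log(1/\delta)$ yields exactly $2\log(\pi_t/\delta) = \beta_t$, this gives
\begin{equation*}
 \Pbb\left\{ |f(\xbm) - \mu_t^f(\xbm)| > \sqrt{\beta_t}\,\sigma_t^f(\xbm) \right\} \leq \frac{\delta}{\pi_t}
\end{equation*}
for every $t$. Next, I would apply the union bound across all $t \in \Nbb$ and use the Basel identity $\sum_{t=1}^{\infty} t^{-2} = \pi^2/6$ to get
\begin{equation*}
 \Pbb\left\{ \exists\, t\in\Nbb:\ |f(\xbm) - \mu_t^f(\xbm)| > \sqrt{\beta_t}\,\sigma_t^f(\xbm) \right\} \leq \sum_{t=1}^{\infty} \frac{\delta}{\pi_t} = \frac{6\delta}{\pi^2}\sum_{t=1}^{\infty}\frac{1}{t^2} = \delta.
\end{equation*}
Passing to the complement event then establishes~\eqref{eqn:fmu-t-1} simultaneously for all $t$, with probability at least $1-\delta$.

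I do not anticipate any genuine obstacle: the argument is essentially bookkeeping. The two points that need a moment of care are (i) checking that the substitution $\delta \mapsto \delta/\pi_t$ in Lemma~\ref{lem:fmu} reproduces precisely the stated $\beta_t$, so that the lemma is consistent with the constant $\beta_t$ used downstream in Theorem~\ref{theorem:CEI-convg-noise}; and (ii) verifying that $\{\delta/\pi_t\}_{t\geq 1}$ sums to exactly $\delta$ via $\sum_{t\geq 1} t^{-2} = \pi^2/6$, which is what makes the choice $\pi_t = \pi^2 t^2/6$ the natural one. If a version uniform over $\xbm \in C$ were additionally required, I would discretize $C$ and combine this with Lipschitz continuity of $f$ as in~\citet{srinivas2009gaussian}; but for the statement as written a single invocation of Lemma~\ref{lem:fmu} per time step suffices.
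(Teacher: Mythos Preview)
Your proposal is correct and is exactly the standard argument the paper relies on: the paper states Lemma~\ref{lem:fmu-t} without proof, treating it as a routine consequence of Lemma~\ref{lem:fmu} via the union bound over $t$ with weights $\pi_t^{-1}$ (following \citet{srinivas2009gaussian}), which is precisely what you wrote out.
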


The next lemma address $I_t^f$ under the Bayesian assumption.
\begin{lemma}\label{lem:Icdf}
 Under Assumption~\ref{assp:GP}, the probability distribution of $I_t^f$ satisfies
\begin{equation} \label{eqn:Icdf-1}
 \centering
  \begin{aligned}
     \Pbb\{I_t^f(\xbm) \leq a\} = \begin{cases}
                    0,   \ &a<0,\\
                  \Phi\left(\frac{a}{\sigma_{t}^f(\xbm)}-z_{t}^f(\xbm)\right), \  &a\geq 0.
                  \end{cases}
  \end{aligned}
\end{equation}
\end{lemma}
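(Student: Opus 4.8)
\textbf{Proof proposal for Lemma~\ref{lem:Icdf}.} The plan is a direct computation of the cumulative distribution function of the random variable $I_t^f(\xbm) = \max\{f_t^+ - f(\xbm), 0\}$, using that under Assumption~\ref{assp:GP} the posterior law of $f(\xbm)$ given $\xbm_{1:t},\fbm_{1:t}$ is $\mathcal{N}(\mu_t^f(\xbm),(\sigma_t^f(\xbm))^2)$, together with the symmetry identity $1-\Phi(x)=\Phi(-x)$ for the standard normal CDF.

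First I would dispose of the case $a<0$: since $I_t^f(\xbm)\ge 0$ by definition of the max, the event $\{I_t^f(\xbm)\le a\}$ is empty whenever $a<0$, so its probability is $0$. Next, for $a\ge 0$, I would observe that $\max\{f_t^+ - f(\xbm),0\}\le a$ holds if and only if $f_t^+ - f(\xbm)\le a$, i.e. $f(\xbm)\ge f_t^+ - a$ (the inequality $0\le a$ being automatic). Hence
\begin{equation*}
\Pbb\{I_t^f(\xbm)\le a\} = \Pbb\{f(\xbm)\ge f_t^+ - a\} = 1 - \Phi\!\left(\frac{f_t^+ - a - \mu_t^f(\xbm)}{\sigma_t^f(\xbm)}\right),
\end{equation*}
where the last step standardizes $f(\xbm)$ using its Gaussian posterior.

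Finally I would apply $1-\Phi(x)=\Phi(-x)$ with $x = \big(f_t^+ - a - \mu_t^f(\xbm)\big)/\sigma_t^f(\xbm)$ to rewrite the right-hand side as $\Phi\!\big((a - f_t^+ + \mu_t^f(\xbm))/\sigma_t^f(\xbm)\big) = \Phi\!\big(a/\sigma_t^f(\xbm) - z_t^f(\xbm)\big)$, recalling $z_t^f(\xbm) = (f_t^+ - \mu_t^f(\xbm))/\sigma_t^f(\xbm)$. This yields exactly the claimed expression~\eqref{eqn:Icdf-1}. There is no real obstacle here; the only point requiring a small amount of care is the reduction of the event $\{\max\{f_t^+-f(\xbm),0\}\le a\}$ to a half-line event for $f(\xbm)$, which must treat the sign of $a$ correctly (this is precisely why the CDF is $0$ for $a<0$ and nontrivial only for $a\ge 0$). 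Implicitly this also uses $\sigma_t^f(\xbm)>0$ so the standardization is well defined; if $\sigma_t^f(\xbm)=0$ the statement degenerates and can be read in the limiting sense.
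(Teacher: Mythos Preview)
Your proposal is correct and follows essentially the same approach as the paper's own proof: treat $a<0$ via nonnegativity of $I_t^f$, reduce $\{I_t^f(\xbm)\le a\}$ for $a\ge 0$ to a half-line event for the Gaussian posterior of $f(\xbm)$, and apply the symmetry $1-\Phi(x)=\Phi(-x)$ before rewriting in terms of $z_t^f(\xbm)$. Your added remark on the degenerate case $\sigma_t^f(\xbm)=0$ is a minor but reasonable caveat that the paper does not mention.
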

\begin{proof}
   Under Assumption~\ref{assp:GP}, at a given $t$, $f(\xbm)\sim\mathcal{N}(\mu_{t}^f(\xbm),\sigma_{t}^f(\xbm))$. Since $I_t^f(\xbm)\geq 0$ for all $\xbm$,~\eqref{eqn:Icdf-1} follows immediately if $a<0$.
   For $a\geq 0$,
\begin{equation*} \label{eqn:Icdf-pf-1} 
 \centering
  \begin{aligned}
   \Pbb\{I_t^f(\xbm) \leq a\} = \Pbb\{f^+_{t}-f(\xbm) \leq a\} = 1- \Pbb\{f(\xbm) \leq f^+_{t}-a\}.
  \end{aligned}
\end{equation*}
  Using basic properties of the standard normal CDF,
\begin{equation*} \label{eqn:Icdf-pf-2} 
 \centering
  \begin{aligned}
   1-\Pbb\{f(\xbm) \leq f^+_{t}-a\}=1-\Phi\left(\frac{f_t^+-a-\mu_t^f(\xbm)}{\sigma_t^f(\xbm)} \right)=\Phi\left(\frac{a-f_t^++\mu_t^f(\xbm)}{\sigma_t^f(\xbm)} \right).
  \end{aligned}
\end{equation*}
\end{proof}

Next, we present the relationship between $I_t^f(\xbm)$ and $EI_t^f(\xbm)$.
\begin{lemma}\label{lem:IEIbound}
Given $\delta\in(0,1)$, let $\beta = \max\{1.44,2\log(c_{\alpha}/\delta)\}$, where constant $c_{\alpha} = \frac{1+2\pi}{2\pi}$. Under Assumption~\ref{assp:GP}, at given $ \xbm\in C$ and $t\in \Nbb$,   
\begin{equation} \label{eqn:IEI-bound-1}
 \centering
  \begin{aligned}
     \Pbb\left\{|I_t^f(\xbm) - EI_{t}^f(\xbm)  | \leq \sqrt{\beta} \sigma_{t}^f(\xbm)\right\}\geq 1-\delta. 
  \end{aligned}
\end{equation}
\end{lemma}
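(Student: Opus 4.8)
The plan is to reduce everything to a one–dimensional statement about the standard normal variable $\xi=(f(\xbm)-\mu_t^f(\xbm))/\sigma_t^f(\xbm)$, using the exact law of $I_t^f$ from Lemma~\ref{lem:Icdf}. Write $\sigma=\sigma_t^f(\xbm)$, $z=z_t^f(\xbm)$, and set $\tau(z)=z\Phi(z)+\phi(z)$, so that $EI_t^f(\xbm)=\sigma\,\tau(z)$ by~\eqref{eqn:EI-1}; if $\sigma=0$ then $\xbm$ is a sample point and $I_t^f(\xbm)=EI_t^f(\xbm)$, so we may assume $\sigma>0$. Recall from Lemma~\ref{lem:Icdf} that $\Pbb\{I_t^f(\xbm)\le a\}=\Phi(a/\sigma-z)$ for $a\ge 0$ and equals $0$ for $a<0$. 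I would bound the target event by the sum of an upper and a lower deviation probability, $\Pbb\{I_t^f(\xbm)>EI_t^f(\xbm)+\sqrt\beta\,\sigma\}+\Pbb\{I_t^f(\xbm)<EI_t^f(\xbm)-\sqrt\beta\,\sigma\}$, and evaluate each with Lemma~\ref{lem:Icdf}.

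For the upper deviation, since $EI_t^f(\xbm)+\sqrt\beta\,\sigma\ge 0$, its probability equals $1-\Phi\big(\tau(z)+\sqrt\beta-z\big)=\Phi\big(z-\tau(z)-\sqrt\beta\big)$. By Lemma~\ref{lem:EI} we have $EI_t^f(\xbm)\ge f_t^+-\mu_t^f(\xbm)$, i.e. $\tau(z)\ge z$, so $z-\tau(z)-\sqrt\beta\le-\sqrt\beta<0$, and the Gaussian tail estimate in Lemma~\ref{lem:phi} gives that this is at most $\Phi(-\sqrt\beta)\le\tfrac12 e^{-\beta/2}$. For the lower deviation, if $EI_t^f(\xbm)-\sqrt\beta\,\sigma<0$, i.e. $\tau(z)<\sqrt\beta$, the probability is $0$ because $I_t^f\ge 0$; otherwise it equals $\Phi\big(\tau(z)-\sqrt\beta-z\big)$. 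Let $z_\beta$ be the unique root of $\tau(z_\beta)=\sqrt\beta$ ($\tau$ is strictly increasing from $0$ to $\infty$ since $\tau'(z)=\Phi(z)>0$, and $z_\beta>1$ because $\tau(1)=\Phi(1)+\phi(1)<1.2\le\sqrt\beta$). On $\{\tau(z)\ge\sqrt\beta\}$ one has $z\ge z_\beta$, and since $\frac{d}{dz}\big(\tau(z)-z\big)=\Phi(z)-1\le 0$, the function $\tau(z)-z$ is non-increasing, so $\tau(z)-z\le\tau(z_\beta)-z_\beta=\sqrt\beta-z_\beta$; hence $\tau(z)-\sqrt\beta-z\le-z_\beta<0$ and Lemma~\ref{lem:phi} bounds this term by $\Phi(-z_\beta)\le\tfrac12 e^{-z_\beta^2/2}$.

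Combining the two pieces,
\[
\Pbb\{|I_t^f(\xbm)-EI_t^f(\xbm)|>\sqrt\beta\,\sigma\}\;\le\;\tfrac12 e^{-\beta/2}+\tfrac12 e^{-z_\beta^2/2}\;=\;\tfrac12 e^{-\beta/2}\Big(1+e^{(\beta-z_\beta^2)/2}\Big).
\]
To close the argument I would control $\beta-z_\beta^2$: from $\sqrt\beta=z_\beta\Phi(z_\beta)+\phi(z_\beta)$, discarding the nonpositive term $z_\beta^2(\Phi(z_\beta)^2-1)$, one gets $\beta-z_\beta^2\le 2z_\beta\Phi(z_\beta)\phi(z_\beta)+\phi(z_\beta)^2\le\sup_{z\ge 0}\big[2z\Phi(z)\phi(z)+\phi(z)^2\big]$, and a short numerical check shows this supremum is strictly below $2\log(2c_\alpha-1)=2\log(1+1/\pi)$. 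Hence $1+e^{(\beta-z_\beta^2)/2}<2c_\alpha$, so $\Pbb\{|I_t^f(\xbm)-EI_t^f(\xbm)|>\sqrt\beta\,\sigma\}<c_\alpha e^{-\beta/2}$. Finally, for $\beta=\max\{1.44,2\log(c_\alpha/\delta)\}$ one always has $c_\alpha e^{-\beta/2}\le\delta$ — it is an equality when $\beta=2\log(c_\alpha/\delta)$, and when $\beta=1.44$ the condition $2\log(c_\alpha/\delta)\le 1.44$ gives $\delta\ge c_\alpha e^{-0.72}=c_\alpha e^{-\beta/2}$ — which yields~\eqref{eqn:IEI-bound-1}.

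The step I expect to be the main obstacle is the lower-deviation term. A naive symmetric-tail estimate fails because $EI_t^f$ itself can exceed $\sqrt\beta\,\sigma$ when $z$ is large (while $I_t^f$ has an atom at $0$), so one must show the lower deviation is confined to $\{z\ge z_\beta\}$ and exploit the monotonicity of $\tau(z)-z$ there. This is also precisely where the threshold $1.44$ (which forces $z_\beta>1$, so that $\Phi(-z_\beta)\le\tfrac12 e^{-z_\beta^2/2}$ applies) and the specific constant $c_\alpha=\frac{1+2\pi}{2\pi}$ enter, through the tight numerical inequality bounding $\sup_{z\ge 0}[2z\Phi(z)\phi(z)+\phi(z)^2]$.
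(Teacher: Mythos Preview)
Your proposal is correct. The skeleton---split into upper and lower deviations, evaluate each with Lemma~\ref{lem:Icdf}, and use the Gaussian tail bound from Lemma~\ref{lem:phi}---is the same as the paper's, and your treatment of the upper deviation is identical (both use $EI_t^f\ge f_t^+-\mu_t^f$, i.e.\ $\tau(z)\ge z$, to get $\tfrac12 e^{-\beta/2}$).

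Where you genuinely diverge is the lower deviation. The paper argues as follows: on $\{\tau(z)\ge w\}$ it first shows $z\ge 0$, then bounds $\tau(z)-z<\phi(z)$, introduces auxiliary constants $c_1(w)=\tfrac{w-\phi(0)}{w-\phi(0)+\phi(w-\phi(0))}$ and $c_2(w)=[1-\phi(c_1(w))]^2+[c_1(w)]^2$, derives $z>c_1(w)w$, and finally expands $\Phi(-w+\phi(c_1(w)w))$ via the mean value theorem to reach $c_{\pi 1}e^{-w^2/2}$; the threshold $w\ge 1.2$ is needed there to force $c_2(w)>1$. Your route is shorter: you introduce the single root $z_\beta$ of $\tau(z_\beta)=\sqrt\beta$, use the monotonicity of $\tau(z)-z$ (since $\tau'(z)=\Phi(z)$) to bound the lower tail by $\Phi(-z_\beta)\le\tfrac12 e^{-z_\beta^2/2}$ directly, and then close with the algebraic identity $\beta-z_\beta^2=z_\beta^2(\Phi(z_\beta)^2-1)+2z_\beta\Phi(z_\beta)\phi(z_\beta)+\phi(z_\beta)^2$ together with the numerical check $\sup_{z\ge 0}\big[2z\Phi(z)\phi(z)+\phi(z)^2\big]=g(1)\approx 0.466<2\log(1+1/\pi)\approx 0.553$. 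Both routes land on exactly the constant $c_\alpha=\tfrac{1+2\pi}{2\pi}$, but yours avoids the $c_1,c_2$ machinery and the mean-value expansion.

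One small imprecision: you say the threshold $1.44$ is needed ``so that $\Phi(-z_\beta)\le\tfrac12 e^{-z_\beta^2/2}$ applies,'' but Lemma~\ref{lem:phi} only requires $z_\beta>0$, not $z_\beta>1$. In fact your argument goes through as soon as $\sqrt\beta>\phi(0)$ (so that $z_\beta>0$ and the sup over $z\ge 0$ is legitimate), and this is automatic from $\delta<1<c_\alpha$ since then $\beta\ge 2\log c_\alpha\approx 0.295>\phi(0)^2$. So your proof actually shows the $1.44$ floor is not needed---a mild strengthening over the paper's argument, where $w\ge 1.2$ is genuinely used to make $c_2(w)>1$.
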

\begin{proof}
Given a scalar $w>1$, we consider the probabilities
\begin{equation} \label{eqn:IEI-bound-pf-3}
  \centering
  \begin{aligned}
      \Pbb\left\{ I_t^f(\xbm) > \sigma_{t}^f(\xbm) w + EI_{t}^f(\xbm)\right\} \quad \text{and} \quad  \Pbb\left\{I_t(\xbm)< -\sigma_{t}^f(\xbm) w +EI_{t}^f(\xbm)\right\}.
  \end{aligned}
\end{equation}
Consider the first probability in~\eqref{eqn:IEI-bound-pf-3}.
From Lemma~\ref{lem:EI}, $EI_{t}(\xbm)\geq 0$ for $\forall \xbm$ and $t$. Therefore, $\sigma_{t}(\xbm) w + EI_{t}(\xbm) > 0$. From Lemma~\ref{lem:EI}, Lemma~\ref{lem:Icdf},  and the monotonicity of $\Phi$, we have
 \begin{equation} \label{eqn:IEI-bound-pf-4}
  \centering
  \begin{aligned}
       \Pbb\left\{ I_t^f(\xbm) > \sigma_{t}^f (\xbm) w + EI_{t}^f(\xbm)\right\} =&  1-\Phi\left(\frac{\sigma_{t}^f(\xbm) w+EI_{t}^f(\xbm)-f^+_{t}+ \mu_{t}^f(\xbm)}{\sigma_{t}^f(\xbm)} \right) \\
               \leq& 1-\Phi(w) \leq \frac{1}{2} e^{-\frac{w^2}{2}},
  \end{aligned}
\end{equation}
where the last inequality is from Lemma~\ref{lem:phi}.

For the second probability in~\eqref{eqn:IEI-bound-pf-3}, we further distinguish between two cases.
First, consider $-\sigma_{t}^f(\xbm) w +EI_{t}^f(\xbm)<0$. From Lemma~\ref{lem:Icdf},
 \begin{equation} \label{eqn:IEI-bound-pf-5}
  \centering
  \begin{aligned}
        \Pbb\left\{I_t^f(\xbm)< -\sigma_{t}^f (\xbm) w +EI_t(\xbm)\right\} = 0.
  \end{aligned}
\end{equation}
Second,  consider the premise $ -\sigma_{t}^f(\xbm) w+ EI_{t}^f(\xbm)\geq 0$. By Lemma~\ref{lem:Icdf}, we have
  \begin{equation} \label{eqn:IEI-bound-pf-6}
  \centering
  \begin{aligned}
       \Pbb\left\{I_t^f(\xbm)< -\sigma_{t}^f(\xbm) w +EI_{t}^f(\xbm)\right\} = \Phi\left( - w+\frac{EI_{t}^f(\xbm)-f^+_{t}+\mu_{t}^f(\xbm)}{\sigma_{t}^f(\xbm)} \right).
  \end{aligned}
\end{equation}
To proceed, we show that $f^+_{t}-\mu_{t}^f(\xbm) \geq 0$. Suppose on the contrary, $f^+_{t}-\mu_{t}^f(\xbm)<0$ and thus $z_{t}^f(\xbm)<0$. From Lemma~\ref{lem:EI}, 
  \begin{equation} \label{eqn:IEI-bound-pf-7}
  \centering
  \begin{aligned}
   \frac{EI_{t}^f(\xbm)}{ \sigma_{t}^f(\xbm)}  < \phi(z_{t}^f(\xbm)) \leq \phi(0) < 1 \leq w,
  \end{aligned}
  \end{equation}
  which contradicts the premise of this case.
Thus, we have $f^+_{t}-\mu_{t}^f(\xbm)\geq 0$ (and $z_{t}^f(\xbm)\geq 0$).
From the definition~\eqref{eqn:EI-1}, since $\Phi\in (0,1)$,
  \begin{equation} \label{eqn:IEI-bound-pf-8}
  \centering
  \begin{aligned}
      \frac{EI_{t}^f(\xbm)-f^+_{t}+\mu_{t}^f(\xbm)}{\sigma_{t}^f(\xbm)}=& \left[z_{t}^f(\xbm)\left(\Phi(z_{t}^f(\xbm))-1\right)+\phi(z_{t}^f(\xbm))\right]           <\phi(z^f_{t}(\xbm)).
  \end{aligned}
  \end{equation}
In addition, by the premise of this case and Lemma~\ref{lem:EI},
  \begin{equation} \label{eqn:IEI-bound-pf-9}
  \centering
  \begin{aligned}
    w\leq  \frac{EI_{t}^f(\xbm)}{\sigma_{t}^f(\xbm)}
       \leq z_{t}^f(\xbm)+\phi(z_{t}^f(\xbm)).\\
  \end{aligned}
  \end{equation}
 Given that $w>1$ and $\phi(0)\geq \phi(z_{t}^f(\xbm))$, we have
  \begin{equation} \label{eqn:IEI-bound-pf-9.5}
  \centering
  \begin{aligned}
   z_{t}^f(\xbm)+\phi(0)>z_{t}^f(\xbm)+\phi(z_{t}^f(\xbm))>w, \ z_{t}^f(\xbm)>w-\phi(0)>0.
  \end{aligned}
  \end{equation}
  As $z_{t}(\xbm)\geq 0$ increases, $\phi(z_{t}(\xbm))>0$ decreases. Thus, we have
\begin{equation} \label{eqn:IEI-bound-pf-10}
 \centering
  \begin{aligned}
    \frac{z_{t}^f(\xbm)}{\phi(z_{t}^f(\xbm))} > \frac{w-\phi(0)}{\phi(w-\phi(0))}, 
    \ \phi(z_{t}^f(\xbm))< \frac{\phi(w-\phi(0))}{w-\phi(0)} z_{t}^f(\xbm) .
  \end{aligned}
\end{equation}
  Denote $c_1(w) = \frac{w-\phi(0)}{w-\phi(0)+\phi(w-\phi(0))}$.
  Applying~\eqref{eqn:IEI-bound-pf-10} to~\eqref{eqn:IEI-bound-pf-9}, we obtain
  \begin{equation} \label{eqn:IEI-bound-pf-11}
  \centering
  \begin{aligned}
    c_1(w) w < z_{t}^f(\xbm), \ \ \phi(z_{t}^f(\xbm)) < \phi( c_1(w) w) .\\
  \end{aligned}
  \end{equation}
  Applying~\eqref{eqn:IEI-bound-pf-11} and~\eqref{eqn:IEI-bound-pf-8} to~\eqref{eqn:IEI-bound-pf-6}, we obtain
  \begin{equation} \label{eqn:IEI-bound-pf-12}
  \centering
  \begin{aligned}
       \Pbb\left\{I_t^f(\xbm)< - w\sigma_t^f(\xbm) + EI_{t}^f(\xbm)\right\} <& \Phi\left( -w+ \phi(z_t^f(\xbm)) \right) \\
   <& \Phi\left( -w+ \phi(c_1(w) w) \right).
  \end{aligned}
\end{equation}
 Notice that $\phi(c_1(w) w)<\phi(c_1(w)) < \phi(c_1(w)) w$ due to $w>1$.
By the definition of $\Phi$ and mean value theorem,
  \begin{equation} \label{eqn:IEI-bound-pf-13}
  \centering
  \begin{aligned}
        &\Phi\bigl(-w+  \phi(c_1(w) w) \bigr) = \Phi(-w)+ \int_{- w}^{-w+\phi(c_1(w) w)} \frac{1}{\sqrt{2\pi}} e^{-\frac{1}{2}x^2} dx
         \leq \Phi(-w)+\\&\frac{1}{\sqrt{2\pi}} e^{-\frac{1}{2} (w- \phi(c_1(w) w))^2} \phi(c_1(w) w)
         \leq \Phi(- w)+\frac{1}{2\pi} e^{-\frac{1}{2} ((1-\phi(c_1(w))) w)^2}  e^{-\frac{1}{2} (c_1(w) w)^2} \\
          &\leq \Phi(-w)+\frac{1}{2\pi} e^{-\frac{1}{2} c_2(w) w^2} 
          \leq \frac{1}{2} e^{-\frac{1}{2} w^2} + \frac{1}{2\pi} e^{-\frac{1}{2}c_2(w) w^2}, 
  \end{aligned}
  \end{equation}
  where $c_2(w)=[1-\phi(c_1(w))]^2+[c_1(w)]^2$.
The last inequality in~\eqref{eqn:IEI-bound-pf-13} again uses Lemma~\ref{lem:phi}.
  Notice that $c_2(w)$ increases with $w$ and for $w\geq 1.2$, $c_2(w)>1$.
  Thus, $e^{-\frac{1}{2} w^2} > e^{-\frac{1}{2} c_2(w) w^2}$ for $w\geq 1.2$, which simplifies~\eqref{eqn:IEI-bound-pf-13} to
  \begin{equation} \label{eqn:IEI-bound-pf-13.5}
  \centering
  \begin{aligned}
        \Phi\bigl( -w+ & \phi(c_1(w) w) \bigr) &< c_{\pi 1} e^{-\frac{1}{2} w^2}. 
  \end{aligned}
\end{equation}
where $c_{\pi 1}=\frac{1+\pi}{2\pi}$.
Therefore, by~\eqref{eqn:IEI-bound-pf-12} and~\eqref{eqn:IEI-bound-pf-13.5}, if $w\geq 1.2$,
  \begin{equation} \label{eqn:IEI-bound-pf-14}
  \centering
  \begin{aligned}
       \Pbb\left\{I_t(\xbm)< -\sigma_{t}(\xbm) w +EI_{t}(\xbm)\right\} < c_{\pi 1}  e^{-\frac{1}{2} w^2}.
  \end{aligned}
\end{equation}
Combining~\eqref{eqn:IEI-bound-pf-14} with~\eqref{eqn:IEI-bound-pf-4} and~\eqref{eqn:IEI-bound-pf-5}, we have
  \begin{equation} \label{eqn:IEI-bound-pf-15}
  \centering
  \begin{aligned}
       \Pbb\left\{ \left|I_t^f(\xbm) - EI_{t}^f(\xbm)\right| > w \sigma_{t}^f(\xbm) \right\} <c_{\alpha} e^{-\frac{1}{2} w^2},
  \end{aligned}
\end{equation}
  where $c_{\alpha}=\frac{1+2\pi}{2\pi}$ for $w\geq 1.2$.
  The probability in~\eqref{eqn:IEI-bound-pf-15} monotonically decreases with $w$.
  Let
    $\delta = c_{\alpha} e^{-\frac{1}{2} w^2}$.
  Then, taking the logarithm of $\delta$ leads to
    $\log( \frac{1+2\pi}{2\pi \delta}) = \frac{1}{2} w^2$.
  Let $\beta=\max\{w^2,1.2^2\}$, and the proof is complete.
\end{proof}

The relationship between $I_t^f(\xbm)$ and $EI_t^f(\xbm)$ under the GP prior assumption is given in the following lemma.
\begin{lemma}\label{lem:IEIbound-ratio-bay}
Given $\delta\in(0,1)$, let $\beta = 2\log(2c_{\alpha}/\delta)$, where $c_{\alpha} = \frac{1+2\pi}{2\pi}$. At given $\xbm\in C$ and $t\in \Nbb$,   
\begin{equation} \label{eqn:IEI-bound-ratio-bay-1}
 \centering
  \begin{aligned}
    \frac{\tau(-\sqrt{\beta})}{\tau(\sqrt{\beta})} I_t^f(\xbm) \leq  EI_{t}^f(\xbm),
  \end{aligned}
\end{equation}
  holds with probability $\geq 1-\delta$
\end{lemma}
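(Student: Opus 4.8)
The statement is the Bayesian counterpart of Lemma~\ref{lem:EI-ratio-bound}: in the frequentist case the deterministic RKHS confidence bound $|f(\xbm)-\mu_t^f(\xbm)|\le B_f\sigma_t^f(\xbm)$ and the deterministic gap bound $I_t^f-EI_t^f\le B_f\sigma_t^f$ were used, whereas here we must replace both by their high-probability analogues, Lemma~\ref{lem:fmu} and Lemma~\ref{lem:IEIbound}, and combine them through a union bound. The plan is therefore: (i) pick $\beta=2\log(2c_{\alpha}/\delta)$ and verify this single choice simultaneously triggers both confidence events at level $\delta/2$; (ii) on the resulting probability-$(1-\delta)$ event, run the same deterministic argument as in the proof of Lemma~\ref{lem:EI-ratio-bound}; (iii) conclude using the elementary identity $\tau(x)=\tau(-x)+x$.

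For step (i): applying Lemma~\ref{lem:IEIbound} with $\delta$ replaced by $\delta/2$ requires its parameter to be $\max\{1.44,\,2\log(2c_{\alpha}/\delta)\}$; since $2c_{\alpha}=\tfrac{1+2\pi}{\pi}>2.05>e^{0.72}$, we have $2\log(2c_{\alpha}/\delta)\ge 2\log(2c_{\alpha})>1.44$ for every $\delta\in(0,1)$, so this parameter equals $\beta$ and hence $|I_t^f(\xbm)-EI_t^f(\xbm)|\le\sqrt{\beta}\,\sigma_t^f(\xbm)$ with probability $\ge1-\delta/2$. Applying Lemma~\ref{lem:fmu} with $\delta$ replaced by $\delta/2$ gives $|f(\xbm)-\mu_t^f(\xbm)|\le\sqrt{2\log(2/\delta)}\,\sigma_t^f(\xbm)$ with probability $\ge1-\delta/2$; since $c_{\alpha}>1$ we have $\beta=2\log(2c_{\alpha}/\delta)\ge 2\log(2/\delta)$, and enlarging the multiplier only weakens the inequality, so $|f(\xbm)-\mu_t^f(\xbm)|\le\sqrt{\beta}\,\sigma_t^f(\xbm)$ also holds with probability $\ge1-\delta/2$. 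A union bound then makes both inequalities hold simultaneously with probability $\ge1-\delta$.

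For step (ii): condition on that event. If $f^+_t-f(\xbm)\le0$, then $I_t^f(\xbm)=0$ and the claim is trivial because $EI_t^f(\xbm)\ge0$ by Lemma~\ref{lem:EI}. Otherwise $f^+_t-\mu_t^f(\xbm)=(f^+_t-f(\xbm))+(f(\xbm)-\mu_t^f(\xbm))>f(\xbm)-\mu_t^f(\xbm)\ge-\sqrt{\beta}\,\sigma_t^f(\xbm)$, so $z_t^f(\xbm)>-\sqrt{\beta}$; by monotonicity of $\tau$ (Lemma~\ref{lem:phi}) and $EI_t^f(\xbm)=\sigma_t^f(\xbm)\tau(z_t^f(\xbm))$, this yields $EI_t^f(\xbm)>\tau(-\sqrt{\beta})\,\sigma_t^f(\xbm)>0$. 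Combining with $I_t^f(\xbm)\le EI_t^f(\xbm)+\sqrt{\beta}\,\sigma_t^f(\xbm)$ from Lemma~\ref{lem:IEIbound} gives $I_t^f(\xbm)<\bigl(1+\sqrt{\beta}/\tau(-\sqrt{\beta})\bigr)EI_t^f(\xbm)$. For step (iii), from $\tau(x)=x\Phi(x)+\phi(x)$, $\Phi(x)+\Phi(-x)=1$, and the evenness of $\phi$ we get $\tau(x)-\tau(-x)=x$, hence $1+\sqrt{\beta}/\tau(-\sqrt{\beta})=\tau(\sqrt{\beta})/\tau(-\sqrt{\beta})$, and rearranging gives $\frac{\tau(-\sqrt{\beta})}{\tau(\sqrt{\beta})}I_t^f(\xbm)\le EI_t^f(\xbm)$, as desired.

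\textbf{Main obstacle.} There is no deep difficulty here; the argument is a direct probabilistic lift of Lemma~\ref{lem:EI-ratio-bound}. The one point needing care is the calibration bookkeeping in step (i): one must check that the \emph{same} $\beta=2\log(2c_{\alpha}/\delta)$ clears the $1.44$ floor of Lemma~\ref{lem:IEIbound}, is no smaller than the $2\log(2/\delta)$ required by Lemma~\ref{lem:fmu}, and leaves each failure probability at most $\delta/2$ so that the union bound closes at $1-\delta$. The small identity $\tau(x)-\tau(-x)=x$ (and the fact $\tau(-\sqrt{\beta})>0$, so the division is legitimate) should be recorded explicitly.
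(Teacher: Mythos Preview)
Your proposal is correct and follows essentially the same route as the paper's proof: both split into the trivial case $I_t^f(\xbm)=0$ and otherwise combine the confidence bound (Lemma~\ref{lem:fmu}) with the $|I_t^f-EI_t^f|$ bound (Lemma~\ref{lem:IEIbound}) at level $\delta/2$ each, then use monotonicity of $\tau$ and the identity $\tau(\sqrt{\beta})=\sqrt{\beta}+\tau(-\sqrt{\beta})$. Your step~(i) is in fact more careful than the paper's, which invokes the two lemmas with $\delta/2$ but does not explicitly verify that the single choice $\beta=2\log(2c_{\alpha}/\delta)$ clears the $1.44$ floor of Lemma~\ref{lem:IEIbound} and dominates the $2\log(2/\delta)$ needed for Lemma~\ref{lem:fmu}.
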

\begin{proof}
   From Lemma~\ref{lem:fmu}, with probability $\geq 1-\delta$,~\eqref{eqn:fmu-1} stands. 
   If $f^+_{t}-f(\xbm) \leq 0$, then $I_t(\xbm) = 0$. Since $EI_t(\xbm)\geq 0$,~\eqref{eqn:IEI-bound-ratio-bay-1} is trivial.
   If  $f^+_{t}-f(\xbm) > 0$, by Lemma~\ref{lem:fmu}, 
 \begin{equation} \label{eqn:IEI-bound-ratio-bay-pf-1}
 \centering
  \begin{aligned}
     f^+_{t}-\mu_{t}^f(\xbm) &=      f^+_{t}-f(\xbm)+f(\xbm)-\mu_{t}^f(\xbm) >f(\xbm)-\mu_{t}^f(\xbm)\\
               &>-\sqrt{\beta}\sigma_{t}^f(\xbm),
  \end{aligned}
\end{equation}
  with probability greater than $\geq 1-\delta/2$. From the monotonicity of $\tau$, we have 
 \begin{equation} \label{eqn:IEI-bound-ratio-bay-pf-1.5}
 \centering
  \begin{aligned}
     \tau\left(\frac{f^+_{t}-\mu_{t}^f(\xbm)}{\sigma_{t}^f(\xbm)}\right) >\tau(-\sqrt{\beta}),
  \end{aligned}
\end{equation}
 and therefore,  
 \begin{equation} \label{eqn:IEI-bound-ratio-bay-pf-2}
 \centering
  \begin{aligned}
     EI_t^f(\xbm) = \sigma_t^f(\xbm)\tau\left(\frac{f^+_{t}-\mu_{t}^f(\xbm)}{\sigma_{t}^f(\xbm)}\right) >\tau(-\sqrt{\beta})\sigma_{t}^f(\xbm),
  \end{aligned}
\end{equation}
  with probability greater than $ 1-\delta/2$.
  Using $\delta/2$ in Lemma~\ref{lem:IEIbound}, 
  \begin{equation} \label{eqn:IEI-bound-ratio-bay-pf-4}
  \centering
  \begin{aligned}
      I_t^f(\xbm)- EI_{t}^f(\xbm) \leq \sqrt{\beta}\sigma_{t}^f(\xbm),  
  \end{aligned}
\end{equation}
  with probability $\geq 1-\delta/2$.
  Applying~\eqref{eqn:IEI-bound-ratio-bay-pf-4} to~\eqref{eqn:IEI-bound-ratio-bay-pf-2} with union bound leads to 
 \begin{equation} \label{eqn:IEI-bound-ratio-bay-pf-5}
 \centering
  \begin{aligned}
     EI_t^f(\xbm)  > \frac{\tau(-\sqrt{\beta})}{\sqrt{\beta}+\tau(-\sqrt{\beta})} I_t^f(\xbm) = \frac{\tau(-\sqrt{\beta})}{\tau(\sqrt{\beta})} I_t^f(\xbm),
  \end{aligned}
\end{equation}
  with probability greater than $1-\delta$.
\end{proof}

We can now prove Theorem~\ref{theorem:CEI-convg-noise}.
\begin{proof}
  From Lemma~\ref{lem:f-bound-bayesian},
  \begin{equation} \label{eqn:CEI-convg-pf-3}
  \centering
  \begin{aligned}
        \sum_{i=0}^{t-1}  f^+_{i}-f^+_{i+1} = f^+_0 - f_t^+ \leq 2 M_f,
  \end{aligned}
  \end{equation}
  with probability $\geq 1-\delta/3$. 
  Next, consider $f_i^+-\mu_i^f(x_{i+1})$. 
  Recall that $\beta_t = 2\log(3\pi_t/\delta)$. From Lemma~\ref{lem:fmu-t} Lemma~\ref{lem:sigma-noise} and Lemma~\ref{lem:variancebound}, we have 
  \begin{equation} \label{eqn:CEI-convg-pf-new-1}
  \centering
  \begin{aligned}
        \sum_{i=0}^{t-1}  \max\{f_i^+-\mu_i^f(\xbm_{i+1}),0 \}=& \sum_{i=0}^{t-1}\max\{f^+_{i}-f(\xbm_{i+1})+f(\xbm_{i+1})-\mu_{i}^f(\xbm_{i+1}),0\}\\
          \leq& \sum_{i=0}^{t-1} f^+_{i}-f^+_{i+1}+\beta^{1/2}_t\sigma_{i}^f(\xbm_{i+1})\leq 2M_f +\beta^{1/2}_t\sqrt{C_{\gamma}t\gamma_t^f},
  \end{aligned}
  \end{equation}
  with probability $\geq 1-2\delta/3$ via union bound. 
  Given that $ \max\{f_i^+-\mu_i^f(\xbm_{i+1}),0 \}\geq 0$, $ \max\{f_i^+-\mu_i^f(\xbm_{i+1}),0 \} \geq \frac{2M_f}{k}+\frac{\beta^{1/2}_t}{k}\sqrt{C_{\gamma}t\gamma_t^f}$ at most $k$ times for any $k\in \Nbb$ with probability $\geq 1-2\delta/3$.  
  Choose $k=[t/2]$, where $[x]$ is the largest integer smaller than $x$ so that $2k\leq t\leq 2(k+1)$.
  Then, choose the first index $t_k$ where $k \leq t_k \leq 2k$ so that $\max\{f_{t_k}^+-\mu_{t_k}^f(\xbm_{t_k+1}),0 \} < \frac{2M_f}{k}+\frac{\beta^{1/2}_t}{k}\sqrt{C_{\gamma}t\gamma_t^f}$. As discussed above, such a $t_k$ exists with probability $\geq 1-2\delta/3$. 
  We note that maximum information gain and its upper bound does not depend on the optimization path. 
  Importantly, the choice of $t_k$ does not depend on random information after iteration $t_k$.

  From Lemma~\ref{lem:IEIbound-ratio-bay} and $\beta=2\log(6c_{\alpha}/\delta)$, with probability $\geq 1-\delta/3$,
  \begin{equation} \label{eqn:CEI-convg-pf-4}
  \centering
  \begin{aligned}
          r_t =& f^+_{t}-f(\xbm^*)\leq f^+_{t_k}-f(\xbm^*)\leq I_{t_k}^f(\xbm^*)\\
               \leq& \frac{\tau(\beta^{1/2})}{\tau(-\beta^{1/2})} EI_{t_k}^f(\xbm^*) 
               = c_{\tau}(\beta)\frac{P_{t_k}(\xbm^*)}{P_{t_k}(\xbm^*)} EI_{t_k}^f(\xbm^*) 
               \leq c_{\tau}(\beta)\frac{P_{t_k}(\xbm_{t_k+1})}{P_{t_k}(\xbm^*)} EI_{t_k}^f(\xbm_{t_k+1}) \\
                =& c_{\tau}(\beta)\frac{P_{t_k}(\xbm_{t_k+1})}{P_{t_k}(\xbm^*)}  \left[(f^+_{t_k}-\mu_{t_k}^f(\xbm_{t_k+1}))\Phi(z_{t_k}^f(\xbm_{t_k+1}))+\sigma_{t_k}^f(\xbm_{t_k+1})\phi(z_{t_k}^f(\xbm_{t_k+1}))\right]\\
            \leq& c_{\tau}(\beta) \frac{P_{t_k}(\xbm_{t_k+1})}{P_{t_k}(\xbm^*)}\left[(f^+_{t_k}-\mu_{t_k}^f(\xbm_{t_k+1}))\Phi(z_{t_k}^f(\xbm_{t_k+1}))+0.4\sigma_{t_k}^f(\xbm_{t_k+1})\right] ,
   \end{aligned}
  \end{equation}
   where the last inequality uses $\phi(\cdot)<0.4$.
From the choice of $t_k$,~\eqref{eqn:CEI-convg-pf-4} leads to  
  \begin{equation} \label{eqn:CEI-convg-pf-6}
  \centering
  \begin{aligned}
          r_t  \leq& c_{\tau}(\beta)\frac{P_{t_k}(\xbm_{t_k+1})}{P_{t_k}(\xbm^*)} \left[\frac{2M_f}{k}+\frac{\beta^{1/2}_t}{k}\sqrt{C_{\gamma}t\gamma_t^f}+(0.4+\beta^{1/2})\sigma_{t_k}^f(\xbm_{t_k+1})\right],\\
  \end{aligned}
  \end{equation}
  with probability $\geq 1-\delta$. 
  Next, we consider the probability function $P_{t_k}$ at $\xbm^*$ and $\xbm_{t_k+1}$.
   Using the fact that $c(\xbm^*) \leq 0$, we have by Lemma~\ref{lem:fmu} at $\xbm^*$ and $t_k$,
   \begin{equation} \label{eqn:CEI-convg-pf-7}
  \centering
  \begin{aligned}
     \mu_{t_k}^c(\xbm^*) \leq B_c \sigma_{t_k}^c(\xbm^*)+ c(\xbm^*) \leq  B_c \sigma_{t_k}^c(\xbm^*).
    \end{aligned}
  \end{equation}
  Thus,  we can write  
  \begin{equation} \label{eqn:CEI-convg-pf-8}
  \centering
  \begin{aligned}
     \frac{-\mu_{t_k}^c(\xbm^*)}{\sigma_{t_k}^c(\xbm^*)} \geq -B_c.
    \end{aligned}
  \end{equation}
  From the monotonicity of $\Phi$, we have
   \begin{equation} \label{eqn:CEI-convg-pf-9}
  \centering
  \begin{aligned}
     \Phi\left(\frac{-\mu_{t_k}^c(\xbm^*)}{\sigma_{t_k}^c(\xbm^*)}\right) \geq \Phi\left(-B_c\right),
    \end{aligned}
  \end{equation}
   Using~\eqref{eqn:CEI-convg-pf-9}, the $P_{t_k}$ functions have  
   \begin{equation} \label{eqn:CEI-convg-pf-10}
  \centering
  \begin{aligned}
     \frac{P_{t_k}(\xbm_{t_k+1})}{P_{t_k}(\xbm^*)}  \leq  \frac{1}{\Phi(-B_c)}.
    \end{aligned}
  \end{equation} 
  Applying~\eqref{eqn:CEI-convg-pf-10} to~\eqref{eqn:CEI-convg-pf-6}, we have
    \begin{equation} \label{eqn:CEI-convg-pf-11}
  \centering
  \begin{aligned}
          r_t  \leq&  c_{\tau}(\beta)\frac{1}{\Phi(
          -B_c)}\left[\frac{2M_f}{k}+\frac{\beta^{1/2}_t}{k}\sqrt{C_{\gamma}t\gamma_t^f}+(0.4+\beta^{1/2})\sigma_{t_k}^f(\xbm_{t_k+1})\right],\\
  \end{aligned}
  \end{equation}
  with probability $\geq 1-\delta$.

\end{proof}
The proof of Theorem~\ref{prop:EI-convg-rate-gamma} is next.
\begin{proof}
  From Lemma~\ref{lem:EI-sigma-gamma},
    $\sigma_{i}^f(\xbm_{i+1})  \geq \frac{\sqrt{\gamma_t^f t}}{k}$, where $i=0,\dots,t-1$, at most $k$ times for any $k\in \Nbb$ and $k\leq t$.
   Choose $k=[t/3]$ which leads to $3k\leq t\leq 3(k+1)$. 
   Let $t_k$ be the first index in $[k,3k]$ so that $\sigma_{t_k}^f(\xbm_{t_k+1}) \leq \frac{\sqrt{t \gamma^f_t}}{k}$ and $\max\{f_{t_k}^+-\mu_{t_k}^f(\xbm_{t_k+1}),0 \} < \frac{2M_f}{k}+\frac{\beta^{1/2}_t}{k}\sqrt{C_{\gamma}t\gamma_t^f}$, which exists with probablity $\geq 1-2\delta/3$. Notice that $\beta_t^{1/2}=\mathcal{O}(\log^{1/2}(t))$. Following the proof for~\eqref{eqn:CEI-convg-pf-11}, we have
    \begin{equation} \label{eqn:CEI-convg-pf-12}
  \centering
  \begin{aligned}
          r_t  \leq&  c_{\tau}(\beta)\frac{1}{\Phi(-B_c)}\left[\frac{2M_f}{k}+\frac{\beta^{1/2}_t}{k}\sqrt{C_{\gamma}t\gamma_t^f}+(0.4+\beta^{1/2})\frac{\sqrt{t \gamma^f_t}}{k}\right],\\
  \end{aligned}
  \end{equation}
  with probability $\geq 1-\delta$.
  Using Lemma~\ref{lem:gammarate},  the proof is complete.

\end{proof}

\section{Test problems}\label{se:exp_contour}

The mathematical formulations of the testing problems in Section~\ref{se:testproblem} are given in this section.
The objective, constraint functions and the optimal $f$ of Problem 1 is given below.
\begin{equation} \label{ex3}
  \centering
  \begin{aligned}
    &f(\xbm)=  \sin(x_1)+ x_2,\\  
	&c(\xbm)= \sin(x_1)\sin(x_2) +0.95\leq 0,\\
					&x_i \in [0, 6],  i= 1, 2,\\
					&f^* = 0.25.
\end{aligned}
  \end{equation}
 
The objective, constraint functions and the optimal $f$ of Problem 2 is given below.
\begin{equation} \label{ex4}
  \centering
  \begin{aligned}
    &f(\mathbf{x})= x_1+ x_2\\
					&c_{1}(\xbm)=  -0.5 \sin( 2\pi (x_1^2-2 x_2))-x_1-2x_2 +1.5\leq 0\\
					&c_{2}(\xbm)= x_1^2+x^2_2 -1.5\leq 0\\
					&x_i \in [0, 1], i = 1, 2\\
					&f^* = 0.6.
\end{aligned}
  \end{equation}

  The objective, constraint functions and the optimal $f$ of Problem 3 is given below.
\begin{equation} \label{ex5}
  \centering
  \begin{aligned}
    &f(\mathbf{x})=  x_1 + x_2 + x_3 + x_4\\
					&c_{1}=  1.1-\sum_{i=1}^4 E_i \exp\left( \sum_{j=1}^4 -A_{j,i}(x_j - P_{j,i})^2 \right)\\
					&x_i \in [0, 1], i = 1,\dots,4\\
					&E = [1, 1.2, 3, 3.2]^\top\\
					&P = 
					\begin{bmatrix}
						0.131 & 0.232 & 0.234 & 0.404 \\
						0.169 & 0.413 & 0.145 & 0.882 \\
						0.556 & 0.830 & 0.352 & 0.873 \\
						0.012 & 0.373 & 0.288 & 0.574
					\end{bmatrix}\\
					&A = 
					\begin{bmatrix}
						10 & 0.05 & 3 & 17 \\
						3 & 10 & 3.5 & 8 \\
						17 & 17 & 1.7 & 0.05 \\
						3.5 & 0.1 & 10 & 10
					\end{bmatrix}\\
					&f^* = 0.
\end{aligned}
  \end{equation}

The objective, constraint functions and the optimal $f$ of Problem 4 is given below.
\begin{equation} \label{ex6}
  \centering
  \begin{aligned}
    &f(\mathbf{x}) = - \sum_{i=1}^{4} \alpha_i \exp \left( -\sum_{j=1}^{6} A_{ij} (x_j - P_{ij})^2 \right)\\
					&c(\xbm)= \sum_{j=1}^4 x_j -3 \\
					&x_i \in [0, 1], i = 1, \dots,6\\
					&\alpha = [1.0, 1.2, 3.0, 3.2]^\top\\
					&A = 
					\begin{bmatrix}
						10 & 3.0 & 17 & 3.5 & 1.7 & 8.0 \\
						0.05 & 10 & 17 & 0.1 & 8.0 & 14 \\
						3.0 & 3.5 & 1.7 & 10 & 17 & 8.0 \\
						17 & 8.0 & 0.05 & 10 & 0.1 & 14
					\end{bmatrix}\\
					&P = 
					\begin{bmatrix}
						0.131 & 0.170 & 0.557 & 0.012 & 0.828 & 0.587 \\
						0.233 & 0.414 & 0.831 & 0.374 & 0.100 & 0.999 \\
						0.235 & 0.145 & 0.352 & 0.288 & 0.305 & 0.665 \\
						0.405 & 0.883 & 0.873 & 0.574 & 0.109 & 0.038
					\end{bmatrix}\\
					&f^* = -3.32.
\end{aligned}
  \end{equation}



\begin{figure}
  \centering
  \includegraphics[width=0.9\textwidth]{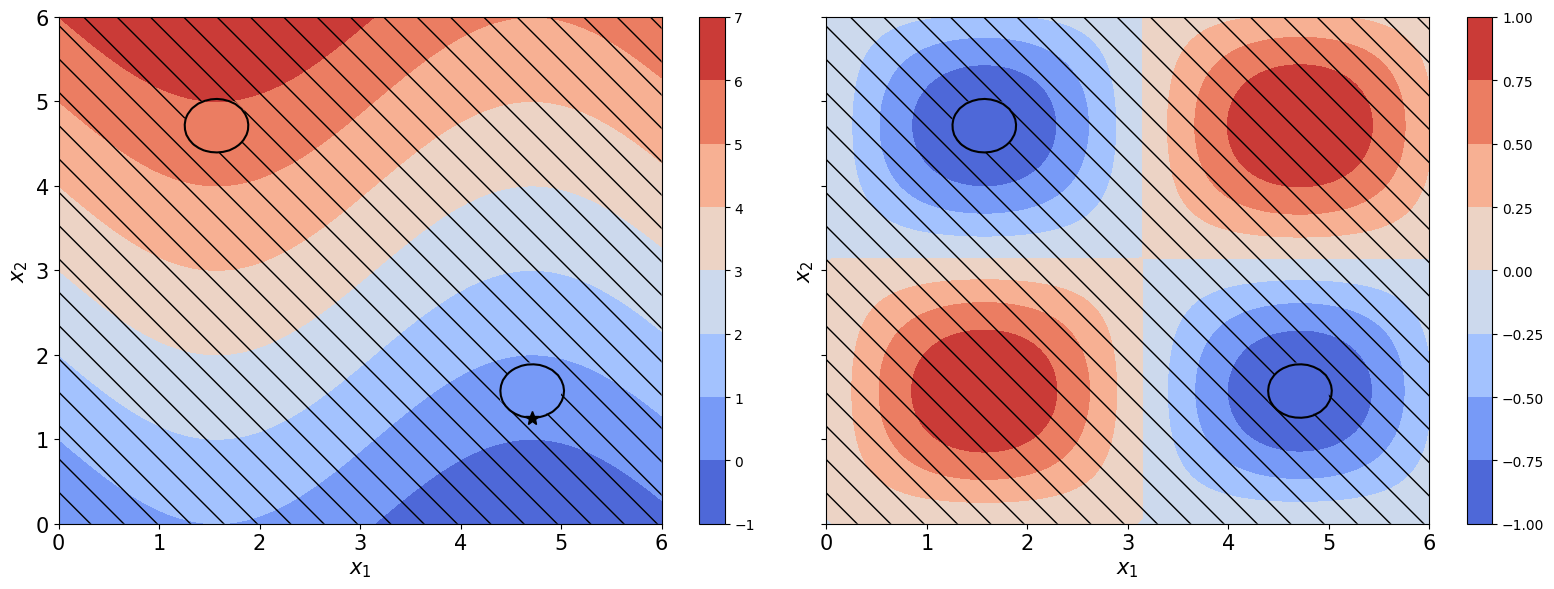}
  \caption{Contour plots for the objective function (left) and constraint function (right) for Problem 1. 
            The infeasible region is marked on the plots. The global optimum is marked with a star sign.}
\label{fig:ex1_contour}
\end{figure}
\begin{figure}
  \centering
  \includegraphics[width=0.9\textwidth]{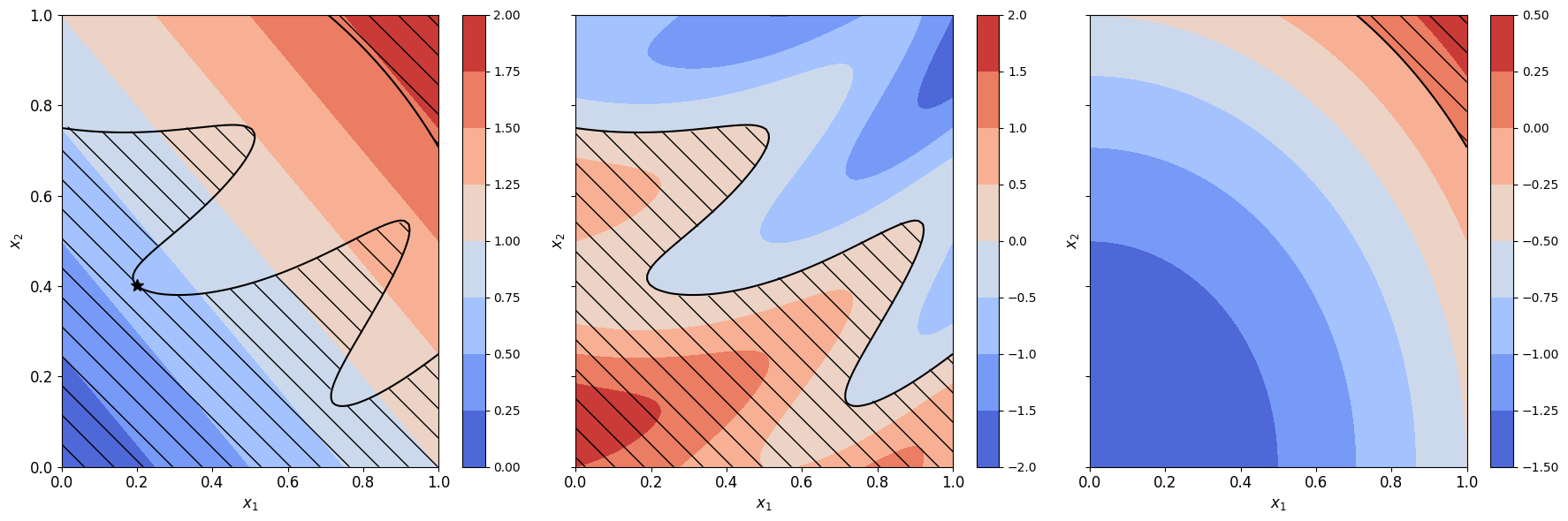}
  \caption{Contour plots for the objective function (left) and the two constraint functions (middle and right) for Problem 2. 
The infeasible region is marked with black line on the objective contour. The global optimum is marked with a star sign.}
\label{fig:ex2_contour}
\end{figure}
\begin{figure}
    \centering
    \includegraphics[width=0.9\textwidth]{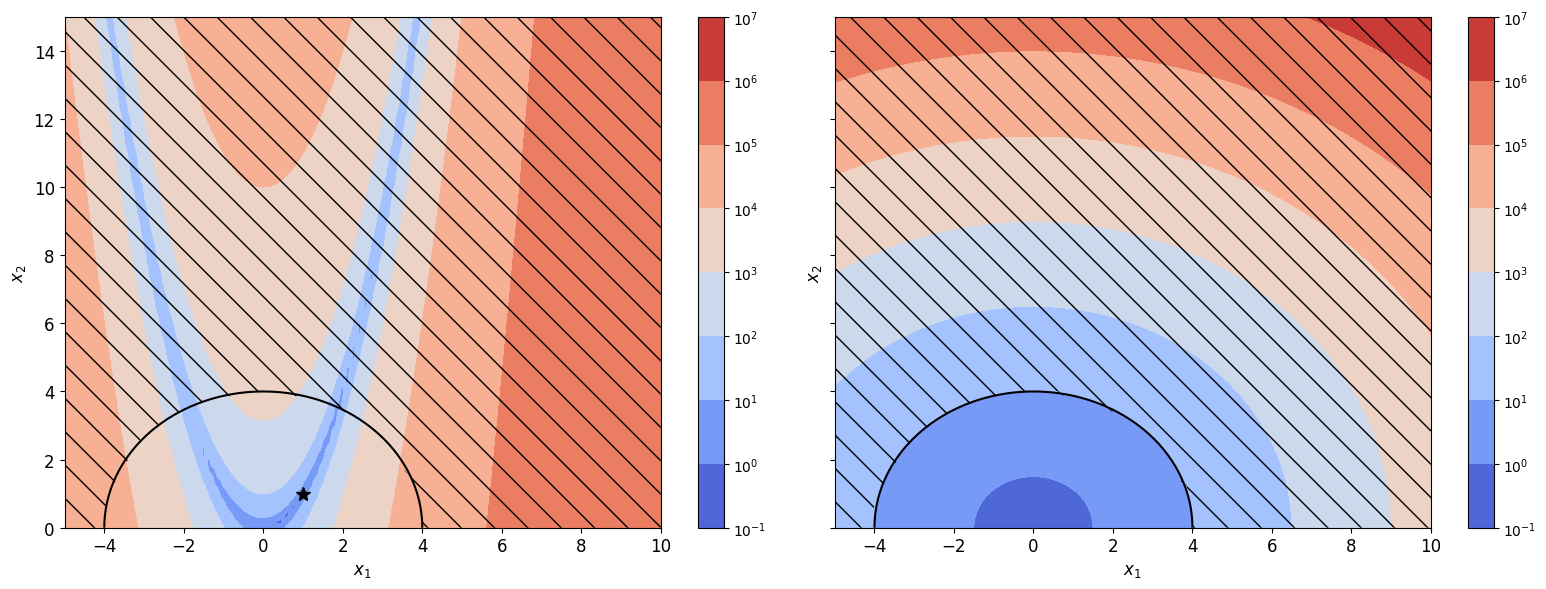}
  \caption{Contour plots for the objective function (left) and constraint function (right) for Problem 5. 
            The infeasible region is marked on the plots. The global optimum is marked with a star sign.}
  \label{fig:ex5_contour}
  \end{figure}

The objective, constraint functions and the optimal $f$ of Problem 5 is given below.
\begin{equation} \label{ex7}
  \centering
  \begin{aligned}
       &f(\mathbf{x})= 100 (x_2 - x_1^2)^2 + (1 - x_1)^2\\
					&c_{1}(\xbm)=  \sqrt{x_1^2+x_1^2}-4\\
					&c_{2}(\xbm)= x_1^2+x^2_2 -1.5,\\
					&x_1 \in [-5, 10], \ x_2 \in [0,15]\\
					&f^* = 0.
\end{aligned}
  \end{equation}

The contour plots of Problem 1, 2, and 5 are given in Figure~\ref{fig:ex1_contour},~\ref{fig:ex2_contour}, and~\ref{fig:ex5_contour}.


\end{document}